\documentclass{article} 
\usepackage{iclr2027_conference,times}

\usepackage{amsmath,amsfonts,bm}

\def\eqref#1{equation~\ref{#1}}

\def\1{\bm{1}}

\DeclareMathAlphabet{\mathsfit}{\encodingdefault}{\sfdefault}{m}{sl}
\SetMathAlphabet{\mathsfit}{bold}{\encodingdefault}{\sfdefault}{bx}{n}

\usepackage{hyperref}
\usepackage{url}
\usepackage{multirow}
\usepackage{graphicx}
\usepackage{adjustbox}
\usepackage{adjustbox}
\usepackage[utf8]{inputenc}
\usepackage{newunicodechar}
\usepackage{algorithm}
\usepackage{algorithmic}
\usepackage{booktabs}
\usepackage{amsthm}
\usepackage{caption}
\usepackage{etoc}

\newtheorem{theorem}{Theorem}
\newtheorem{lemma}{Lemma}

\title{Learning Perturbation-Specific Representations through Conditional Energy Prior for Cellular Response Prediction}

\author{Yuchen Yan, Yatao Bian\thanks{Corresponding author.}  \\
Department of Computer Science, National University of Singapore \\
\texttt{yuchen.yan@u.nus.edu}, \texttt{ybian@nus.edu.sg} \\
}

\iclrfinalcopy
\begin{document}

\maketitle

\begin{abstract}
AI-based cellular perturbation prediction is essential to accelerate drug discovery and therapeutic identification. However, perturbation-specific signals are generally obscured by cellular complexity, making it difficult for existing methods to correctly capture perturbation effects. To address this problem, we propose PerturbEnergy, which introduces a conditional energy-based prior over perturbation representations. It assigns lower energy (higher probability) to representations compatible with the given perturbation and higher energy to others, thereby explicitly tilting the latent space toward perturbation-preferred regions. Besides, PerturbEnergy can be integrated into autoencoder architectures as we find its Evidence Lower Bound (ELBO) is equivalent to the ELBO of the complete autoencoder. We further prove that optimizing PerturbEnergy increases the lower bound on the conditional mutual information between representation and perturbation. Extensive experiments on four benchmark datasets demonstrate the effectiveness of our PerturbEnergy: it improves perturbation recognition especially in latent space with improvement of up to 83.2\%; It also achieves state-of-the-art performance for predicting top-50/100/200 differentially expressed gene(DEG) response and remaining competitive on highly variable gene(HVG) prediction. The code is provided: https://github.com/BlueWhaleLab/PerturbEnergy
\end{abstract}
\section{Introduction}
\label{sec:introduction}
Cellular perturbation prediction aims to forecast cell responses to drug and genetic interventions, supporting therapeutic target identification, drug discovery, and personalized medicine. While modern high-throughput CRISPR technologies~\cite{jinek2012programmable,gilbert2014genome,dixit2016perturb,replogle2020combinatorial} have dramatically increased the scale of single-cell transcriptomic profiling, experimentally screening the vast space of perturbations across diverse cellular contexts remains costly and labor-intensive. To address this challenge, AI-based virtual cells (AIVCs)~\cite{bunne2024build,resasco2012virtual} have emerged as an efficient approach for predicting cellular responses using deep learning models.

Existing methods generally follow a representation learning paradigm~\cite{jiang2026makes,russkikh2020style} where the perturbation effect is captured within the latent representation space. Some approaches use foundation models~\cite{cui2024scgpt,hao2024large,wang2024modeling} or specialized encoders~\cite{lotfollahi2019scgen,russkikh2020style,jiang2026makes,roohani2024predicting} to encode cells into latent representations and subsequently predict perturbation responses through generative decoding. Other studies develop more powerful architectures to directly model perturbation-induced transitions between control and perturbed cell representations. For example, STATE~\cite{adduri2026predicting} employs a transition Transformer to model perturbation-induced shifts between cell representations; CellFlow~\cite{klein2025cellflow} uses optimal transport to learn mappings between control and perturbed cell distributions; Squidiff~\cite{he2026squidiff} and PerturbDiff~\cite{yuan2026perturbdiff} characterize such representation transitions through diffusion processes.

\begin{figure*}[h!]
\centering
\includegraphics[width=0.95\textwidth]{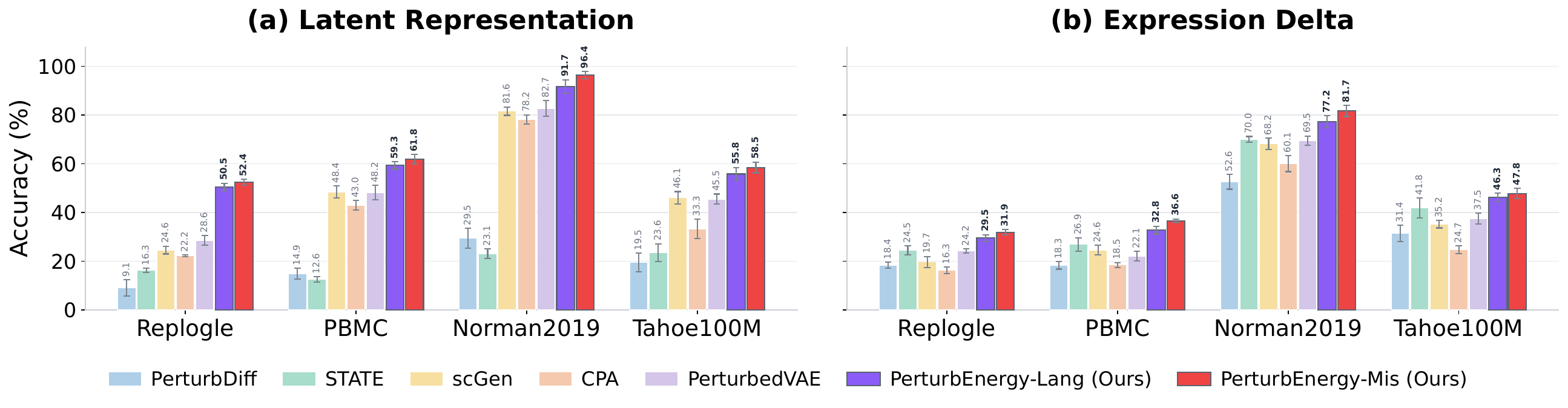}
\vspace{-1.2em}
\caption{Perturbation classification accuracy. Existing methods show limited perturbation recognition rate both in representation and in expression.}
\label{fig:perturbation_study}
\vspace{-1.6em}
\end{figure*}

Although these methods have demonstrated effectiveness across multiple datasets, they overlook an important issue that may substantially limit their performance. Accurate perturbation prediction depends on two key factors: (i) whether the perturbation effect is well described within the representation space, and (ii) whether the model is powerful enough to capture such perturbation effects. Existing methods mainly focus on the second factor where they develop sophisticated models and assume that such models can effectively extract perturbation effects from the latent space. However, they pay considerably less attention to the more fundamental question: \emph{how much perturbation-specific information is actually encoded in the learned representations for the model to exploit?} Addressing this issue is non-trivial due to the inherent complexity of cellular systems. Gene expression is influenced by a wide range of context-dependent factors, including cell type, donor background, and culture conditions. Moreover, biological heterogeneity~\cite{altschuler2010cellular} further complicates the problem: even cells under identical experimental conditions may differ in their physiological states or microenvironments, resulting in substantially different expression profiles. Collectively, all these variations can overwhelm perturbation-induced signals in the representation space, making it difficult for models to distinguish true perturbation effects from spurious and non-generalizable correlations. Although several approaches, such as stVAE~\cite{russkikh2020style} and PerturbedVAE~\cite{jiang2026makes}, claim to learn perturbation-specific representations, they remain largely heuristic: These methods typically rely on contrastive or adversarial objectives to implicitly regularize representation learning, providing only soft constraints without a theoretical guarantee that perturbation-specific signals can be well preserved. We conduct an empirical analysis examining such concern, as shown in Table~\ref{tab:perturbation_study} and Figure~\ref{fig:perturbation_study}. We find that existing methods, including those specifically designed to learn perturbation representations, exhibit limited perturbation distinctiveness both in the latent space and in their predicted expression profiles. These observations motivate the central question of this work: \emph{how can we directly enrich cellular representations with perturbation-specific information to improve cellular response prediction?}

In this paper, we propose a novel method named \textbf{PerturbEnergy}, which leverages an energy-based model to address this issue. The philosophy behind our method is that an energy function provides an effective mechanism for reshaping the distribution landscape~\cite{pang2021latent,zhou2022model}. Specifically, to enhance perturbation-specific information in the representation space, we explicitly formulate the latent prior in a normalized energy-based form. The energy model measures the compatibility between latent representations and perturbations, assigning lower energy (higher probability mass) to compatible pairs while assigning higher energy to others. Unlike existing methods that rely on implicit constraints, this design addresses the problem more directly by tilting the latent prior toward perturbation-preferred regions, thereby encouraging the learned representations to contain richer perturbation-specific information. Moreover, the proposed energy model can be easily integrated into autoencoder architectures. We show that the Evidence Lower Bound (ELBO) associated with the energy model can be optimized separately from the remaining architecture while remaining equivalent to the ELBO of the complete autoencoder. Our energy formulation also offers an additional computational advantage: the gradient of the normalized energy function can be derived analytically, while the samples required for optimization can be efficiently obtained through negative sampling or Langevin dynamics. Together, these properties allow the proposed energy model to be optimized independently of specific model architecture. Finally, we theoretically analyze the effectiveness of PerturbEnergy and prove that \emph{it can increase a lower bound on the mutual information between perturbations and latent representations.} To validate our method, we conduct comprehensive experiments on several benchmark datasets. The results show that PerturbEnergy substantially enhances perturbation-specific information, particularly in the latent representation space, with improvements of up to 83.2\%. It also consistently achieves state-of-the-art performance on top-50/100/200 differentially expressed gene (DEG) prediction and competitive performance on highly variable gene (HVG) prediction.

\section{Related Work}
\label{sec:related_work}
We summarize existing methods for cellular perturbation prediction from a representation learning perspective. Linear models~\cite{ahlmann2024deep} perform regression in the expression space or on representations from foundation models such as scGPT~\cite{cui2024scgpt} to predict responses. VAE-based methods model perturbation effects in latent space. For example, scGen~\cite{lotfollahi2019scgen} and Biolord~\cite{piran2024disentanglement} represent perturbation effects as additive latent factors; SAMS-VAE~\cite{bereket2023modelling} models perturbations using sparse latent representations. Other approaches design more advanced architectures to capture perturbation effects in the representation space. GEARS~\cite{roohani2024predicting} learns genetic perturbation representations using Graph Neural Networks (GNNs); STATE~\cite{adduri2026predicting} uses a Transformer to model perturbation-induced latent shifts; and diffusion or flow-based methods~\cite{he2026squidiff,yuan2026perturbdiff,klein2025cellflow,bunne2023learning} learn transitions between control and perturbed representations through diffusion or flow matching process. All these methods focus on learning perturbation effects from the representation space, but ignore whether the latent space contains sufficient perturbation-specific information. Although some studies claim to learn perturbation-awared representations, they remain intuitive: PerturbedVAE~\cite{jiang2026makes} employs a Contrastive object to disentangle perturbation representations from other factors; stVAE~\cite{russkikh2020style} and CPA~\cite{lotfollahi2023predicting} use adversarial training to promote such disentanglement. These auxiliary objectives provide only implicit constraints without a theoretical guarantee that perturbation-specific information is well preserved in the latent space.
\section{Method}
\subsection{Preliminary}

Before introducing our method, we first formalize the basic modeling framework and then explain why perturbation-specific information is important. Cellular perturbation prediction aims to predict gene expression induced by genetic or drug perturbations and we study it from a probabilistic perspective, assuming that each response is jointly determined by the cellular context and the applied perturbation. Formally, Let $x \in \mathbb{R}^{G}$ denotes the gene-expression profile of a perturbed cell, $a \in \mathcal{A}$ denotes its perturbation, and $c \in \mathcal{C}$ denotes its cellular context.\footnote{Here, $x$ denotes the perturbed expression profile and should rigorously be written as $x_{\mathrm{pert}}$. For simplicity, we use $x$ throughout the remainder of the paper unless otherwise specified.} Given observations $\mathcal{D}=\{(x_i,c_i,a_i)\}_{i=1}^{N}$, the goal is to learn a distribution $p(x|c,a)$ that predicts the cellular expression profile. To learn perturbation-specific effect, we decompose the latent representation into two components $z=(z_b,z_a)$, where $z_b$ captures the basal cellular state that is invariant to perturbation and $z_a$ captures perturbation-specific variation. The resulting joint distribution factorizes as:
\begin{equation}
p(x,z_b,z_a|c,a) = p(x|z_b,z_a) p(z_b,z_a|c,a),
\label{method_eq:join_dist}
\end{equation}
where $p(x|z_b,z_a)$ is the perturbation expression distribution conditioned on latent representations and can be parameterized by a generative decoder $p_{\theta}(x|z_b,z_a)$ with parameter $\theta$; $p(z_b,z_a|c,a)$ is the latent prior distribution, we adopt the variational inference like prior work~\cite{jiang2026makes} to approximate it as:
\begin{equation}
q_{\phi}(z_b,z_a|x_{ctr},c,a) = q_{\phi}(z_b|x_{ctr},c) q_{\phi}(z_a|x_{ctr},a,c),
\label{method_eq:variational_inference}
\end{equation}
where $x_{ctr}$ is the controlled (unperturbed) cell expression under the same context $c$. To model such a joint distribution, existing approaches~\cite{jiang2026makes,yuan2026perturbdiff,adduri2026predicting} either use VAE to learn perturbation-aware representations via implicit constraints or employ powerful decoders $p_{\theta}(x|z_b,z_a)$ to learn perturbation effects from the latent space. However, these methods do not directly address the concern: \textbf{does the latent space contain sufficient perturbation-specific information for the model to exploit?} Actually, the amount of perturbation-specific information retained in the latent space directly affects how easily the model can learn perturbation effects. More informative representations make generalizable perturbation patterns easier to identify and can therefore improve cellular response prediction. Our preliminary analysis supports this. By comparing CPA and PerturbedVAE in terms of perturbation classification accuracy and root mean square error (RMSE) on differentially expressed genes (DEGs), as shown in Table~\ref{tab:perturbation_study} and Table~\ref{tab:deg-rmse}, we observe that richer perturbation-specific information is associated with better perturbation prediction performance (CPA vs PerturbedVAE; Acc: 22.2\% vs.\ 28.6\%; Top50 RMSE: 0.3462 vs.\ 0.2130). However, we also find existing methods still capture limited perturbation-specific information within their latent space (less than 50\% Acc on Replogle, PBMC and Tahoe100M, as shown in Figure~\ref{fig:perturbation_study}). These observations motivate the central question of this work: \emph{how can we explicitly enrich latent representations with perturbation-specific information to improve cellular response prediction?}

\begin{figure*}[t]
\centering
\includegraphics[width=\textwidth]{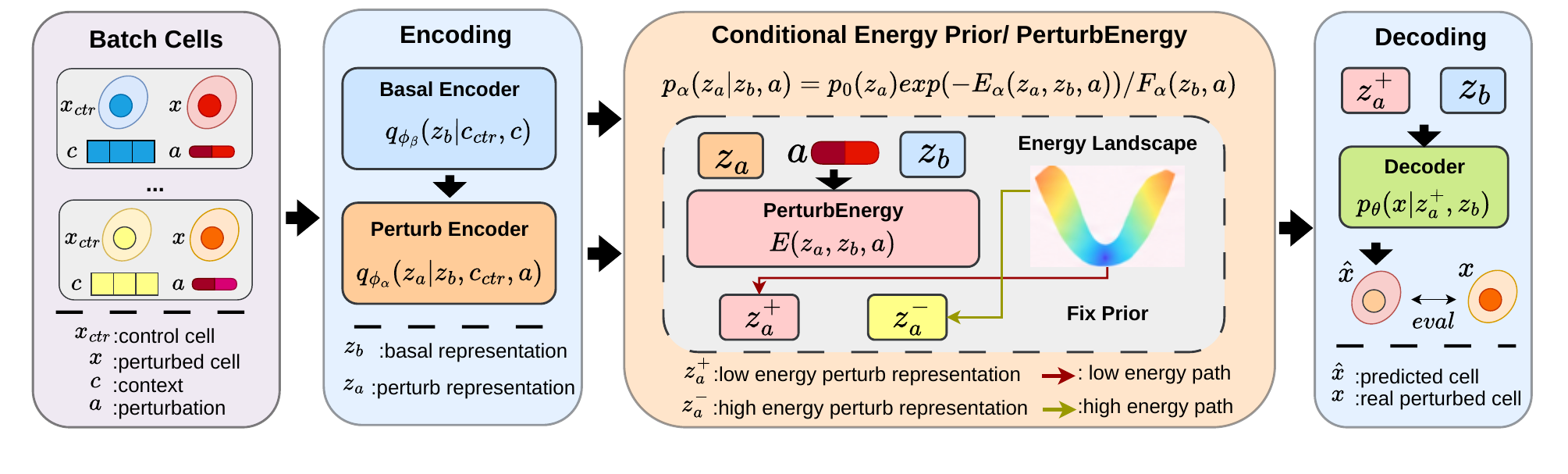}
\vspace{-1.6em}
\caption{Our PerturbEnergy framework. It reshapes the latent prior to perturbation preferred regions by assigning low energy to perturbation-compatible representations. Thus we can sample the representation with more perturbation-specific information for the decoder.}
\label{fig:framework}
\vspace{-1.2em}
\end{figure*}

\subsection{Conditional Energy-Based Perturbation Prior and Energy-ELBO}
To address the above concern, we propose \textbf{PerturbEnergy}, which introduces a conditional energy-based prior over the perturbation latent space. The framework is shown in Figure~\ref{fig:framework}. Our design is motivated by the observation that an energy function can serve as a flexible mechanism for revising a target distribution~\cite{pang2021latent,zhou2022model}. Specifically, we factorize the joint latent prior in Eq.~\eqref{method_eq:join_dist} as:
\begin{equation}
p(z_b,z_a|c,a) = p_{\beta}(z_b|c) p_{\alpha}(z_a|z_b,a),
\label{method_eq:latent_variable_model}
\end{equation}
where $p_{\beta}(z_b|c)$ denotes the marginal prior over the basal representation, and $p_{\alpha}(z_a|z_b,a)$ denotes the conditional prior over the perturbation representation. To enrich the latent space with perturbation-specific information, we formulate $p_{\alpha}(z_a|z_b,a)$ as a conditional energy-based distribution:
\begin{equation}
p_{\alpha}(z_a|z_b,a) = p_0(z_a)\exp\left[-E_{\alpha}(z_a,z_b,a)\right]/F_{\alpha}(z_b,a),
\label{method_eq:ebm_prior}
\end{equation}
where $p_0(z_a)$ denotes the prior without perturbation constraints (i.e, the prior used in existing works for perturbation space). $E_{\alpha}(z_a,z_b,a)$ is the energy-based model that revises the prior according to the perturbation: for a given basal state $z_b$ and perturbation $a$, it assigns lower energy (higher probability mass) to the perturbation representation $z_a$ that are more compatible with this condition and assigns higher energy to others. Unlike existing approaches~\cite{yuan2026perturbdiff,russkikh2020style} that encourage perturbation-specific information through implicit constraints, our method directly modifies the \emph{latent prior} of the perturbation space, explicitly tilting it toward the perturbation-preferred region. Here, $E_{\alpha}(z_a,z_b,a)$ can be implemented as a scalar-valued neural network parameterized by $\alpha$. $F_{\alpha}(z_b,a)$ is the corresponding partition function ensuring the normalization of the distribution, which is defined as:
\begin{equation}
F_{\alpha}(z_b,a)=\int p_0(z_a)\exp\left(-E_{\alpha}(z_a,z_b,a)\right)\mathrm{d}z_a.
\label{method_eq:partition_function}
\end{equation}
Based on this formulation, we then consider how to optimize such a model. Our target is to fit the real perturbed cell expression distribution, formally to maximize the $log \, p_{\Theta}(x|c,a)$. However, given $log \, p_{\Theta}(x|c,a) = log \int p_{\theta}(x|z_b,z_a)p(z_b,z_a|c,a) dz_bdz_a$, it has two challenges: (i) the integral over latent space is intractable, and (ii) the joint latent prior $p(z_b,z_a|c,a)$ is difficult to model. Therefore we adopt the variational inference in Eq.~\eqref{method_eq:variational_inference} to estimate the energy-formed prior as:
\begin{equation}
q_{\phi}(z_b,z_a|x_{ctr},c,a) = q_{\phi_\beta}(z_b|x_{ctr},c)q_{\phi_\alpha}(z_a|z_b,x_{ctr},a),
\label{method_eq:vae}
\end{equation}
where $q_{\phi_{\beta}}(z_b|x_{ctr},c)$ is the basal state encoder estimating the marginal prior $p_{\beta}(z_b|c)$ and $q_{\phi_{\alpha}}(z_a|z_b,x_{ctr},a)$ is the perturbation encoder estimating the conditional energy prior $p_{\alpha}(z_a|z_b,a)$. This variational formulation leads to the Evidence Lower Bound (ELBO) of $log\, p_{\Theta}(x|c,a)$ as:
\begin{equation}
\mathcal{L}_{\mathrm{ELBO}} = \mathbb{E}_{q_{\phi}(z_b,z_a|x_{ctr},c,a)}\left[log \frac{p_{\Theta}(x,z_b,z_a|c,a)}{q_{\phi}(z_b,z_a|x_{ctr},c,a)} \right]
\label{method_eq:whole_elbo}
\end{equation}
The optimization of the ELBO includes the complete model parameters $\{\Theta,\phi\}=\{\theta, \alpha, \phi_\beta, \phi_\alpha\}$, where $\theta$ is the parameter of generative decoder, $\phi_\beta$ is the parameter of basal state encoder, $\phi_\beta$ is the parameter of perturbation encoder, and $\alpha$ is our energy-based model parameter. We focus on the optimization corresponding to energy-based model, then we will prove that this optimization is equivalent to the optimization of entire ELBO. The Kullback-Leibler (KL) divergence between $q_{\phi_{\alpha}}(z_a |z_b,x_{ctr},a)$ and $p_{\alpha}(z_a |z_b,a)$ is derived as:
\begin{equation}
\begin{aligned}
D_{\mathrm{KL}}\left(q_{\phi_{\alpha}}||p_{\alpha}\right)
& = \mathbb{E}_{q_{\phi_{\alpha}}}\left[log \, q_{\phi_{\alpha}}(z_a|z_b,x_{ctr},a) - log \, p_{0}(z_a) + E_{\alpha}(z_a,z_b,a) + \log F_{\alpha}(z_b,a)\right] \\
& = D_{\mathrm{KL}}\left(q_{\phi_{\alpha}}(z_a |z_b,x_{ctr},a)||p_0(z_a)\right) + \mathbb{E}_{q_{\phi_{\alpha}}}\left[E_{\alpha}(z_a,z_b,a)\right] + \log F_{\alpha}(z_b,a).
\end{aligned}
\label{method_eq:energy_kl}
\end{equation}
Eq.~\eqref{method_eq:energy_kl} provides an intuitive understanding of PerturbEnergy: the KL divergence is not only bounded by regularization to the conventional prior (i.e., $D_{\mathrm{KL}}\left(q_{\phi_{\alpha}}(z_a |z_b,x_{ctr},a)||p_0(z_a)\right)$), it also considers the compatibility between the inferred perturbation representation and the condition $(z_b,a)$. Since $\log F_{\alpha}(z_b,a)$ is independent of $z_a$, optimizing Eq.~\eqref{method_eq:energy_kl} encourages $q_{\phi_{\alpha}}(z_a |z_b,x_{ctr},a)$ to prefer $z_a$ with more perturbation-specific information. In this paper, we term the \textbf{negative KL} $-D_{\mathrm{KL}}\left(q_{\phi_{\alpha}}||p_{\alpha}\right)$ in Eq.~\eqref{method_eq:energy_kl} as \textbf{Energy-ELBO} and its relationship with the full objective $\mathcal{L}_{\mathrm{ELBO}}$ is formalized in Theorem~\ref{method_theory:theorem_1}. The detailed proof is provided in Appendix~\ref{app:theory_elbo_equivalence}.
\begin{theorem}
Energy-ELBO $-D_{\mathrm{KL}}\left(q_{\phi_{\alpha}}||p_{\alpha}\right)$ is a part of the $log\, p_{\Theta}(x|c,a)$ ELBO, specifically we have:
\begin{equation}
\mathcal{L}_{\mathrm{ELBO}} = \mathbb{E}_{q_{\phi}}\left[log \, p_{\theta}(x|z_b,z_a)\right] - D_{\mathrm{KL}}(q_{\phi_\beta}||p_{\beta}) + \mathbb{E}_{q_{\phi_\beta}}\left[-D_{\mathrm{KL}}\left(q_{\phi_{\alpha}}||p_{\alpha}\right)\right].
\label{method_eq:entire_elbo}
\end{equation}
\label{method_theory:theorem_1}
\end{theorem}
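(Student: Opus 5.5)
The plan is to expand the log-ratio in $\mathcal{L}_{\mathrm{ELBO}}$ of Eq.~\eqref{method_eq:whole_elbo} using the generative factorization and the variational factorization, and then regroup the terms into the three pieces of Eq.~\eqref{method_eq:entire_elbo}. First, I combine Eq.~\eqref{method_eq:join_dist} with the prior factorization Eq.~\eqref{method_eq:latent_variable_model} to write
\begin{equation*}
\log p_{\Theta}(x,z_b,z_a|c,a)=\log p_{\theta}(x|z_b,z_a)+\log p_{\beta}(z_b|c)+\log p_{\alpha}(z_a|z_b,a).
\end{equation*}
By Eq.~\eqref{method_eq:vae}, the variational log-density splits as $\log q_{\phi_\beta}(z_b|x_{ctr},c)+\log q_{\phi_\alpha}(z_a|z_b,x_{ctr},a)$. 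Substituting both into the integrand gives a sum of three terms: the reconstruction log-likelihood, the basal log-ratio $\log p_\beta/q_{\phi_\beta}$, and the perturbation log-ratio $\log p_\alpha/q_{\phi_\alpha}$. Linearity of expectation under $q_\phi$ then splits $\mathcal{L}_{\mathrm{ELBO}}$ into three expectations.

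The first expectation is already $\mathbb{E}_{q_\phi}[\log p_\theta(x|z_b,z_a)]$. For the second, the integrand depends only on $z_b$, so I integrate out $z_a$. This uses the fact that $q_{\phi_\alpha}(\cdot|z_b,x_{ctr},a)$ is a normalized density for each $z_b$. What remains is $\mathbb{E}_{q_{\phi_\beta}}[\log p_\beta - \log q_{\phi_\beta}] = -D_{\mathrm{KL}}(q_{\phi_\beta}\|p_\beta)$.

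For the third, I use the tower property, writing $\mathbb{E}_{q_\phi}[\cdot]=\mathbb{E}_{q_{\phi_\beta}}\big[\mathbb{E}_{q_{\phi_\alpha}}[\cdot\,|z_b]\big]$. The inner conditional expectation of $\log p_\alpha(z_a|z_b,a)-\log q_{\phi_\alpha}(z_a|z_b,x_{ctr},a)$ is exactly $-D_{\mathrm{KL}}(q_{\phi_\alpha}\|p_\alpha)$ at fixed $z_b$, which is the Energy-ELBO. If one wants it in energy form, it equals the negative of the expression in Eq.~\eqref{method_eq:energy_kl}. Averaging over $z_b\sim q_{\phi_\beta}$ gives the last term of Eq.~\eqref{method_eq:entire_elbo}.

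The computation is routine; the main care point is the conditioning structure. The Energy-ELBO is a function of $z_b$ through both $q_{\phi_\alpha}$ and $F_\alpha(z_b,a)$, so it must sit inside the outer expectation over $q_{\phi_\beta}$ rather than be treated as a constant. I also need $p_\alpha$ to be a properly normalized density, so that the KL is well defined. This holds because $F_\alpha(z_b,a)<\infty$ in Eq.~\eqref{method_eq:partition_function}, which I would state as a standing assumption. Finally, all expectations should be finite so that linearity and Fubini are justified.
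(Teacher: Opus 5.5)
Your proposal is correct and follows essentially the same route as the paper. The paper substitutes the factorized prior and variational posterior and invokes the chain rule for KL divergence to reach the same three-term split. Your linearity-plus-tower-property argument derives that chain rule directly, and your remarks on finiteness of $F_\alpha(z_b,a)$ and integrability make explicit assumptions the paper leaves implicit.
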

\vspace{-10pt}
Therefore, given specific $(z_b,a)$ the optimization of Energy-ELBO is exactly equivalent to optimizing $\mathcal{L}_{\mathrm{ELBO}}$ at the complete model.
\subsection{Optimization for PerturbEnergy}
\label{method_subsec:energy_optimization}
Although we have derived an explicit form of the Energy-ELBO, optimizing PerturbEnergy still face a problem that the partition function $F_{\alpha}(z_b,a)$ in Eq.~\eqref{method_eq:partition_function} requires integration over the entire perturbation latent space and is generally intractable. Fortunately, its gradient admits a tractable form, which is described in Lemma~\ref{method_theory:lemma_1}. The proof is provided in Appendix~\ref{app:theory_partition_function_gradient}.
\begin{lemma}
Given the partition function is defined as Eq.~\eqref{method_eq:partition_function}, the gradient with respect to the energy parameters $\alpha$ is:
\begin{equation}
\nabla_{\alpha} \log F_{\alpha}(z_b,a) = - \mathbb{E}_{p_{\alpha}(z_a |z_b,a)}
\left[
\nabla_{\alpha}
E_{\alpha}(z_a,z_b,a)
\right].
\end{equation}
\label{method_theory:lemma_1}
\end{lemma}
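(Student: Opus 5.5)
The plan is the standard log-partition-function identity. We differentiate $\log F_{\alpha}(z_b,a)$ through the integral in Eq.~\eqref{method_eq:partition_function} and then recognize the resulting normalized integrand as the energy prior $p_{\alpha}(z_a|z_b,a)$ of Eq.~\eqref{method_eq:ebm_prior}.

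The first step is the chain rule:
\begin{equation*}
\nabla_{\alpha}\log F_{\alpha}(z_b,a)=\frac{\nabla_{\alpha}F_{\alpha}(z_b,a)}{F_{\alpha}(z_b,a)},
\end{equation*}
which requires $0<F_{\alpha}(z_b,a)<\infty$. This is implicit in the assumption that Eq.~\eqref{method_eq:ebm_prior} defines a normalized density. It holds, for instance, if $E_{\alpha}$ is bounded below in $z_a$, since then $F_{\alpha}$ is finite because $p_0$ is a probability density.

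Next I interchange gradient and integral. Since $p_0(z_a)$ does not depend on $\alpha$,
\begin{equation*}
\nabla_{\alpha}F_{\alpha}(z_b,a)=\int p_0(z_a)\exp\left(-E_{\alpha}(z_a,z_b,a)\right)\left(-\nabla_{\alpha}E_{\alpha}(z_a,z_b,a)\right)\mathrm{d}z_a .
\end{equation*}
Dividing by $F_{\alpha}(z_b,a)$ and using Eq.~\eqref{method_eq:ebm_prior}, the weight $p_0(z_a)\exp(-E_{\alpha})/F_{\alpha}$ is exactly $p_{\alpha}(z_a|z_b,a)$. This gives $-\mathbb{E}_{p_{\alpha}(z_a|z_b,a)}[\nabla_{\alpha}E_{\alpha}]$, as claimed.

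The only nontrivial point is justifying the interchange of $\nabla_{\alpha}$ and $\int\,\mathrm{d}z_a$. I would invoke the Leibniz rule via dominated convergence under a mild regularity assumption. The assumption is that $E_{\alpha}$ is continuously differentiable in $\alpha$ and that, on a neighborhood of the current $\alpha$, $p_0(z_a)\exp(-E_{\alpha})\,\|\nabla_{\alpha}E_{\alpha}\|$ is dominated by a $z_a$-integrable function independent of $\alpha$. For a neural network energy with, e.g., bounded outputs and bounded parameter gradients on compact parameter sets, the dominating function can be taken as a constant times $p_0(z_a)$, which is integrable. I would state this as a standing assumption rather than prove it in generality, since the paper treats $E_{\alpha}$ as a generic scalar network.
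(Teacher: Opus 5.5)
Your proposal is correct and follows the same route as the paper's proof: apply the chain rule to $\log F_{\alpha}$, differentiate under the integral sign, and recognize the normalized integrand $p_0(z_a)\exp(-E_{\alpha})/F_{\alpha}$ as $p_{\alpha}(z_a|z_b,a)$. The paper interchanges $\nabla_{\alpha}$ and the integral without comment, so your finiteness and dominated-convergence assumptions make the same argument rigorous rather than changing it.
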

\vspace{-1.8em}
Therefore, the gradient of $\mathcal{L}_{\mathrm{ELBO}}$ with respect to the energy parameters $\alpha$ is as follows, the derivation is given in Appendix~\ref{app:derivation_of_energy_gradient}.
\begin{equation}
\begin{aligned}
\nabla_{\alpha} \mathcal{L}_{\mathrm{ELBO}} = \mathbb{E}_{q_{\phi_\beta}} \{-\mathbb{E}_{q_{\phi_{\alpha}}}\left[\nabla_{\alpha} E_{\alpha}(z_a,z_b,a)\right] + \mathbb{E}_{p_{\alpha}}\left[\nabla_{\alpha}E_{\alpha}(z_a,z_b,a)\right]\}.
\end{aligned}
\label{method_eq:elbo_gradient}
\end{equation}
Since the perturbation representation is sampled based on the fixed condition $(z_b,a)$, and our objective is to \emph{maximize} the EBLO, we consider the corresponding \emph{negative gradient}:
\begin{equation}
\begin{aligned}
\nabla_{\alpha} \mathcal{J}_{\mathrm{E}}(\alpha) = \underbrace{\mathbb{E}_{q_{\phi_{\alpha}}}\left[\nabla_{\alpha} E_{\alpha}(z_a,z_b,a)\right]}_{\text{positive phase}} - \underbrace{\mathbb{E}_{p_{\alpha}}\left[\nabla_{\alpha}E_{\alpha}(z_a,z_b,a)\right]}_{\text{negative phase}}.
\end{aligned}
\label{method_eq:j_gradient}
\end{equation}
This gradient is exactly the same as maximizing the Energy-ELBO in Eq.~\eqref{method_eq:energy_kl}, which again shows the equivalence of optimizing the Energy-ELBO and the complete ELBO. The \emph{positive phase} for this gradient is easy to calculate since positive samples can be obtained from the real cell data via $z_a^{+} \sim q_{\phi_{\alpha}}(z_a |z_b,x_{ctr},a)$. In contrast, the \emph{negative phase} requires sampling $z_a$ from $p_{\alpha}(z_a|z_b,a)$, which is intractable as it involves the partition function. We provide two practical solutions. The first is based on physical function of PerturbEnergy, which assigns lower energy to compatible representations and higher energy to incompatible ones. Accordingly, we can sample the \emph{mismatched} perturbation representations for the negative phase, formally via $z_a^{-} \sim q_{\phi_{\alpha}}(z_a |z_b,x_{ctr},a^{-})$ where $a^{-}$ denotes a perturbation different from the true $a$. The second solution exploits an additional property of our energy-based prior: \emph{its gradient with respect to $z_a$ can be calculated directly}. This enables us to use Langevin dynamics to approximately sample from $p_{\alpha}(z_a|z_b,a)$ for the negative phase, which is described as follows and the algorithm can be found in Appendix~\ref{app_alg:langevin}.

\noindent \textbf{Langevin Sampling}: Since the partition function $F_{\alpha}(z_b,a)$ is independent of $z_a$, the gradient of $log \, p_{\alpha}(z_a|z_b,a)$ with respect to $z_a$ can be written as:
\begin{equation}
    \nabla_{z_a} \log \, p_{\alpha}(z_a |z_b,a) = \nabla_{z_a}\log \, p_0(z_a) - \nabla_{z_a}E_{\alpha}(z_a,z_b,a),
\label{method_eq:lagevin_sample_gradient}
\end{equation}
We adopt the Euler-Maruyama discretized Langevin, which updates the sample as: $z_a^{(k+1)} = z_a^{(k)} + \frac{\eta}{2} \nabla_{z_a} \log \, p_{\alpha}(z_a^{(k)} |z_b,a) + \sqrt{\eta}\epsilon^{(k)}$, where $z_a^{(k)}$ denotes the sample at the $k$-th step, $\eta>0$ is the Langevin step size, and $\epsilon^{(k)} \sim \mathcal{N}(0,I)$ is the random noise term corresponding to Brownian motion. Assuming a Gaussian prior $p_0(z_a)$ and combining it with Eq.~\eqref{method_eq:lagevin_sample_gradient}, we obtain:
\begin{equation}
z_a^{(k+1)} = z_a^{(k)} - \frac{\eta}{2}\left[\nabla_{z_a}E_{\alpha}(z_a^{(k)},z_b,a) + z_a^{(k)}\right] + \sqrt{\eta}\epsilon^{(k)}.
\label{method_eq:lagevin_update}
\end{equation}
After $K$ steps, we use $z_a^{(K)}$ as the negative sample. Intuitively, the positive phase lowers the energy of latent representations inferred from real perturbation and cell observations, whereas the negative phase raises the energy of representations favored by the current model but not supported by the data. In other word, the negative phase corrects representations to which the model has incorrectly assigned low energy. Using either of the two solutions, we can obtain negative samples for optimization. Together with the positive samples, the gradient for optimizing PerturbEnergy can then be calculated via Eq.~\eqref{method_eq:j_gradient}.

\noindent \textbf{Optimization:} We implement the optimization of the entire model, including the varitional encoder $q_{\phi_\alpha}$, $q_{\phi_\beta}$, generator $p_{\theta}$, and the PerturbEnergy $E_{\alpha}$. The target is to \emph{maximize} the ELBO in Eq.~\eqref{method_eq:entire_elbo}, which can be derived as:
\begin{equation}
\mathcal{L}_{\mathrm{ELBO}} = \mathbb{E}_{q_{\phi}}\left[log \, p_{\theta}\right] - D_{\mathrm{KL}}(q_{\phi_\beta}||p_{\beta}) - \mathbb{E}_{q_{\phi_\beta}}\left[D_{KL}(q_{\phi_\alpha}||p_{0})\right] - \mathbb{E}_{q_{\phi_\beta}}\left[\mathbb{E}_{q_{\phi_\alpha}} E_{\alpha} + log F_{\alpha} \right].
\label{method_eq:elbo_derivation}
\end{equation}
We use Maximum Mean Discrepancy (MMD) as the surrogate to reconstruction term $\mathbb{E}_{q_{\phi}}\left[log \,p_{\theta}\right]$. For the two KL-divergence terms, we assume the Gaussian prior for $p_{0}$ and $p_{\beta}$. The gradient of $E_{\alpha}$ with respect to $\phi_{\alpha}$ and $\phi_{\beta}$ is easy to calculate since the gradient $\nabla_{z_a/z_b} E_{\alpha}$ is direct and such gradient can be backpropagated to $\phi_{\alpha}/\phi_{\beta}$ via Jacobian $\frac{\partial z_a}{\partial \phi_{\alpha}}/\frac{\partial z_b}{\partial \phi_{\beta}}$. The last one is the gradient of $log F_{\alpha}$ with respect to $\phi_{\beta}$. According to Lemma~\ref{method_theory:lemma_1}, its gradient with respect to $z_b$ takes a similar form: $- \mathbb{E}_{p_{\alpha}}\left[\nabla_{z_b}E_{\alpha}\right]$, which can be further backpropagated to $\phi_{\beta}$ via Jacobian $\frac{\partial z_b}{\partial \phi_{\beta}}$. Together with $\nabla_{\alpha} \mathcal{J}_{\mathrm{E}}(\alpha)$ in Eq.~\eqref{method_eq:j_gradient}, the whole model can be optimized as: Given batch data $(x_{ctr},x)$ under the same context $c$ with perturbation $a$, first calculate the basal representations $z_{b}$ via $q_{\phi_\beta}$, then generate $N$ positive batch samples $\{z_{a,j}^{+}\}_{j=1}^{N}$ via $q_{\phi_\alpha}$ and $N$ negative batch samples $\{z_{a,j}^{-}\}_{j=1}^{N}$ via Langevin or Mismatching. Thus $\alpha$ is updated by $\nabla_{\alpha} \mathcal{J}_{\mathrm{E}}(\alpha)$ whereas $\{\theta,\phi_\beta, \phi_\alpha\}$ are updated by $\{\nabla_{\theta}\mathcal{L}_{\mathrm{ELBO}}, \nabla_{\phi_\beta}\mathcal{L}_{\mathrm{ELBO}},\nabla_{\phi_\alpha}\mathcal{L}_{\mathrm{ELBO}}\}$. The Algorithm is provided in Appendix~\ref{app_alg:optimization}

\noindent \textbf{Inference:} We adopt the importance sampling in the inference stage. For unperturbed cell $(x_{ctr}, c)$ and perturbation $a$: (i) calculate the basal state $z_b$ via $q_{\phi_\beta}$; (ii) sample multiple perturbation representations $\{z_{a,i}\}_{i=0}^{M}$ via $q_{\phi_\alpha}$; (iii) select the $z_a$ with the lowest energy $E_{\alpha}$; (vi) generate cellular response $x$ via $p_{\theta}$ based on the $z_b$ and $z_a$. The algorithm is provided in Appendix~\ref{app_alg:inference}.

\subsection{Discussion}
We analyze the effectiveness of PerturbEnergy in enhancing perturbation-specific information. Our theoretical finding is formalized as Theorem~\ref{method_theory:theorem_2}, whose proof is provided in Appendix~\ref{app:theory_conditional_mutual_information_lower_bound}.
\begin{theorem}
Optimizing PerturbEnergy leading to the increase of the variational mutual information lower bound between the perturbation and the latent representation. Specifically, the relation is:
\begin{equation}
\begin{aligned}
I(A;Z_a|Z_b,C) & \geq H(A|Z_b,C) + \mathbb{E}\left[\log r_{\alpha}(A |Z_a,Z_b,C)\right],
\end{aligned}
\label{method_eq:mutual_info_bound}
\end{equation}
\begin{equation}
r_{\alpha}(a|z_a,z_b,c) = \frac{\pi(a|z_b,c) \exp (-E_{\alpha}(z_a,z_b,a))}{\sum_{a' \in \mathcal{A}} \pi(a'|z_b,c) \exp (-E_{\alpha}(z_a,z_b,a'))}
\end{equation}
where $\pi(a|z_b,c)$ is the true conditional prior of perturbation, $I(.;.|Z_b,C)$ denotes the conditional mutual information and $H(.|Z_b, C)$ is the conditional entropy.
\label{method_theory:theorem_2}
\end{theorem}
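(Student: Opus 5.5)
This is the standard Barber--Agakov variational lower bound, with the energy-tilted posterior over perturbations used as the variational decoder. We start from the definition of conditional mutual information, $I(A;Z_a|Z_b,C) = H(A|Z_b,C) - H(A|Z_a,Z_b,C)$. Here the second entropy is taken under the true joint distribution: $(Z_b,Z_a)$ is produced by the encoders from $(x_{ctr},c,a)$, and $A$ is drawn from the data. Writing $p(a|z_a,z_b,c)$ for the true posterior over perturbations, we have $-H(A|Z_a,Z_b,C)=\mathbb{E}[\log p(A|Z_a,Z_b,C)]$.

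The key step is to add and subtract $\log r_{\alpha}(A|Z_a,Z_b,C)$ inside this expectation:
\begin{equation}
\mathbb{E}\left[\log p(A|Z_a,Z_b,C)\right] = \mathbb{E}\left[\log r_{\alpha}(A|Z_a,Z_b,C)\right] + \mathbb{E}_{Z_a,Z_b,C}\left[D_{\mathrm{KL}}\left(p(\cdot|Z_a,Z_b,C)\,\|\,r_{\alpha}(\cdot|Z_a,Z_b,C)\right)\right].
\end{equation}
This requires $r_{\alpha}(\cdot|z_a,z_b,c)$ to be a valid distribution on $\mathcal{A}$ for every $(z_a,z_b,c)$. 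That holds by construction: it is a normalized softmax over $a'\in\mathcal{A}$ with positive weights $\pi(a'|z_b,c)\exp(-E_\alpha(z_a,z_b,a'))$, assuming $\mathcal{A}$ is finite or countable and the denominator is finite. Since the KL term is nonnegative, we drop it. Combining with the entropy decomposition gives the bound in Eq.~\eqref{method_eq:mutual_info_bound}, with equality exactly when $r_\alpha$ matches the true posterior.

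To justify the phrase ``optimizing PerturbEnergy increases the bound,'' we note that $H(A|Z_b,C)$ does not depend on $\alpha$. Hence only the term $\mathbb{E}[\log r_\alpha]$ moves. We then expand
\[
\log r_\alpha = -E_\alpha(z_a,z_b,a) + \log\pi(a|z_b,c) - \log\sum_{a'}\pi(a'|z_b,c)e^{-E_\alpha(z_a,z_b,a')}.
\]
This is a contrastive objective: it lowers the energy at the matched pair $(z_a,a)$, with $z_a$ drawn from $q_{\phi_\alpha}$, which is the positive phase of Eq.~\eqref{method_eq:j_gradient}. It raises the energy of mismatched perturbations $a'$, which is exactly the mismatched negative sampling $z_a^-\sim q_{\phi_\alpha}(\cdot|z_b,x_{ctr},a^-)$ described above. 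Taking the gradient of the log-sum-exp term gives an expectation of $\nabla_\alpha E_\alpha$ under $r_\alpha$, which mirrors the negative phase. So descending $\mathcal{J}_{\mathrm{E}}$ moves along this same contrastive direction and increases $\mathbb{E}[\log r_\alpha]$, and therefore increases the lower bound.

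The main obstacle is making this last step precise. The negative phase in Eq.~\eqref{method_eq:j_gradient} samples over $z_a$ from $p_\alpha$, whereas the gradient of $\log r_\alpha$ samples over $a'$ from $r_\alpha$, so the two gradients are not literally identical. I would handle this by stating the increase claim for the mismatch-sampling variant, where the correspondence is direct. For the Langevin variant, I would argue only qualitatively that both procedures lower the energy of matched pairs relative to competing ones. The inequality itself needs nothing beyond Gibbs' inequality, so it holds regardless of how $E_\alpha$ is trained.
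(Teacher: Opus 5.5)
Your proof of the inequality is correct and is the paper's argument. You decompose $I(A;Z_a|Z_b,C)=H(A|Z_b,C)-H(A|Z_a,Z_b,C)$ and replace the true posterior $p(a|z_a,z_b,c)$ by $r_\alpha$ using $D_{\mathrm{KL}}(p\,\|\,r_\alpha)\ge 0$. Treating $r_\alpha$ purely as a normalized variational decoder is cleaner than the paper's presentation. The paper derives $r_\alpha$ as the Bayes posterior of the energy prior by ``eliminating $p_0(z_a)$'', and in doing so silently drops the partition function $F_\alpha(z_b,a)$, which depends on $a$. Your version shows this is harmless for the bound, since only normalization of $r_\alpha$ is used. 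For the ``increase'' part, the paper gives only the qualitative remark that training lowers $E_\alpha(z_a,z_b,a)$ and raises $E_\alpha(z_a,z_b,a')$. Your log-sum-exp expansion of $\log r_\alpha$ is a sharper form of the same heuristic.

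The step that would not go through is your fallback that the correspondence is direct for mismatch sampling. In the paper, the mismatch negative $z_a^-\sim q_{\phi_\alpha}(\cdot|z_b,x_{ctr},a^-)$ is inserted into the negative phase, so the descended estimate of $\nabla_\alpha\mathcal{J}_{\mathrm{E}}$ is
\[
\nabla_\alpha E_\alpha(z_a^{+},z_b,a)-\nabla_\alpha E_\alpha(z_a^{-},z_b,a).
\]
This keeps the label at the true $a$ and swaps the representation, with $a^-$ drawn uniformly. Descending $-\log r_\alpha$ instead uses
\[
-\nabla_\alpha\log r_\alpha(a|z_a,z_b,c)=\nabla_\alpha E_\alpha(z_a,z_b,a)-\sum_{a'\in\mathcal{A}} r_\alpha(a'|z_a,z_b,c)\,\nabla_\alpha E_\alpha(z_a,z_b,a'),
\]
which keeps $z_a$ fixed and swaps the label, with self-normalized weights $r_\alpha$ that include $a'=a$. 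So mismatch sampling, like Langevin, contrasts over $z_a$ at a fixed label, and the obstruction you identified for Langevin applies to it too.

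Even if you read the implementation note as swapping the perturbation embedding inside $E_\alpha$, the negatives are uniformly weighted rather than $r_\alpha$-weighted. The two directions then coincide (in expectation over $a^-$) up to a positive factor when $r_\alpha$ is uniform, but in general they need not have positive inner product. Hence for neither sampler does a step on $\mathcal{J}_{\mathrm{E}}$ provably increase $\mathbb{E}[\log r_\alpha]$. A genuine monotonicity statement would require training $\alpha$ on $\mathbb{E}[\log r_\alpha]$ itself, i.e.\ a softmax cross-entropy over perturbations. The rigorous content of the theorem is the inequality; the increase claim stays heuristic, as in the paper. Note also that under your reading of the joint (with $Z_a$ produced by the encoder), $I(A;Z_a|Z_b,C)$ does not depend on $\alpha$, so an $\alpha$-step can at most tighten the bound.
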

\vspace{-10pt}

\section{Experiments}
\label{sec:experiments}

\subsection{Experimental Settings}
\noindent \textbf{Baselines:} We compare with following baselines: (i) Linear methods: we use Linear model on cell expression and scGPT~\cite{cui2024scgpt} representations; (ii) VAE-based methods: we use scGen~\cite{lotfollahi2019scgen}, CPA~\cite{lotfollahi2023predicting}, stVAE~\cite{russkikh2020style}, and PerturbedVAE~\cite{jiang2026makes}; iii) Other architecture: we use PerturbDiff~\cite{yuan2026perturbdiff}, Squidiff~\cite{he2026squidiff}, CellFlow~\cite{klein2025cellflow} and STATE~\cite{adduri2026predicting}. The detailed settings of all these baselines are provided in Appendix~\ref{app_exp_part:baseline_settings}.

\noindent
\begin{minipage}[t]{0.55\textwidth}
\vspace{-5pt}
\textbf{Datasets:}
We use four widely used benchmark datasets: Replogle~\cite{replogle2022mapping}, PBMC~\cite{yuan2026perturbdiff}, Norman2019~\cite{norman2019exploring}, and Tahoe100M~\cite{zhang2025tahoe}. For a fair comparison, we follow the standardized experimental settings used in STATE~\cite{adduri2026predicting} and VC benchmark~\cite{mao2026benchmarking}. For all datasets, we use top 2,000 highly variable genes (HVGs). The basic statistics of these datasets are summarized in Table~\ref{tab:data_stats}, and detailed descriptions are provided in Appendix~\ref{app_exp_part:dataset_description}.
\end{minipage}
\hfill
\begin{minipage}[t]{0.42\textwidth}
\vspace{-5pt}
\centering
\scriptsize

\captionof{table}{\textbf{Data statistics.}
Number of cells(\#Cells), perturbations(\#Pert.), context(\#CT.) and batches(\#Batches).}
\label{tab:data_stats}

\vspace{0.3em}

\resizebox{\linewidth}{!}{
\begin{tabular}{lrrrr}
\toprule
Dataset    & \#Cells & \#Pert. & \#CT. & \#Batches \\
\midrule
Replogle   & 643.3K  & 2,023   & 4     & 56     \\
PBMC       & 9.7M    & 90      & 18    & 12     \\
Norman2019 & 111.4K  & 236     & 1     & 8      \\
Tahoe100M  & 101.2M  & 1,137   & 50    & 14     \\
\bottomrule
\end{tabular}
}
\end{minipage}


\noindent \textbf{Metrics:} We evaluate all the methods from two perspectives: (i) how well a model reproduces the highly variable gene (HVG) expression (\textbf{HVG group}); (ii) how well a model predict the differential expression gene (DEG) expression influenced by the perturbation. (\textbf{DEG group}). For the HVG group, we adopt Cell-Eval~\cite{adduri2026predicting} including: (a) correlation measures: Pearson correlation (PCC), Pearson delta correlation (PDCorr), Coefficient of Determination (R$^2$), log-fold-change Spearman (LFCSpear), log-fold-change cosine (LogFC\_cos), direction agreement (DirAgr) and Perturbation discrimination (PDS-$\ell_1$/$\ell_2$/cos); (b) DE measures: precison (DEPrec), recall (DERec), Jaccard index (DEOverlap), AUROC and AUPRC between the prediction and real top-50 DEGs; (c) distance measures: Root Mean Square Error (RMSE), Mean Square Error (MSE), Mean Absolute Error (MAE), Energy Distance (EDist) and Sinkhorn Divergence (Sinkhorn); For the DEG group, we calculate RMSE, MAE, PCC and R$^2$ over the top-50/100/200 DEGs respectively. The full definitions of all these metrics are given in Appendix~\ref{app_exp:metrics_description}.

\noindent \textbf{Implementation:} PerturbEnergy is model-agnostic and and we implement it with a standard VAE backbone. Both encoders use two-layer MLPs with a latent dimension of 512, while context $c$ and perturbation $a$ are represented by learnable embedding tables. $E_{\alpha}(z_a,z_b,a)$ is implemented as a simple neural network: we concatenate $z_a$ and $z_b$ and feed it to a two-layer MLP and then output the cosine similarity score with the embedding of $a$. A two-layer MLP decoder predicts the expression of 2,000 HVGs from the concatenated latent representations. We train with Adam~\cite{kingma2014adam} at a learning rate of $10^{-3}$, using batch sizes and maximum training steps of (256, 20,000) for Norman2019, (512, 40,000) for Replogle, (512, 60,000) for PBMC and (512, 160,000) Tahoe100M. By default, PerturbEnergy-Langevin(Lang) uses $K=20$ and $\eta=10^{-4}$, while PerturbEnergy-Mismatch(Mis) randomly samples embeddings from other perturbations as negatives. During inference, we set the candidate number $M=8$ by default. We apply the same checkpoint selection strategy to all methods, as detailed in Appendix~\ref{app_exp_part:baseline_settings}.

\subsection{Experimental Results and Discussions}
\begin{figure*}[h!]
\centering
\includegraphics[width=\textwidth]{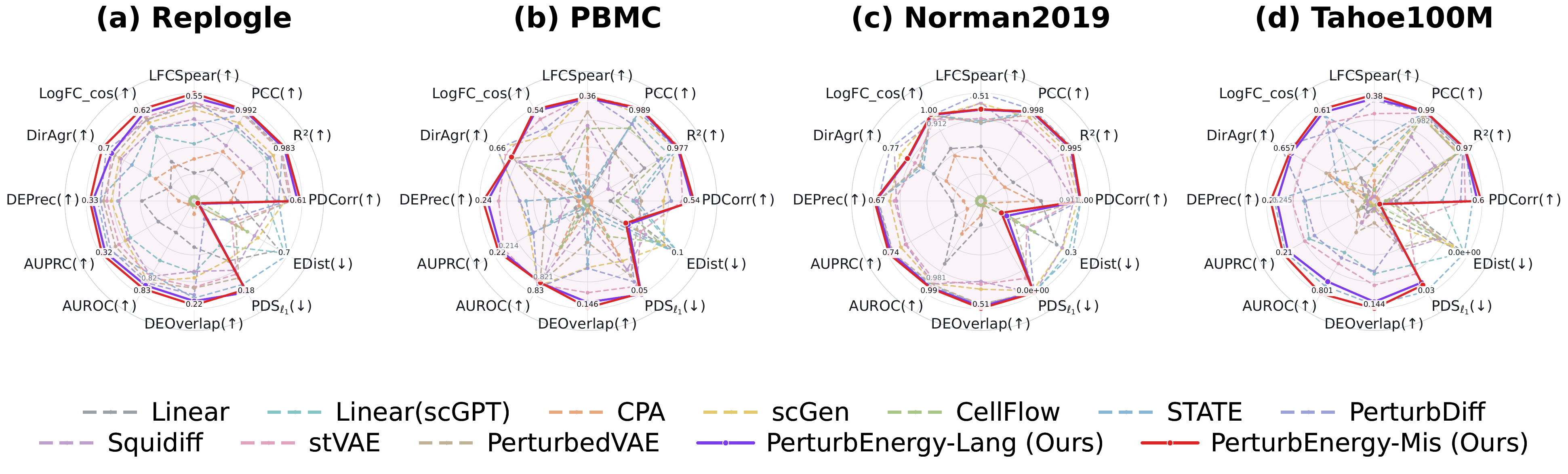}
\vspace{-1.5em}
\caption{Performance on parts of the HVG group metrics. $\uparrow/\downarrow$ means the higher/lower is better.}
\label{fig:radar_hvg}
\vspace{-0.5em}
\end{figure*}

\noindent \textbf{HVG Group.} Figure~\ref{fig:radar_hvg} summarizes the performance of all methods on the HVG group, with detailed results provided in Appendix~\ref{app_exp:hvg_group_res}. For correlation- and distance-based metrics, including PDCorr, LFCSpear, R$^2$, and EDist, PerturbEnergy generally achieves competitive performance across all four datasets, demonstrating its ability to capture overall expression patterns and distributional similarities. Moreover, our method consistently outperforms the baselines on differential expression (DE) metrics, including DEPrec, AUROC, AUPRC, and DEOverlap. This improvement highlights the effectiveness of explicitly modeling perturbation-specific information, which is particularly important for identifying differentially expressed genes (DEGs) associated with perturbations.

\noindent \textbf{DEG Group.} Unlike the HVG group which evaluates general gene expression patterns, the DEG group metrics focus on predicting perturbation-specific responses. We conduct experiments for the DEG group metrics and report the RMSE results in Table~\ref{tab:deg-rmse}, with the remaining metrics and corresponding analysis provided in the Appendix~\ref{app_exp:deg_group_res}. We have the following key observations: (i) Although linear methods can capture general expression patterns in the HVG group, they struggle to predict perturbation-induced changes, suggesting that simple regression models may focus on house keeping gene patterns rather than perturbation effects. (ii) CellFlow, Squidiff, and PerturbDiff model perturbation effects through diffusion or flow-matching processes, but their performance varies across datasets. For example, PerturbDiff achieves state-of-the-art performance on Norman2019 but struggles on Replogle. This variation suggests that relying primarily on model expressiveness may not consistently capture perturbation effects when they are obscured by other sources of cellular variation. Notably, STATE achieves competitive performance, potentially benefiting from batch-level self-attention that mitigates cellular heterogeneity and its large model capacity. (iii) Among representation learning-based methods, PerturbedVAE achieves better performance than other baselines, highlighting the potential benefits of explicitly modeling perturbation-specific information. (iv) Despite using a simple MLP backbone, PerturbEnergy achieves the best performance on Replogle, PBMC, and Tahoe100M and the second-best performance on Norman2019. These results demonstrate the effectiveness of our design in facilitate capturing perturbation-specific effects.

\newcommand{\std}[1]{{\scriptsize$\pm$#1}}

\begin{table}[t]
\centering
\scriptsize
\caption{RMSE ($\downarrow$) over the true top-$K$ DEGs. Values are the mean over held-out perturbation groups $\pm$ the s.e.m. across those groups. Best in \textbf{bold}, second best \underline{underlined}.}
\vspace{-1em}
\label{tab:deg-rmse}
\setlength{\tabcolsep}{4pt}
\resizebox{\textwidth}{!}{%
\begin{tabular}{lcccccc}
\toprule
\multirow{2}{*}{Method} & \multicolumn{3}{c}{Replogle} & \multicolumn{3}{c}{Tahoe100M} \\
\cmidrule(lr){2-4} \cmidrule(lr){5-7}
 & Top-50 & Top-100 & Top-200 & Top-50 & Top-100 & Top-200 \\
\midrule
Linear & 0.2738\,\std{0.0086} & 0.2390\,\std{0.0072} & 0.2045\,\std{0.0058} & 0.4536\,\std{0.0072} & 0.3866\,\std{0.0053} & 0.3032\,\std{0.0040} \\
Linear (scGPT) & 0.2449\,\std{0.0081} & 0.2081\,\std{0.0066} & 0.1738\,\std{0.0052} & 0.3748\,\std{0.0060} & 0.3257\,\std{0.0044} & 0.2588\,\std{0.0034} \\
\midrule
CellFlow & 0.2174\,\std{0.0067} & 0.1850\,\std{0.0055} & 0.1536\,\std{0.0043} & 0.3860\,\std{0.0062} & 0.3318\,\std{0.0044} & 0.2617\,\std{0.0034} \\
Squidiff & 0.2173\,\std{0.0055} & 0.1907\,\std{0.0044} & 0.1658\,\std{0.0035} & 0.4192\,\std{0.0071} & 0.3889\,\std{0.0051} & 0.3231\,\std{0.0039} \\
PerturbDiff & 0.2612\,\std{0.0096} & 0.2198\,\std{0.0079} & 0.1808\,\std{0.0063} & 0.3558\,\std{0.0070} & 0.3079\,\std{0.0051} & 0.2462\,\std{0.0040} \\
STATE & 0.2024\,\std{0.0064} & 0.1727\,\std{0.0052} & 0.1447\,\std{0.0041} & 0.3356\,\std{0.0041} & 0.2935\,\std{0.0030} & 0.2342\,\std{0.0023} \\
\midrule
scGen & 0.2336\,\std{0.0078} & 0.1992\,\std{0.0065} & 0.1661\,\std{0.0051} & 0.4462\,\std{0.0081} & 0.3818\,\std{0.0059} & 0.3018\,\std{0.0046} \\
CPA & 0.3462\,\std{0.0132} & 0.2905\,\std{0.0110} & 0.2363\,\std{0.0088} & 0.7070\,\std{0.0103} & 0.6457\,\std{0.0075} & 0.5304\,\std{0.0056} \\
stVAE & 0.2147\,\std{0.0071} & 0.1795\,\std{0.0057} & 0.1484\,\std{0.0044} & 0.3293\,\std{0.0057} & 0.2883\,\std{0.0041} & 0.2315\,\std{0.0031} \\
PerturbedVAE & 0.2130\,\std{0.0071} & 0.1817\,\std{0.0058} & 0.1514\,\std{0.0047} & 0.3984\,\std{0.0075} & 0.3476\,\std{0.0055} & 0.2791\,\std{0.0042} \\
\midrule
PerturbEnergy-Lang & \underline{0.1870}\,\std{0.0061} & \underline{0.1591}\,\std{0.0049} & \underline{0.1335}\,\std{0.0039} & \textbf{0.3114}\,\std{0.0051} & \textbf{0.2719}\,\std{0.0037} & \textbf{0.2179}\,\std{0.0029} \\
PerturbEnergy-Mis & \textbf{0.1780}\,\std{0.0057} & \textbf{0.1522}\,\std{0.0046} & \textbf{0.1285}\,\std{0.0036} & \underline{0.3123}\,\std{0.0051} & \underline{0.2723}\,\std{0.0037} & \underline{0.2179}\,\std{0.0029} \\
\bottomrule
\addlinespace[6pt]
\toprule
\multirow{2}{*}{Method} & \multicolumn{3}{c}{PBMC} & \multicolumn{3}{c}{Norman2019} \\
\cmidrule(lr){2-4} \cmidrule(lr){5-7}
 & Top-50 & Top-100 & Top-200 & Top-50 & Top-100 & Top-200 \\
\midrule
Linear & 0.1065\,\std{0.0046} & 0.0901\,\std{0.0037} & 0.0740\,\std{0.0029} & 0.4105\,\std{0.0264} & 0.3136\,\std{0.0208} & 0.2313\,\std{0.0155} \\
Linear (scGPT) & 0.0979\,\std{0.0045} & 0.0816\,\std{0.0036} & 0.0656\,\std{0.0029} & 0.2810\,\std{0.0090} & 0.2422\,\std{0.0073} & 0.1837\,\std{0.0055} \\
\midrule
CellFlow & 0.0821\,\std{0.0024} & 0.0678\,\std{0.0019} & 0.0541\,\std{0.0015} & 0.1603\,\std{0.0145} & 0.1292\,\std{0.0118} & 0.1007\,\std{0.0089} \\
Squidiff & 0.1271\,\std{0.0051} & 0.1106\,\std{0.0042} & 0.0931\,\std{0.0033} & 0.2041\,\std{0.0106} & 0.1743\,\std{0.0084} & 0.1447\,\std{0.0068} \\
PerturbDiff & 0.0829\,\std{0.0031} & 0.0718\,\std{0.0025} & 0.0602\,\std{0.0020} & \textbf{0.1248}\,\std{0.0088} & \textbf{0.1035}\,\std{0.0067} & \textbf{0.0836}\,\std{0.0051} \\
STATE & 0.0935\,\std{0.0033} & 0.0774\,\std{0.0027} & 0.0619\,\std{0.0022} & 0.1431\,\std{0.0117} & 0.1168\,\std{0.0091} & 0.0922\,\std{0.0065} \\
\midrule
scGen & 0.0960\,\std{0.0049} & 0.0787\,\std{0.0040} & 0.0625\,\std{0.0031} & 0.1729\,\std{0.0102} & 0.1404\,\std{0.0082} & 0.1108\,\std{0.0065} \\
CPA & 0.1581\,\std{0.0069} & 0.1317\,\std{0.0054} & 0.1062\,\std{0.0041} & 0.4419\,\std{0.0282} & 0.3370\,\std{0.0219} & 0.2477\,\std{0.0162} \\
stVAE & 0.0729\,\std{0.0022} & 0.0613\,\std{0.0018} & 0.0500\,\std{0.0014} & 0.2110\,\std{0.0146} & 0.1656\,\std{0.0114} & 0.1260\,\std{0.0084} \\
PerturbedVAE & 0.1071\,\std{0.0032} & 0.0925\,\std{0.0025} & 0.0777\,\std{0.0020} & 0.1400\,\std{0.0123} & 0.1186\,\std{0.0100} & 0.0953\,\std{0.0075} \\
\midrule
PerturbEnergy-Lang & \textbf{0.0637}\,\std{0.0020} & \textbf{0.0542}\,\std{0.0016} & \textbf{0.0448}\,\std{0.0013} & 0.1434\,\std{0.0113} & 0.1176\,\std{0.0086} & 0.0927\,\std{0.0062} \\
PerturbEnergy-Mis & \underline{0.0654}\,\std{0.0020} & \underline{0.0557}\,\std{0.0016} & \underline{0.0462}\,\std{0.0013} & \underline{0.1376}\,\std{0.0110} & \underline{0.1135}\,\std{0.0083} & \underline{0.0898}\,\std{0.0061} \\
\bottomrule
\end{tabular}}
\vspace{-2em}
\end{table}

\begin{table}[h!]
\centering
\tiny
\caption{Perturbation classification accuracy. Best in \textbf{bold}, second best \underline{underlined}.}
\vspace{-1em}
\label{tab:perturbation_study}
\begin{tabular}{l|cccc|cccc}
\toprule
\multirow{2}{*}{Method} & \multicolumn{4}{c|}{Latent Representation} & \multicolumn{4}{c}{Expression Delta} \\
\cmidrule(lr){2-5} \cmidrule(lr){6-9}
 & Replogle & PBMC & Norman2019 & Tahoe100M & Replogle & PBMC & Norman2019 & Tahoe100M \\
\midrule
scGPT & 18.5$\pm$1.1 & 33.1$\pm$1.0 & 74.4$\pm$2.9 & 38.2$\pm$1.9 & 21.5$\pm$2.5 & 24.8$\pm$1.4 & 73.1$\pm$2.7 & 39.9$\pm$2.3 \\
PerturbDiff & 9.1$\pm$3.3 & 14.9$\pm$2.3 & 29.5$\pm$4.1 & 19.5$\pm$3.8 & 18.4$\pm$1.2 & 18.3$\pm$1.5 & 52.6$\pm$3.0 & 31.4$\pm$3.4 \\
STATE & 16.3$\pm$0.9 & 12.6$\pm$1.1 & 23.1$\pm$2.0 & 23.6$\pm$3.6 & 24.5$\pm$1.8 & 26.9$\pm$2.8 & 70.0$\pm$1.2 & 41.8$\pm$4.1 \\
\midrule
scGen & 24.6$\pm$1.6 & 48.4$\pm$2.5 & 81.6$\pm$1.7 & 46.1$\pm$2.5 & 19.7$\pm$2.3 & 24.6$\pm$1.9 & 68.2$\pm$2.3 & 35.2$\pm$1.5 \\
CPA & 22.2$\pm$0.4 & 43.0$\pm$1.9 & 78.2$\pm$1.9 & 33.3$\pm$4.0 & 16.3$\pm$1.4 & 18.5$\pm$1.0 & 60.1$\pm$3.3 & 24.7$\pm$1.6 \\
PerturbedVAE & 28.6$\pm$2.0 & 48.2$\pm$2.9 & 82.7$\pm$3.2 & 45.5$\pm$2.1 & 24.2$\pm$0.9 & 22.1$\pm$2.0 & 69.5$\pm$1.9 & 37.5$\pm$2.3 \\
\midrule
PerturbEnergy-Lang & \underline{50.5$\pm$1.5} & \underline{59.3$\pm$1.5} & \underline{91.7$\pm$2.8} & \underline{55.8$\pm$2.7} & \underline{29.5$\pm$1.4} & \underline{32.8$\pm$1.6} & \underline{77.2$\pm$2.5} & \underline{46.3$\pm$1.7} \\
PerturbEnergy-Mis & \textbf{52.4$\pm$1.2} & \textbf{61.8$\pm$2.0} & \textbf{96.4$\pm$1.5} & \textbf{58.5$\pm$2.2} & \textbf{31.9$\pm$1.1} & \textbf{36.6$\pm$0.8} & \textbf{81.7$\pm$2.3} & \textbf{47.8$\pm$2.1} \\
\bottomrule
\end{tabular}
\vspace{-2em}
\end{table}

\noindent \textbf{Perturbation Study.} To evaluate how effectively different methods capture perturbation-specific information, we conduct an analysis using two types of inputs: (i) latent representations and (ii) expression differences between predicted perturbed cells and observed control cells. For each setting, we train a two-layer MLP classifier to predict perturbation labels. The results are reported in Table~\ref{tab:perturbation_study} and Figure~\ref{fig:perturbation_study}. For latent representations, scGPT captures limited perturbation effect as it's likely to be subtilized; STATE and PerturbDiff achieve limited classification accuracy, as their representations primarily capture basal cellular states; VAE-based methods perform better by incorporating perturbation information into the latent space. Besides, richer perturbation-specific information generally leads to better perturbation prediction performance. For example, PerturbedVAE achieves higher accuracy than CPA and outperforms CPA in Top-50/100/200 RMSE. (See Table~\ref{tab:deg-rmse}). However, their accuracy remains below 50\% on Replogle, PBMC, and Tahoe100M, suggesting that existing representation learning strategies may not sufficiently preserve perturbation-specific information. For expression differences, STATE achieves competitive performance, whereas PerturbDiff struggles compared with other baselines. This variation suggests that relying primarily on model may not consistently capture perturbation effects. VAE-based methods also exhibit degraded performance, potentially because cellular complexity obscures perturbation-induced changes in the expression. In contrast, PerturbEnergy achieves the best performance in both settings, with particularly substantial improvements in latent representation classification (83.2\% on Replogle and 28.2\% on PBMC). These results demonstrate the effectiveness of our method in enriching latent representations with perturbation-specific information.

\begin{figure*}[h!]
\centering
\includegraphics[width=\textwidth]{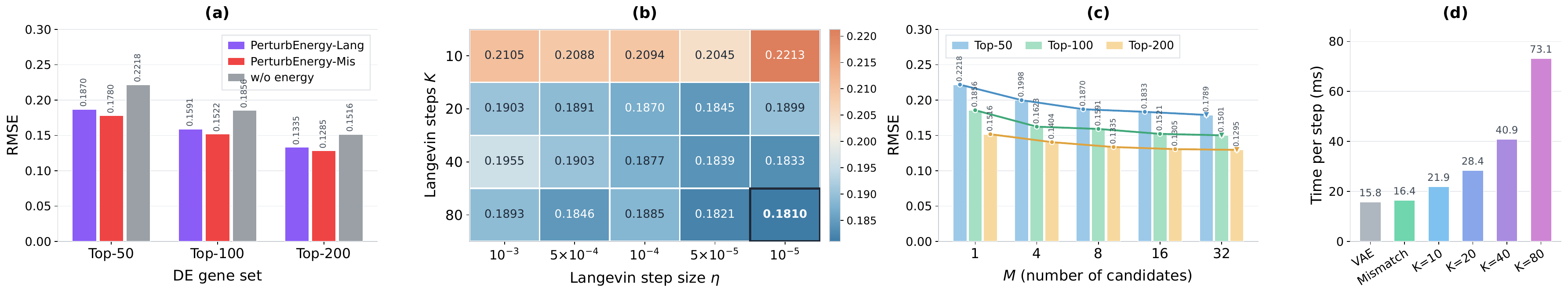}
\vspace{-1.5em}
\caption{Ablation and other study on Replogle. (a) is the ablation; (b) is the hyperparameter study about Langevin steps/size; (c) is the influence of candidate number $M$. (d) is the training efficiency.}
\label{fig:ablation_and_hyperparameter}
\vspace{-1em}
\end{figure*}
\noindent \textbf{Ablation and Other Study.} We conduct ablation, hyperparameter and efficiency studies on Replogle, as shown in Figure~\ref{fig:ablation_and_hyperparameter}. Removing the energy function consistently degrades performance, where the model reduces to a standard VAE without explicitly modeling perturbation-specific information. We then examine the effects of Langevin steps $K$ and step size $\eta$ using RMSE on the top-50 DEGs. Larger $K$ and smaller $\eta$ generally improve performance, whereas insufficient Langevin steps lead to degraded results. We also find that Mismatch generally has the same training efficiency as VAE (16.4 vs 15.6 ms/step) while Langevin with larger $K$ significantly decreases the training speed. As a balance, we set $K=20$ and $\eta=10^{-4}$ by default, which generally make the sample region with a flat gradient. Finally, we examine the effect of the candidate number $M$ during inference. Performance deteriorates with too few candidates ($M\leq 4$), while increasing $M$ to 16 or 32 yields only marginal improvements. We therefore set $M=8$ as a balance.

\noindent \textbf{Other Discussion.} Due to space limitations, we provide an analysis of the training scale/model dimension in the Appendix~\ref{app_exp_part:training_model_scale}. We also demonstrate the compatibility of PerturbEnergy with different VAE backbones in Appendix~\ref{app_exp_part:other_vae_backbone}. The visualizations are provided in Appendix~\ref{app_exp_part:visualization}.

\section{Conclusion}
\label{sec:conclusion}
In this paper, we introduce PerturbEnergy, a framework designed to enhance perturbation-specific information for cellular perturbation prediction. PerturbEnergy uses an energy function to reshape the latent prior, directing it toward perturbation preferred regions. We theoretically demonstrate that PerturbEnergy increases the mutual information between the representations and perturbations. Extensive experiments on four datasets validate its effectiveness.
\section{AI use statement}
In this work, we used generative AI tools only for polishing the paper writing, including fixing grammar mistakes, making expressions more easy to understand. We have not used generative AI tools for developing our research idea and experiments. The applicable and not applicable details are as follows:

\noindent \textbf{Applicable AI usage for this work:} (i) Grammar, spelling mistake finding and fixing; (ii) Expression and writing polishing, AI is used to paraphrase some of my sentences to make it clear and easy to understand, while exactly keeping the main content of the original sentences. (iii) Give suggestions on the paper formatting, including the position of our experimental Tables, Figures.

\noindent \textbf{Not Applicable AI usage for this work:} We do not use AI for our research idea developing, including: (i) AI is not used for reviewing related works; (ii) AI is not used for developing methodology and theory of this work; (iii) AI is not used for collecting experimental data; (iv) AI is not used for implementing our method codes;

\section{Ethics statement}
Our work is focused on AI-based prediction of biological cellular perturbation response, aiming to enhance the accuracy of cell prediction for common perturbations. This work is the application of the universal AI methodology in the context of cellular prediction, aiming to solve the practical problems in this scenario. The content of this research work does not involve any ethical issues, there is no risk data release, no harmful content, and no conflicts of interest. However, it should be noted that this method is used to improve the computational prediction accuracy of cell perturbations. Therefore, this work needs to be properly applied and should not be used to assist in harmful biomedical research.

\section{Reproducibility statement}
We declare that all the work presented in this paper is reproducible. Firstly, in terms of methodology, we have provided rigorous theoretical proofs for each theorem/lemma. Secondly, in terms of the software code, we have made the relevant code of our work public via anonymous link. Furthermore, we adopt the publicly available datasets, data processing methods, and evaluation metrics. This means that the experimental and theoretical results of this work can all be reproduced.

\bibliography{iclr2027_conference}
\bibliographystyle{iclr2027_conference}

\clearpage
\appendix
\section{Appendix}

\begingroup
\etocsetnexttocdepth{subsubsection}
\etocsettocstyle{\subsection*{Appendix Contents}}{}
\localtableofcontents
\endgroup
\clearpage

\subsection{Theoretical supports}
\label{app:theory}

\subsubsection{Proof of Theorem 1: Equivalence of the Energy-ELBO}
\label{app:theory_elbo_equivalence}
Here we give a formal proof to the Theorem~\ref{method_theory:theorem_1} as follows:

\begin{proof}

We begin from with the ELBO of $log\, p_{\Theta}(x|c,a)$ which can be derived as:
\begin{equation}
\label{app_eq:app_original_elbo}
\begin{aligned}
\mathcal{L}_{\mathrm{ELBO}}
& = \mathbb{E}_{q_{\phi}(z_b,z_a|x_{ctr},c,a)}
\Big[ \log p_{\theta}(x|z_b,z_a) + \log p(z_b,z_a|c,a) - log q_{\phi}(z_b,z_a|x_{ctr},c,a)\Big] \\
& = \mathbb{E}_{q_{\phi}}\left[\log p_{\theta}(x|z_b,z_a)\right] - D_{KL}(q_{\phi}(z_b,z_a|x_{ctr},c,a)||p(z_b,z_a|c,a)).
\end{aligned}
\end{equation}
According to our joint prior assumption in Eq.~\eqref{method_eq:latent_variable_model} and the variational inference formation in Eq.~\eqref{method_eq:vae}, we have:
\begin{equation}
p(z_b,z_a|c,a) = p_{\beta}(z_b|c) p_{\alpha}(z_a|z_b,a),
\label{app_eq:latent_variable_model}
\end{equation}
and:
\begin{equation}
q_{\phi}(z_b,z_a|x_{ctr},c,a) = q_{\phi_\beta}(z_b|x_{ctr},c)q_{\phi_\alpha}(z_a|z_b,x_{ctr},a).
\label{app_eq:vae}
\end{equation}
Besides, according to the chain rule of Kullback-Leibler (KL) divergence, where:
\begin{equation}
D_{KL}(P(X, Y)||Q(X, Y)) = D_{KL}(P(X)||Q(X)) + D_{KL}(P(Y|X)||Q(Y|X)).
\end{equation}
The $\mathcal{L}_{\mathrm{ELBO}}$ then becomes:
\begin{equation}
\begin{aligned}
\mathcal{L}_{\mathrm{ELBO}}
=&
\mathbb{E}_{q_{\phi}}\left[\log p_{\theta}(x|z_b,z_a)\right]\\
& + \mathbb{E}_{q_{\phi_\beta}}\left[\log p_{\beta}(z_b|c)-\log q_{\phi_\beta}(z_b|x_{ctr},c)\right]\\
& + \mathbb{E}_{q_{\phi_\beta}}\{\mathbb{E}_{q_{\phi_\alpha}}\left[\log p_{\alpha}(z_a|z_b,a) - \log q_{\phi_\alpha}(z_a|z_b,x_{ctr},a)\right]\}.
\end{aligned}
\end{equation}
The second term $\mathbb{E}_{q_{\phi_\beta}}\left[\log p_{\beta}(z_b|c)-\log q_{\phi_\beta}(z_b|x_{ctr},c)\right]$ is:
\begin{equation}
- D_{\mathrm{KL}}\left(q_{\phi_\beta}||p_{\beta}\right).
\end{equation}
For the $\mathbb{E}_{q_{\phi_\alpha}}\left[\log p_{\alpha}(z_a|z_b,a) - \log q_{\phi_\alpha}(z_a|z_b,x_{ctr},a)\right]$ in the third term, it actually is:
\begin{equation}
-D_{\mathrm{KL}}\left(q_{\phi_{\alpha}}||p_{\alpha}\right).
\end{equation}

Thus, the complete ELBO is:
\begin{equation}
\mathcal{L}_{\mathrm{ELBO}} = \mathbb{E}_{q_{\phi}}\left[log \, p_{\theta}(x|z_b,z_a)\right] - D_{\mathrm{KL}}(q_{\phi_\beta}||p_{\beta}) + \mathbb{E}_{q_{\phi_\beta}}\left[-D_{\mathrm{KL}}\left(q_{\phi_{\alpha}}||p_{\alpha}\right)\right].
\end{equation}

Until now, we have the proved the formation in Theorem~\ref{method_theory:theorem_1}.

In addition, given the definition of our conditional EBM prior in Eq.~\ref{method_eq:ebm_prior}, the complete ELBO can be further derived as:
\begin{equation}
\begin{aligned}
\mathcal{L}_{\mathrm{ELBO}} & = \mathbb{E}_{q_{\phi}}\left[log \, p_{\theta}(x|z_b,z_a)\right] -D_{\mathrm{KL}}(q_{\phi_\beta}||p_{\beta}) \\
& - \mathbb{E}_{q_{\phi_\beta}}\big\{\mathbb{E}_{q_{\phi_\alpha}}[\log q_{\phi_\alpha}(z_a|z_b,x_{ctr},a) - \log p_0(z_a) + E_{\alpha}(z_a,z_b,a) + \log F_{\alpha}(z_b,a)]\big\}.
\end{aligned}
\end{equation}

Since $\log F_{\alpha}(z_b,a)$ is independent of $z_a$ for fixed $(z_b,a)$, thus:
\begin{equation}
\begin{aligned}
\mathcal{L}_{\mathrm{ELBO}} & = \mathbb{E}_{q_{\phi}}\left[log \, p_{\theta}(x|z_b,z_a)\right] -D_{\mathrm{KL}}(q_{\phi_\beta}||p_{\beta}) \\
& - \mathbb{E}_{q_{\phi_\beta}}\big\{D_{\mathrm{KL}}\left(q_{\phi_{\alpha}}(z_a |z_b,x_{ctr},a)||p_0(z_a)\right) + \mathbb{E}_{q_{\phi_{\alpha}}}\left[E_{\alpha}(z_a,z_b,a)\right] + \log F_{\alpha}(z_b,a)\big\},
\end{aligned}
\end{equation}
which gives the exact form as Energy-ELBO described in Eq.~\eqref{method_eq:energy_kl}. Therefore, we fully prove Theorem~\ref{method_theory:theorem_1}.
\end{proof}

\subsubsection{Proof of Lemma 1: Gradient of the partition function}
\label{app:theory_partition_function_gradient}
Here we give a proof to Lemma~\ref{method_theory:lemma_1} as follows:
\begin{proof}
\begin{equation}
\begin{aligned}
\nabla_{\alpha} \log F_{\alpha}(z_b,a) = \frac{1}{F_{\alpha}(z_b,a)} \nabla_{\alpha} F_{\alpha}(z_b,a)
\end{aligned}
\end{equation}
According to the definition in Eq.~\eqref{method_eq:partition_function}, the partition function is:
\begin{equation}
F_{\alpha}(z_b,a)=\int p_0(z_a)\exp\left(-E_{\alpha}(z_a,z_b,a)\right)\mathrm{d}z_a.
\end{equation}
Thus, we have:
\begin{equation}
\begin{aligned}
\nabla_{\alpha} \log F_{\alpha}(z_b,a) & = \frac{1}{F_{\alpha}(z_b,a)} \nabla_{\alpha} \int p_0(z_a)\exp\left(-E_{\alpha}(z_a,z_b,a)\right)\mathrm{d}z_a \\
& = \frac{1}{F_{\alpha}(z_b,a)} \int p_0(z_a) \nabla_{\alpha} \exp\left(-E_{\alpha}(z_a,z_b,a)\right)\mathrm{d}z_a \\
& = \frac{1}{F_{\alpha}(z_b,a)} \int p_0(z_a) \exp\left(-E_{\alpha}(z_a,z_b,a)\right) [-\nabla_{\alpha}E_{\alpha}(z_a,z_b,a)]\mathrm{d}z_a \\
& = \int \frac{p_0(z_a)\exp\left(-E_{\alpha}(z_a,z_b,a)\right)}{F_{\alpha}(z_b,a)} [-\nabla_{\alpha}E_{\alpha}(z_a,z_b,a)]\mathrm{d}z_a.
\end{aligned}
\end{equation}

According to the our conditional energy form described in Eq.~\eqref{method_eq:ebm_prior}, which is:
\begin{equation}
p_{\alpha}(z_a|z_b,a) = \frac{p_0(z_a)\exp\left[-E_{\alpha}(z_a,z_b,a)\right]}{F_{\alpha}(z_b,a)}.
\end{equation}
Therefore, we have:
\begin{equation}
\begin{aligned}
\nabla_{\alpha} \log F_{\alpha}(z_b,a) & = \int p_{\alpha}(z_a|z_b,a) [-\nabla_{\alpha}E_{\alpha}(z_a,z_b,a)]\mathrm{d}z_a \\
& = - \mathbb{E}_{p_{\alpha}(z_a |z_b,a)}\left[\nabla_{\alpha}E_{\alpha}(z_a,z_b,a)\right].
\end{aligned}
\end{equation}

Therefore, we finish the proof of Lemma~\ref{method_theory:lemma_1}.
\end{proof}

\subsubsection{Derivation of the gradient to energy parameters}
\label{app:derivation_of_energy_gradient}
Here we derive the gradient of the entire ELBO corresponding to energy parameters. 
\begin{proof}

According to the formation of $log\, p_{\Theta}(x|c,a)$ ELBO described in Eq.~\eqref{method_eq:entire_elbo}, we have:
\begin{equation}
\begin{aligned}
\nabla_{\alpha} \mathcal{L}_{\mathrm{ELBO}} & = \nabla_{\alpha} \mathbb{E}_{q_{\phi}}\left[log \, p_{\theta}(x|z_b,z_a)\right] - \nabla_{\alpha} D_{\mathrm{KL}}(q_{\phi_\beta}||p_{\beta}) + \nabla_{\alpha} \mathbb{E}_{q_{\phi_\beta}}\left[-D_{\mathrm{KL}}\left(q_{\phi_{\alpha}}||p_{\alpha}\right)\right] \\
& = \mathbb{E}_{q_{\phi_\beta}} \left[-\nabla_{\alpha} D_{\mathrm{KL}}\left(q_{\phi_{\alpha}}||p_{\alpha}\right)\right].
\end{aligned}
\end{equation}

According to the Energy-ELBO definition in Eq.~\eqref{method_eq:energy_kl}, we then have:
\begin{equation}
\begin{aligned}
\nabla_{\alpha} \mathcal{L}_{\mathrm{ELBO}} & = \mathbb{E}_{q_{\phi_\beta}} \left[-\nabla_{\alpha} D_{\mathrm{KL}}\left(q_{\phi_{\alpha}}||p_{\alpha}\right)\right] \\
& = \mathbb{E}_{q_{\phi_\beta}} \{ -\nabla_{\alpha} D_{\mathrm{KL}}\left(q_{\phi_{\alpha}}(z_a |z_b,x_{ctr},a)||p_0(z_a)\right) - \mathbb{E}_{q_{\phi_{\alpha}}}\left[ \nabla_{\alpha} E_{\alpha}(z_a,z_b,a)\right] -  \nabla_{\alpha} \log F_{\alpha}(z_b,a) \}.
\end{aligned}
\end{equation}
Since $D_{\mathrm{KL}}\left(q_{\phi_{\alpha}}(z_a |z_b,x_{ctr},a)||p_0(z_a)\right)$ is unrelated to $\alpha$, together with Lemma~\ref{method_theory:lemma_1}, where:
\begin{equation}
\begin{aligned}
\nabla_{\alpha} \log F_{\alpha}(z_b,a) & = - \mathbb{E}_{p_{\alpha}(z_a |z_b,a)}\left[\nabla_{\alpha}E_{\alpha}(z_a,z_b,a)\right],
\end{aligned}
\end{equation}
we have:
\begin{equation}
\begin{aligned}
\nabla_{\alpha} \mathcal{L}_{\mathrm{ELBO}} = \mathbb{E}_{q_{\phi_\beta}} \{-\mathbb{E}_{q_{\phi_{\alpha}}}\left[\nabla_{\alpha} E_{\alpha}(z_a,z_b,a)\right] + \mathbb{E}_{p_{\alpha}}\left[\nabla_{\alpha}E_{\alpha}(z_a,z_b,a)\right]\}.
\end{aligned}
\end{equation}
Therefore, we finish the derivation of the gradient corresponding to energy parameters $\alpha$.
\end{proof}

\subsubsection{Proof of Theorem 2: Conditional Mutual-Information Lower Bound}
\label{app:theory_conditional_mutual_information_lower_bound}
Here we give a proof to Theorem~\ref{method_theory:theorem_2} as follows:

\begin{proof}
According to our definition of conditional energy-based prior in Eq.~\eqref{method_eq:ebm_prior}, we have:
\begin{equation}
p_{\alpha}(z_a|z_b,a) \propto p_0(z_a)\exp\left[-E_{\alpha}(z_a,z_b,a)\right].
\label{app_eq:proof_4_eq_1}
\end{equation}
We denote the real perturbation prior as: $\pi(a|z_b,c)$, according to Bayes rule, we obtain:
\begin{equation}
p_{\alpha}(a|z_a,z_b) \propto \pi(a|z_b,c) p_{\alpha}(z_a|z_b,a).
\label{app_eq:proof_4_eq_2}
\end{equation}
Combining Eq.~\eqref{app_eq:proof_4_eq_1} and Eq.~\eqref{app_eq:proof_4_eq_2}, and eliminating $p_0(z_a)$, we obtain:
\begin{equation}
r_{\alpha}(a|z_a,z_b,c) = \frac{\pi(a|z_b,c) \exp (-E_{\alpha}(z_a,z_b,a))}{\sum_{a' \in \mathcal{A}} \pi(a'|z_b,c) \exp (-E_{\alpha}(z_a,z_b,a'))}.
\label{app_eq:proof_thery_2_r_definition}
\end{equation}
From the perspective of mutual information, we consider conditional mutual information $I(A;Z_a|Z_b,C)$, having:
\begin{equation}
\begin{aligned}
I(A;Z_a|Z_b,C) = H(A|Z_b,C) - H(A|Z_a,Z_b,C),
\end{aligned}
\end{equation}
where $H(.|Z_b,C)$ and $H(.|Z_a,Z_b,C)$ denote the conditional entropy. According to the definition of entropy, we have:
\begin{equation}
 - H(A|Z_a,Z_b,C) = \mathbb{E}\left[ log \, p(a|z_a,z_b,c) \right],
\end{equation}
where $p(a|z_a,z_b,c)$ is the real perturbation posterior distribution. Therefore we have:
\begin{equation}
\begin{aligned}
I(A;Z_a|Z_b,C) = H(A|Z_b,C) + \mathbb{E}\left[ log \, p(a|z_a,z_b,c) \right].
\end{aligned}
\end{equation}
We use $r_{\alpha}(a|z_a,z_b,c)$ to approximate $p(a|z_a,z_b,c)$, and because the property of KL-divergence that $D_{KL}(p||q) \geq 0$, we have:
\begin{equation}
\begin{aligned}
D_{KL}(p(a|z_a,z_b,c)||r_{\alpha}(a|z_a,z_b,c)) \geq 0,
\end{aligned}
\end{equation}
Therefore, we obtain:
\begin{equation}
\begin{aligned}
I(A;Z_a|Z_b,C) & \geq H(A|Z_b,C) + \mathbb{E}\left[\log r_{\alpha}(A |Z_a,Z_b,C)\right],
\end{aligned}
\end{equation}
where the equality holds when $r_{\alpha}(a|z_a,z_b,c)$ correctly approximates the real distribution $p(a|z_a,z_b,c)$.

Thus, we have proved the formation of Theorem~\ref{method_theory:theorem_2}.

Next, we will talk about the relationship between the energy optimization and the conditional mutual information. 

According to our optimization realized in Subsection~\ref{method_subsec:energy_optimization}, the energy function $E_{\alpha}(z_a,z_b,a)$ assigns lower energy (higher probability mass) to the compatible perturbation representations and higher energy to incompatible ones (i.e., For the definition of $r_{\alpha}(a|z_a,z_b,c)$ in Eq.~\eqref{app_eq:proof_thery_2_r_definition}, $E_{\alpha}(z_a,z_b,a)$ becomes lower while $E_{\alpha}(z_a,z_b,a')$ becomes higher). Therefore, the optimization of energy function leads to the increase of $\mathbb{E}\left[\log r_{\alpha}(A |Z_a,Z_b,C)\right]$, which means \textbf{Optimizing PerturbEnergy leading to the increase of the conditional mutual information lower bound between the perturbation and the latent representation.}

Until now, We have fully proved Theorem~\ref{method_theory:theorem_2}.
\end{proof}

\subsection{Algorithm Description}

\subsubsection{Langevin Sampling Algorithm}
\label{app_alg:langevin}
We provide the Langevin sampling in Algorithm~\ref{app_alg:langevin_sampling_alg_procedure}.
\begin{algorithm}[h!]
\caption{Langevin Sampling}
\label{app_alg:langevin_sampling_alg_procedure}
\begin{algorithmic}[1]
\REQUIRE Cellular expression $x_{ctr}$, basal cellular representation $z_b$, perturbation $a$, energy function $E_{\alpha}(z_a,z_b,a)$, initial model $q_{\phi_{\alpha}}(z_a|z_b,x_{ctr},a)$, Langevin steps $K$, step size $\eta$
\ENSURE Negative sample $z_a^{-}$

\STATE Initialize $z_a^{(0)} \sim q_{\phi_{\alpha}}(z_a|z_b,x_{ctr},a)$

\FOR{$k = 0,1,\dots,K-1$}
    \STATE Sample Gaussian noise $\epsilon^{(k)} \sim \mathcal{N}(0,I)$
    \STATE Update $z_a^{(k)}$ via Langevin dynamics:
    \[
    z_a^{(k+1)}
    =
    z_a^{(k)}
    -
    \eta \nabla_{z_a}
    E_{\alpha}\left(z_a^{(k)},z_b,a\right)
    +
    \sqrt{2\eta}\,\epsilon^{(k)}
    \]
\ENDFOR

\STATE Set $z_a^{-} = z_a^{(K)}$

\RETURN $z_a^{-}$
\end{algorithmic}
\end{algorithm}

\subsubsection{Optimization Algorithm}
\label{app_alg:optimization}
We provide the optimization process in Algorithm~\ref{app_alg:optimization_alg_procedure}
\begin{algorithm}[h!]
\caption{Optimization Algorithm}
\label{app_alg:optimization_alg_procedure}
\begin{algorithmic}[1]
\REQUIRE Batch cell data $(x_{ctr}, x)$ under the same context $c$ with perturbation $a$, model parameters $\{\theta, \alpha, \phi_\beta, \phi_\alpha\}$, sample size $N$, $\alpha$ learning rate $\zeta$ , and $\{\theta,\phi_\beta, \phi_\alpha\}$ learning rate $\xi$
\ENSURE Updated model parameters $\{\theta, \alpha, \phi_\beta, \phi_\alpha\}$

\STATE Sample batch basal representations $z_{b}$ from $q_{\phi_{\beta}}(z_b|x_{ctr},c)$

\STATE Generate $N$ positive batch samples $\{z_{a,i}^{+}\}_{i=1}^{N}$ via $q_{\phi_{\alpha}}(z_a| z_b,x,a)$

\STATE Generate $N$ negative batch samples $\{z_{a,i}^{-}\}_{i=1}^{N}$ via Langevin sampling in Algorithm~\ref{app_alg:langevin_sampling_alg_procedure} or via $q_{\phi_{\alpha}}(z_a |z_b,x_{ctr},a^{-})$ with random other $a^{-}$

\STATE Calculating $\nabla_{\alpha} \mathcal{J}_{\mathrm{E}}(\alpha)$ via Eq.~\eqref{method_eq:j_gradient}

\STATE Calculating $\{\nabla_{\theta}\mathcal{L}_{\mathrm{ELBO}}, \nabla_{\phi_\beta}\mathcal{L}_{\mathrm{ELBO}},\nabla_{\phi_\alpha}\mathcal{L}_{\mathrm{ELBO}}\}$ via Eq.~\eqref{method_eq:elbo_derivation}

\STATE Update $\alpha$ via: $\alpha \leftarrow \alpha - \zeta \nabla_{\alpha} \mathcal{J}_{\mathrm{E}}(\alpha)$

\STATE Update $\{\theta, \phi_\beta, \phi_\alpha\}$ via: $\theta/\phi_\beta/\phi_\alpha  \leftarrow \theta/\phi_\beta/\phi_\alpha + \xi \nabla_{\theta/\phi_\beta/\phi_\alpha} \mathcal{L}_{\mathrm{ELBO}}$
\end{algorithmic}
\end{algorithm}

\subsubsection{Inference Algorithm}
\label{app_alg:inference}
We provide the inference process in Algorithm~\ref{app_alg:inference_alg_procedure}

\begin{algorithm}[h!]
\caption{Inference Algorithm}
\label{app_alg:inference_alg_procedure}
\begin{algorithmic}[1]
\REQUIRE Unperturbed cell $(x_{ctr},c)$, perturbation $a$, model parameters $\{\theta,\alpha,\phi_\beta,\phi_\alpha\}$, number of samples $M$
\ENSURE Predicted cellular response $x$

\STATE Sample basal representation $z_b$ from $q_{\phi_\beta}(z_b|x_{ctr},c)$

\STATE Sample $M$ perturbation representations $\{z_{a,i}\}_{i=1}^{M}$ from $q_{\phi_\alpha}(z_a|z_b,a)$

\STATE Compute energy scores $\{E_{\alpha}(z_{a,i},z_b,a)\}_{i=1}^{M}$

\STATE Select
$
z_a^{*}
=
\arg\min_{z_{a,i}}
E_{\alpha}(z_{a,i},z_b,a)
$

\STATE Generate predicted cellular response $x$ from $p_{\theta}(x|z_b,z_a^{*})$

\RETURN $x$
\end{algorithmic}
\end{algorithm}

\subsection{Experimental Appendix}

\subsubsection{Dataset Descriptions and Settings}
\label{app_exp_part:dataset_description}

\begin{table}[h!]
\centering
\caption{\textit{(a)} Perturbation datasets. Statistics include counts of perturbed cell
across train/valid/test splits and counts of control cells, total gene vocabulary size and
HVGs, and dataset diversity in terms of perturbations, cell types, and experimental batches.}
\label{app_tab:dataset_info}
\resizebox{\textwidth}{!}{%
\begin{tabular}{lllllllllll}
\toprule
\multirow{2}{*}{Dataset} & \multirow{2}{*}{Perturbation Type} & \multicolumn{3}{c}{\#Perturbed Cells} & \multirow{2}{*}{\#Control Cells} & \multirow{2}{*}{\#Genes} & \multirow{2}{*}{\#HVGs} & \multirow{2}{*}{\#Perturbations} & \multirow{2}{*}{\#Cell Types} & \multirow{2}{*}{\#Batches} \\
\cmidrule(lr){3-5}
 & & Train & Valid & Test & & & & & & \\
\midrule
Replogle    & gene     & 572,545    & 4,825   & 26,878    & 39,165    & 6,642  & 2,000 & 2,023 & 4  & 56 \\
PBMC        & cytokine & 6,657,336  & 150,484 & 2,260,453 & 629,701   & 62,710 & 2,000 & 90    & 18 & 12 \\
Norman      & gene     & 85,568     & 1,754   & 12,268    & 11,855    & 33,694 & 2,000 & 236   & 1  & 8  \\
Tahoe100M   & drug     & 89,495,239 & 553,076 & 8,790,272 & 2,330,156 & 40,352 & 2,000 & 1,137 & 50 & 14 \\
\bottomrule
\end{tabular}%
}
\end{table}

We evaluate on four widely used single-cell perturbation benchmarks that together span
signaling, chemical and genetic interventions: Replogle~\cite{replogle2022mapping}, PBMC~\cite{yuan2026perturbdiff}, Norman2019~\cite{norman2019exploring} and Tahoe100M~\cite{zhang2025tahoe}. Following
PerturbDiff~\cite{yuan2026perturbdiff}, STATE~\cite{adduri2026predicting} and VC Benchmark~\cite{mao2026benchmarking}, we adopt the same data sources, preprocessing pipeline and held-out splits. Summary statistics are given in Table.~\ref{app_tab:dataset_info}.

\noindent \textbf{Replogle (genetic perturbation).}
The Replogle dataset~\cite{replogle2022mapping} is a landmark resource for studying
single-cell genetic perturbation effects. It integrates multiple pooled CRISPR interference
(CRISPRi)~\cite{gilbert2013crispr,nadig2025transcriptome} experiments across four human cell lines: K562, RPE1,
Jurkat and HepG2. It systematically repressing thousands of essential genes while measuring
transcriptome-wide responses. Because the targeted genes are essential, a single knockdown often
induces broad transcriptional changes that affect hundreds of downstream genes.

\noindent \textbf{PBMC (cytokine signaling).}
The PBMC dataset~\cite{yuan2026perturbdiff} is a large-scale signaling perturbation atlas of
peripheral blood mononuclear cells. It profiles approximately 10 million human cells collected
from 12 healthy donors, covering 18 immune cell types and 90 distinct cytokine treatments
alongside a PBS control condition. After quality control, about 9.7 million cells are retained,
with a mean sequencing depth of roughly 31,000 reads per cell. Many cytokines elicit strong
transcriptional responses, typically upregulating more than 50 genes, which makes this dataset a
representative testbed for studying perturbation-induced shifts at the population level.

\noindent \textbf{Tahoe100M (drug perturbation).}
Tahoe100M~\citep{zhang2025tahoe} is the largest single-cell perturbation atlas released to date.
It comprises over 100 million single-cell expression profiles measured across 50 human cancer
cell lines exposed to more than 1,100 combinations of small-molecule drug and dosage. The atlas adopts a multiplexed cell-village design during sequencing, which substantially reduces batch effects and yields high data quality. Its scale and the breadth of its cellular contexts make it a demanding benchmark for context-dependent drug response prediction.

\noindent \textbf{Norman2019 (combinatorial genetic perturbation).}
The Norman2019 dataset~\cite{norman2019exploring} is a Perturb-seq screen of combinatorial CRISPR activation (CRISPRa) in K562 cells. Its value lies not in the diversity of cellular contexts but in the structure of its perturbation space: it comprises 105 single-gene activations and 131 two-gene combinations, and both constituent genes of every combination are also activated individually elsewhere in the screen. This makes it the standard benchmark for combinatorial generalization, meaning predicting the transcriptional outcome of a gene pair from the effects of its individual members. Such predictions are far from trivial, because gene pairs frequently interact non-additively: two genes whose individual effects are mild may act synergistically, and a naive additive model that simply sums their single-gene effects fails to recover the resulting phenotype. We use the filtered version of the dataset as VC Benchmark~\cite{mao2026benchmarking}, which contains 111,445 cells.

\paragraph{Preprocessing.} We apply the same preprocessing pipeline as PerturbDiff~\cite{yuan2026perturbdiff}, STATE~\cite{adduri2026predicting} and VC Benchmark~\cite{mao2026benchmarking}. For PBMC and Tahoe100M, raw counts are first normalized per cell to a fixed library size($10^4$), followed by a log1p transformation to rescale the gene expression values into range $[0,1)$. For Replogle, we directly adopt the processed data released by STATE~\cite{adduri2026predicting}, which first filters cells and perturbations by on-target knockdown efficacy to improve the signal-to-noise ratio of perturbation effects, and
then applies the same library-size normalization and log transformation. After normalization, we
follow standard practice~\cite{wolf2018scanpy} and select the top 2,000 highly variable genes (HVGs) for each dataset
from its original transcriptome vocabulary using \texttt{scanpy.pp.highly\_variable\_genes}.
Excluding low-variance genes improves the signal-to-noise ratio without sacrificing biological
fidelity, and all models are trained and evaluated in this 2,000-dimensional HVG space. Norman2019 is distributed as raw counts and no curated gene panel is available for it, so we apply the same normalization pipeline and then select the top 2,000 HVGs from its 33,694-gene vocabulary.

\paragraph{Train/validation/test splits.}
For Replogle, PBMC and Tahoe100M, we follow the hold-out protocol of PerturbDiff~\cite{yuan2026perturbdiff} and STATE~\cite{adduri2026predicting}, which measures generalization to unseen cellular contexts. For each dataset, a subset of contexts is held out for testing, and 30\% of the perturbations measured in those held-out contexts are moved back into the training set. This partial-coverage design ensures that every perturbation is observed somewhere during training, so that the test set isolates
adaptation to a new cellular context rather than extrapolation to an unknown perturbation label.
Concretely:
\begin{itemize}
  \item \textbf{PBMC.} Four of the twelve donors are reserved for testing. 30\% of the
        perturbations from these held-out donors are moved into training, leaving the remaining
        70\% for testing. Note that the model conditions on cell type (18 categories), while
        the hold-out is taken over donors; at test time the control cells are therefore drawn
        from the held-out donors themselves.
  \item \textbf{Tahoe100M.} PCA is first performed on the pseudobulk expression profiles of the
        50 cell lines to identify distinct phenotypic clusters, from which five cell lines are
        selected as the held-out test set, again with 30\% of their perturbations returned to
        training.
  \item \textbf{Replogle.} One cell line (HepG2) is held out for testing, and the model is
        trained on the remaining three cell lines together with 30\% of the perturbations
        randomly sampled from HepG2.
\end{itemize}

\textbf{Norman2019} contains a single cell line, so holding out a cellular context is degenerate. We
therefore split over the perturbations themselves, following the VC benchmark~\cite{mao2026benchmarking}: 30\% of the two-gene combinations (39 of 131) are held out as the test set, while all 105 single-gene perturbations remain in training. This is the ``0 of 2 unseen'' generalization setting. Each held-out combination is never observed during training, yet both of its constituent genes have been perturbed individually, which isolates the ability to compose known single-gene effects into an unseen combination, rather than confounding it with extrapolation to genes for which no perturbation data exist at all. It is also what keeps the held-out combinations representable by our model: we embed a perturbation as the sum of the embeddings of its target genes, so an unseen pair is encoded from the two single-gene rows that training has already fitted. A per-label lookup table, by contrast, would have no entry for such a pair and could not make a prediction at all.

Control cells remain in the training pool under all three protocols: they are the unperturbed
reference population that the model is given at inference time, and they are therefore counted
separately from the train/validation/test splits. Finally, we hold out 2\% of the training cells
uniformly at random as a validation set. This set is used only for checkpoint selection; all
reported results come from the held-out contexts in the test split.

\subsubsection{Baseline Settings}
\label{app_exp_part:baseline_settings}

\subsubsubsection{Baseline models}

\noindent \textbf{Linear methods.}
The linear baseline~\cite{ahlmann2024deep} predicts a perturbed cell as the control cell plus the mean
expression shift of that perturbation measured on the training split,
$\hat{x} = x_{ctrl} + \Delta_a$, with the shift $\Delta_a$ taken against the
controls of the contexts in which the perturbation was observed. Despite its simplicity,
this predictor is a strong baseline on average-expression metrics and serves as the linear
reference point of the comparison. \textbf{scGPT}~\citep{cui2024scgpt} replaces the raw
expression profile with a pretrained cell representation: we use the released
\texttt{scGPT\_human} checkpoint (12 layers, $d_{\text{model}}{=}512$, 8 heads, 51.9M
parameters, pretrained on 33M cells), keep it frozen in evaluation mode, and train
only a lightweight head on top, $\hat{x} = x_{\mathrm{ctrl}} + \mathrm{MLP}([u-z])$, where
$z$ is the \texttt{<cls>} embedding of the control cell and $u$ the perturbation
embedding. Cells are tokenised exactly as during scGPT's pretraining: gene symbols mapped
through its 60{,}697-entry vocabulary, nonzero genes only, per-cell quantile binning into
51 bins, and a prepended \texttt{<cls>} token. The head's output layer is zero-initialised,
so training starts from ``predict no change''. This row answers a narrower question than
the others: whether a representation pretrained on external data already encodes how a cell
responds to a perturbation.

\noindent \textbf{VAE-based methods.} All the VAE-based methods are implemented via their official codes. \textbf{scGen}~\cite{lotfollahi2019scgen} trains an unconditional VAE on all cells. After training, the perturbation vector $\Delta_a = \mathrm{mean}\{z_a\} - \mathrm{mean}\{z_{ctr}\}$ is measured on the training split, where $z_a$ is the representation of perturbed cells and $z_{ctr}$ is the representations of controlled cells. And prediction is $\hat{x} = \mathrm{Dec}(\mathrm{Enc}(x_{\mathrm{ctrl}}) + \Delta_a)$.
\textbf{CPA}~\citep{lotfollahi2023predicting} decomposes a cell into a basal state plus additive perturbation and covariate embeddings, $z = \mathrm{Enc}(x) + u_a + u_c$, and enforces the decomposition with adversarial discriminators trained to disentangle the perturbation and the covariates of the basal latent. Prediction swaps the perturbation embedding to the corresponding one.
\textbf{stVAE}~\citep{russkikh2020style} is a style-transfer VAE: expression is decoupled into a perturbation-independent content latent and a perturbation-specific style vector, and prediction decodes the sum of the control cell's content and the target perturbation's style.
\textbf{PerturbedVAE}~\citep{jiang2026makes} splits the latent into a
perturbation-invariant part $z_i$ and a perturbation-responsive part $z_v$, and makes the
split persist with a contrastive term that anchors $z_i$ to the corresponding control cell.
The prediction mechanism is a learned conditional prior $p(z_v \mid z_i, u)$, which supplies
the response at inference time. As their official implementation suggests, CPA, stVAE, and PerturbedVAE are trained on observed cells and scGen is trained unconditionally. For a fair comparison, we adopt their official VAE architecture but unify the latent embedding size the same as ours: 512.

\noindent \textbf{Diffusion-based methods.}
\textbf{PerturbDiff}~\citep{yuan2026perturbdiff} follows its official implementation, it runs a denoising diffusion process over
the cell population rather than over individual cells. The denoiser is an MM-DiT
stack (4 blocks, 8 heads, latent dimension is setted as 512) with joint attention over the control and perturbed token
streams, a single shared timestep noises the whole population so that the diffused object
remains a distribution, and the denoising objective is the MMD between the denoised and the
real population. Inference uses 50 DDIM steps as it general leads to excellent performance.
\textbf{Squidiff}~\citep{he2026squidiff} runs a conditional DDIM in a VAE semantic latent
(dimension of $z$ is adjusted to 512 as other models) with an MLP denoiser and sinusoidal timestep embeddings, and decodes
the sampled latent. Its autoencoder is warmed-up trained alone for the first 4,000 steps, after
which the encoder is frozen so that the latent diffusion trains against a fixed target.

\noindent \textbf{Flow-based methods.}
\textbf{CellFlow}~\cite{klein2025cellflow} learns a conditional velocity field that
transports the control population to the perturbed one. Training pairs the two minibatches
by exact minibatch optimal transport under a squared-Euclidean cost, interpolates
$x_t = (1-t)x_0 + t x_1$ with $t \sim \mathcal{U}(0,1)$, and regresses
$v_\theta(x_t, t, \text{cond})$ onto $x_1 - x_0$, where the conditioning combines the
perturbation and context embeddings with the pooled control population. Inference
integrates the field from $t{=}0$ to $t{=}1$ with a Heun solver in 50 steps.

\noindent \textbf{Transformer-based method.}
\textbf{STATE}~\cite{adduri2026predicting} is a bidirectional transformer over sets of cells, where the perturbation embedding is added to every token and the model predicts the transition from each control cell. We reproduce the State Embedding (SE), State Transition (ST), and adapt the embedding size to 512, where SE uses ESM2 embedding and project into 512 dimension. For Replogle, PBMC and Tahoe100M, we use the the same training dataset for SE: CellxGene~\cite{czi2025cz} with the same reconstruction target. The learning rate is set as $1\times 10^{-5}$, weight decay is set as 0.01, batch size is set as 128 and epoch number is 12. As for Norman2019, SE is pretrained on its training set with the same training settings but 5 epochs.

\subsubsubsection{Training settings}

All the baselines follow the same data preprocess and split(train/validation/test) settings as described in Appendix~\ref{app_exp_part:dataset_description}. We use the same condition sampler and random seed for all the methods: A batch is $n$ blocks of \texttt{set\_size} cells, where each block pairs control cells of a context with real perturbed cells of one (context, perturbation) condition. Every model therefore sees the same 512 cells per step (256 on Norman2019). As for the maximum training step budget, we set 20,000 steps on Norman2019, 40,000 on Replogle, 60,000 on PBMC and 160,000 on Tahoe100M for all the methods. All models are optimised with Adam (weight decay: $10^{-6}$) under their own best learning rate. The loss function and corresponding settings of each baseline follows its official suggestion. A cosine learning-rate decay is used for PerturbDiff, Squidiff, CellFlow and STATE, and a constant rate otherwise. For the two diffusion baselines and CellFlow we additionally keep an exponential moving average (EMA) of the weights (decay 0.999) and validate, checkpoint and evaluate that average, since EMA is part of how diffusion and flow-matching models are trained in practice and their reported sample quality is not what the raw iterate gives. Model-specific settings are listed in Table.~\ref{app_tab:baseline_info}.

\begin{table}[t]
\caption{\textbf{Model-specific training settings.} All other settings including batch shape, step
budget, optimiser family, gradient clipping, checkpoint selection and evaluation are
shared across models and identical to those used for PerturbEnergy.}
\label{app_tab:baseline_info}
\vskip 0.15in
\begin{center}
\begin{small}
\begin{tabular}{llccl}
\toprule
Model & Family & LR & EMA & Model-specific settings \\
\midrule
Linear       & linear        & ---                & ---   & closed form; no gradient training \\
scGPT        & foundation    & $1\!\times\!10^{-3}$ & ---   & encoder frozen; head $512{+}512\!\to\!1024\!\to\!G$ \\
\midrule
scGen        & VAE           & $1\!\times\!10^{-3}$ & ---   & $z=512$, hidden $800{\times}2$, no conditioning \\
CPA          & VAE           & $1\!\times\!10^{-3}$ & ---   & $z=512$, adv.\ weight 50, 3 adversary steps \\
stVAE        & VAE           & $1\!\times\!10^{-3}$ & ---   & $z=512$, latent loss 1.0, VAE loss 0.1 \\
PerturbedVAE & VAE           & $1\!\times\!10^{-3}$ & ---   & $z_i=z_v=512$, contrastive weight 1.0 \\
\midrule
PerturbDiff  & diffusion     & $3\!\times\!10^{-4}$ & 0.999 & $z=512$, 4 MM-DiT blocks, 8 heads, $T=1000$ \\
Squidiff     & diffusion     & $1\!\times\!10^{-3}$ & 0.999 & $z=512$, $T=1000$, 4k-step AE warm-up \\
CellFlow     & flow matching & $1\!\times\!10^{-3}$ & 0.999 & Solver 50 steps, depth 4, OT coupling \\
STATE        & transformer   & $3\!\times\!10^{-4}$ & ---   & $z=512$, depth 4, 8 heads, set size 128 \\
\bottomrule
\end{tabular}
\end{small}
\end{center}
\vskip -0.1in
\end{table}

\subsubsubsection{Validation and checkpoint selection}

Validation is identical for every baseline model and PerturbEnergy. Every 2,500 steps we validate on the validation batches where the score is the Mean Squared Error (MSE) of the predicted perturbation mean against the real one. The checkpoint with the lowest validation score is the one evaluated on the test split; for models with EMA, the averaged weights are the ones both validated and saved.

\subsubsection{Metrics}
\label{app_exp:metrics_description}

\subsubsubsection{Metric Definitions}

\noindent \textbf{Preliminary Notation for Metrics Description.}
A \emph{condition} $\lambda=(c,a)$ pairs a cellular context $c$ with a perturbation $a$;
the test split contains $M$ such conditions, and each metric below is computed per
condition and then averaged over the $M$ of them. For condition $\lambda$, we note
$X^{\lambda}\in\mathbb{R}^{N\times G}$ for the real perturbed cells,
$\widetilde{X}^{\lambda}\in\mathbb{R}^{n\times G}$ for the control cells of the same
context $c$, and $\hat{X}^{\lambda}\in\mathbb{R}^{n\times G}$ for the model's perturbation
prediction, over $G=2{,}000$ highly variable genes indexed by $g\in\{1,\dots,G\}$. A
subscript $i$ on a matrix denotes its $i$-th row, i.e.\ a single cell in
$\mathbb{R}^{G}$. Inference maps control cells to perturbed ones, so the prediction
contains the same number of cells as the control population, $n$. Pseudobulk profiles are
the column means, which are vectors in $\mathbb{R}^{G}$:
\begin{equation}
\mu=\tfrac{1}{N}\textstyle\sum_{i}X^{\lambda}_{i},\qquad
\widetilde{\mu}=\tfrac{1}{n}\textstyle\sum_{i}\widetilde{X}^{\lambda}_{i},\qquad
\hat{\mu}=\tfrac{1}{n}\textstyle\sum_{i}\hat{X}^{\lambda}_{i},
\end{equation}
the ground true perturbation effect is:
\begin{equation}
\Delta=\mu-\widetilde{\mu},
\end{equation}
the model predicted perturbation effect is:
\begin{equation}
\hat{\Delta}=\hat{\mu}-\widetilde{\mu}.
\end{equation}
Since the data is log1p-normalised, $\Delta$ and $\hat{\Delta}$ are log fold
changes.

\noindent \textbf{Differentially expressed gene (DEG).}
The DEG set is derived from the real data so all methods are scored on the same genes. Each gene is tested with a two-sided Wilcoxon rank-sum (Mann--Whitney $U$) test between $X^{\lambda}$ and $\widetilde{X}^{\lambda}$;
genes are ordered by ascending $p$-value with $|\Delta_g|$, and $\mathcal{S}_k$ denotes the top-$k$ genes.

\subsubsubsection{HVG Group Metrics}

\noindent \textbf{Correlation-based metrics.} These metrics ask how closely the predicted expression tracks the ground truth (correlation), which has two forms: measured on the expression level ($\hat{\mu}$ against $\mu$) corresponding to the overall transcriptional state, and measured on the perturbation effect ($\hat{\Delta}$ against $\Delta$). The metrics we use are defined as follows:

\textbf{PCC:} Pearson correlation between the predicted and real pseudobulk profiles;
measures agreement in overall expression level. Let $\mathrm{corr}(\cdot,\cdot)$ denote the Pearson correlation:
\begin{equation}
\mathrm{PCC}=\mathrm{corr}\!\left(\hat{\mu},\mu\right)
=\frac{\sum_{g}(\hat{\mu}_g-\bar{\hat{\mu}})(\mu_g-\bar{\mu})}
{\sqrt{\sum_{g}(\hat{\mu}_g-\bar{\hat{\mu}})^{2}}\sqrt{\sum_{g}(\mu_g-\bar{\mu})^{2}}}
\end{equation}

\textbf{PDCorr:} the same Pearson correlation taken on the perturbation effect rather than the level, so an accurate baseline expression cannot inflate it. Here $\bar{\Delta}$ and $\bar{\hat{\Delta}}$ are the scalar means of the two effect vectors.
\begin{equation}
\mathrm{PDCorr}=\mathrm{corr}\!\left(\hat{\Delta},\Delta\right)
=\frac{\sum_{g}(\hat{\Delta}_g-\bar{\hat{\Delta}})(\Delta_g-\bar{\Delta})}
{\sqrt{\sum_{g}(\hat{\Delta}_g-\bar{\hat{\Delta}})^{2}}
 \sqrt{\sum_{g}(\Delta_g-\bar{\Delta})^{2}}}
\end{equation}

\textbf{R$^2$:} fraction of variance in the real profile explained by the prediction,
relative to the empirical mean baseline. 
\begin{equation}
R^{2}=1-\frac{\sum_{g}(\mu_g-\hat{\mu}_g)^{2}}{\sum_{g}(\mu_g-\bar{\mu})^{2}}
\end{equation}

\textbf{LFCSpear:} Spearman rank correlation $\rho_{s}$ of the log fold changes, that
is the Pearson correlation computed on the ranks of the entries instead of their values,
where $\operatorname{rank}(v)\in\mathbb{R}^{G}$ assigns each gene its position when $v$ is
sorted. Unlike PDCorr it depends only on the ordering of genes by effect, not on the
magnitudes.
\begin{equation}
\mathrm{LFCSpear}=\rho_{s}\!\left(\hat{\Delta},\Delta\right)
=\mathrm{corr}\!\left(\operatorname{rank}(\hat{\Delta}),\operatorname{rank}(\Delta)\right)
\end{equation}

\textbf{LogFC\_cos:} cosine similarity of the two effect vectors, i.e.\ whether the
response points in the right direction irrespective of its scale.
\begin{equation}
\mathrm{LogFC\_cos}=\frac{\langle\hat{\Delta},\Delta\rangle}
{\|\hat{\Delta}\|_{2}\,\|\Delta\|_{2}}
=\frac{\sum_{g}\hat{\Delta}_g\Delta_g}
{\sqrt{\sum_{g}\hat{\Delta}_g^{2}}\sqrt{\sum_{g}\Delta_g^{2}}}
\end{equation}

\textbf{DirAgr:} fraction of genes whose direction of change is predicted correctly.
The average is taken over $\mathcal{G}^{+}=\{g:\Delta_g\neq 0\}$: with log-normalised
counts many genes have a true effect of exactly zero in a small population, and those can
only be "agreed" with by predicting exactly zero, which would pull every model towards
the do-nothing baseline. Let $\operatorname{sign}(\cdot)\in\{-1,0,+1\}$ denotes the sign of a scalar and
$\mathbb{1}[\cdot]$ denotes the indicator function equal to one when its argument holds and zero
otherwise.
\begin{equation}
\mathrm{DirAgr}=\frac{1}{|\mathcal{G}^{+}|}\sum_{g\in\mathcal{G}^{+}}
\mathbb{1}\!\left[\operatorname{sign}(\hat{\Delta}_g)=\operatorname{sign}(\Delta_g)\right]
\end{equation}

\textbf{PDS-$\ell_1$/$\ell_2$/cos:} perturbation discrimination score. A prediction should resemble its own ground truth more than any other condition's. Let $\hat{\Delta}_{\lambda}$ be the predicted effect for condition $\lambda$ and $\Delta_{p}$ the real effect of condition $p$. The rank $r_{\lambda}\in\{0,\dots,M-1\}$ counts how many of the other $M-1$ ground truths lie closer to $\hat{\Delta}_{\lambda}$ than its own does, and dividing by $M-1$ normalises it to $[0,1]$:
\begin{equation}
\mathrm{PDS}=\frac{1}{M}\sum_{\lambda=1}^{M}\frac{r_{\lambda}}{M-1},
\qquad
r_{\lambda}=\sum_{p\neq\lambda}\mathbb{1}\!\left[d(\hat{\Delta}_{\lambda},\Delta_{p})
<d(\hat{\Delta}_{\lambda},\Delta_{\lambda})\right],
\end{equation}
with distance the $\ell_1$, $\ell_2$ or cosine distance. \textbf{Lower is better}: $0$ means every prediction is nearest its own ground truth and $0.5$ is chance. (Works reporting $1-\mathrm{PDS}$ invert the
direction.)

\noindent \textbf{DEG distinction metrics.} These metrics ask a different question from the correlation measures: whether the model can identify the correct DEGs. Let $\mathcal{S}_k$ be the true top-$k$ DE genes and $\hat{\mathcal{S}}_k$ the $k$ genes with the largest predicted $|\hat{\Delta}_g|$; we use $k=50$. These metrics include: DEPrec, DERec, DEOverlap, AUROC and AUPRC.

\textbf{DEPrec, DERec, DEOverlap:} precision, recall and Jaccard index of the retrieved DEG set, where $|\cdot|$ is set cardinality and $\cap,\cup$ are intersection and union. Because the two sets are built with the same size ($|\hat{\mathcal{S}}_k|=|\mathcal{S}_k|=k$), precision and recall coincide here.
\begin{equation}
\mathrm{DEPrec}=\frac{|\mathcal{S}_k\cap\hat{\mathcal{S}}_k|}{|\hat{\mathcal{S}}_k|},
\quad
\mathrm{DERec}=\frac{|\mathcal{S}_k\cap\hat{\mathcal{S}}_k|}{|\mathcal{S}_k|},
\quad
\mathrm{DEOverlap}=\frac{|\mathcal{S}_k\cap\hat{\mathcal{S}}_k|}
{|\mathcal{S}_k\cup\hat{\mathcal{S}}_k|}
\end{equation}

\textbf{AUROC, AUPRC:} Genes in $\mathcal{S}_k$ are labelled positive, all remaining $G-k$ genes negative, and every gene is scored by $|\hat{\Delta}_g|$; we report the area under the ROC curve and the average precision.

\noindent \textbf{Distance-based metrics.} These metrics evaluate the distance between the prediction and real expression. We use Root Mean Square Error (RMSE), Mean Square Error (MSE), Mean Absolute Error (MAE), Energy Distance (EDist) and Sinkhorn Divergence (Sinkhorn).

\textbf{RMSE, MSE, MAE:} error between the predicted and real pseudobulk.
\begin{equation}
\mathrm{MSE}=\frac{1}{G}\sum_{g}(\hat{\mu}_g-\mu_g)^{2},\quad
\mathrm{RMSE}=\sqrt{\mathrm{MSE}},\quad
\mathrm{MAE}=\frac{1}{G}\sum_{g}|\hat{\mu}_g-\mu_g|
\end{equation}

\textbf{EDist:} energy distance between the two populations, zero if and only if the
distributions coincide.
\begin{equation}
\mathrm{EDist}=\frac{2}{nN}\sum_{i,j}\|\hat{x}_i-x_j\|_{2}
-\frac{1}{n^{2}}\sum_{i,j}\|\hat{x}_i-\hat{x}_j\|_{2}
-\frac{1}{N^{2}}\sum_{i,j}\|x_i-x_j\|_{2}
\end{equation}

\textbf{Sinkhorn:} entropic optimal-transport divergence, debiased so that
$\mathrm{S}(X,X)=0$:
\begin{equation}
\mathrm{S}_{\varepsilon}(\hat{X},X)=\mathrm{OT}_{\varepsilon}(\hat{X},X)
-\tfrac{1}{2}\mathrm{OT}_{\varepsilon}(\hat{X},\hat{X})
-\tfrac{1}{2}\mathrm{OT}_{\varepsilon}(X,X),
\end{equation}
\begin{equation}
\mathrm{OT}_{\varepsilon}(A,B)=\min_{P\in\Pi(a,b)}\langle P,C\rangle
+\varepsilon\,\mathrm{KL}(P\,\|\,a b^{\top}),
\qquad C_{ij}=\|A_i-B_j\|_{2}^{2}/s,
\end{equation}
with uniform marginals, $\varepsilon=0.05$, and $s$ the mean squared pairwise distance
within the control population, which makes the value comparable across datasets. Both
distances are evaluated on a fixed random subsample of the populations (512 cells for
EDist, 256 for Sinkhorn) to bound their quadratic cost.

\subsubsubsection{DEG Group Metrics}

This metric group evaluates how well a model predict the biologically meaningful gene expression specifically influenced by the perturbation. Different from the HVG group describing the general expression pattern, the DEG group is directly related to the perturbation. Since perturbation leaves most of the transcriptome nearly untouched, so genome-wide averages are dominated by genes that carry no signal and a model can score well while missing the
response entirely. The DEG group therefore restricts the error and correlation measures to
the true top-$k$ DE genes, $k\in\{50,100,200\}$. With $\mathcal{S}_k$ fixed in advance from
the real data, this group no longer asks whether the responding genes were found. We calculate the RMSE, MAE, PCC and R$^2$ on the DEGs.
\begin{equation}
\mathrm{MSE}^{(k)}=\frac{1}{k}\sum_{g\in\mathcal{S}_k}(\hat{\mu}_g-\mu_g)^{2},
\quad
\mathrm{RMSE}^{(k)}=\sqrt{\mathrm{MSE}^{(k)}},
\qquad
\mathrm{MAE}^{(k)}=\frac{1}{k}\sum_{g\in\mathcal{S}_k}|\hat{\mu}_g-\mu_g|,
\end{equation}
\begin{equation}
\mathrm{PCC}^{(k)}=\mathrm{corr}\!\left(\hat{\mu}_{\mathcal{S}_k},
\mu_{\mathcal{S}_k}\right),
\qquad
R^{2(k)}=1-\frac{\sum_{g\in\mathcal{S}_k}(\mu_g-\hat{\mu}_g)^{2}}
{\sum_{g\in\mathcal{S}_k}(\mu_g-\bar{\mu}_{\mathcal{S}_k})^{2}}.
\end{equation}

\subsubsection{Other VAE Backbone Integration}
\label{app_exp_part:other_vae_backbone}
Since our PerturbEnergy is backbone-agnostic, we integrate it with two additional VAE backbones, scGen and PerturbedVAE, with the results reported in Table~\ref{app_exp_part:tab_other_backbone_rmse}. Since scGen does not explicitly learn a separate perturbation representation, we adapt the energy function from $E_{\alpha}(z_a,z_b,a)$ to $E_{\alpha}(z,a)$, which computes the cosine similarity between the latent representation and the perturbation embedding. For PerturbedVAE, we directly apply $E_{\alpha}(z_a,z_b,a)$ to its basal and perturbation representations. Both integrations preserve the original architectures and training objectives, with only the energy-based optimization term in Eq.~\eqref{method_eq:j_gradient} added. The results show that integrating PerturbEnergy consistently improves prediction performance, particularly for scGen. Specifically, PerturbEnergy reduces the top-50 DEG RMSE of scGen by 18.6\% on Replogle (from 0.2336 to 0.1902) and 24.1\% on Tahoe100M (from 0.4462 to 0.3387). These substantial improvements suggest that explicitly incorporating perturbation-specific information is particularly beneficial for models without a dedicated perturbation representation. PerturbEnergy also improves PerturbedVAE, although the gains are relatively smaller, potentially because PerturbedVAE already captures some perturbation-specific information. Collectively, these results demonstrate the flexibility of PerturbEnergy across different VAE backbones and its effectiveness in improving cellular response prediction.

\begin{table}[h!]
\centering
\small
\caption{Backbone transfer: RMSE ($\downarrow$) over the true top-$K$ DEGs when the PerturbEnergy objective is applied to two other backbones. Values are the mean over held-out perturbation groups $\pm$ the s.e.m. across those groups. Within each backbone block, the best entry per column is in \textbf{bold}.}
\label{app_exp_part:tab_other_backbone_rmse}
\setlength{\tabcolsep}{4pt}
\resizebox{\textwidth}{!}{%
\begin{tabular}{lcccccc}
\toprule
\multirow{2}{*}{Method} & \multicolumn{3}{c}{Replogle} & \multicolumn{3}{c}{Tahoe100M} \\
\cmidrule(lr){2-4} \cmidrule(lr){5-7}
 & Top-50 & Top-100 & Top-200 & Top-50 & Top-100 & Top-200 \\
\midrule
scGen & 0.2336\,\std{0.0078} & 0.1992\,\std{0.0065} & 0.1661\,\std{0.0051} & 0.4462\,\std{0.0081} & 0.3818\,\std{0.0059} & 0.3018\,\std{0.0046} \\
\quad + PerturbEnergy (Lang) & 0.1931\,\std{0.0064} & 0.1631\,\std{0.0052} & 0.1356\,\std{0.0041} & 0.3457\,\std{0.0061} & 0.3023\,\std{0.0044} & 0.2421\,\std{0.0034} \\
\quad + PerturbEnergy (Mis) & \textbf{0.1902}\,\std{0.0063} & \textbf{0.1607}\,\std{0.0051} & \textbf{0.1341}\,\std{0.0040} & \textbf{0.3387}\,\std{0.0059} & \textbf{0.2966}\,\std{0.0043} & \textbf{0.2379}\,\std{0.0033} \\
\midrule
PerturbedVAE & 0.2130\,\std{0.0071} & 0.1817\,\std{0.0058} & 0.1514\,\std{0.0047} & 0.3984\,\std{0.0075} & 0.3476\,\std{0.0055} & 0.2791\,\std{0.0042} \\
\quad + PerturbEnergy (Lang) & \textbf{0.1966}\,\std{0.0058} & \textbf{0.1692}\,\std{0.0048} & 0.1424\,\std{0.0038} & 0.3892\,\std{0.0073} & \textbf{0.3324}\,\std{0.0053} & \textbf{0.2653}\,\std{0.0040} \\
\quad + PerturbEnergy (Mis) & 0.1969\,\std{0.0059} & 0.1694\,\std{0.0049} & \textbf{0.1422}\,\std{0.0039} & \textbf{0.3759}\,\std{0.0068} & 0.3327\,\std{0.0050} & 0.2692\,\std{0.0039} \\
\bottomrule
\addlinespace[6pt]
\toprule
\multirow{2}{*}{Method} & \multicolumn{3}{c}{PBMC} & \multicolumn{3}{c}{Norman2019} \\
\cmidrule(lr){2-4} \cmidrule(lr){5-7}
 & Top-50 & Top-100 & Top-200 & Top-50 & Top-100 & Top-200 \\
\midrule
scGen & 0.0960\,\std{0.0049} & 0.0787\,\std{0.0040} & 0.0625\,\std{0.0031} & 0.1729\,\std{0.0102} & 0.1404\,\std{0.0082} & 0.1108\,\std{0.0065} \\
\quad + PerturbEnergy (Lang) & 0.0706\,\std{0.0021} & 0.0600\,\std{0.0017} & 0.0495\,\std{0.0014} & \textbf{0.1560}\,\std{0.0127} & \textbf{0.1265}\,\std{0.0099} & \textbf{0.0990}\,\std{0.0073} \\
\quad + PerturbEnergy (Mis) & \textbf{0.0703}\,\std{0.0022} & \textbf{0.0597}\,\std{0.0018} & \textbf{0.0493}\,\std{0.0015} & 0.1645\,\std{0.0137} & 0.1329\,\std{0.0107} & 0.1030\,\std{0.0079} \\
\midrule
PerturbedVAE & 0.1071\,\std{0.0032} & 0.0925\,\std{0.0025} & 0.0777\,\std{0.0020} & 0.1400\,\std{0.0123} & 0.1186\,\std{0.0100} & 0.0953\,\std{0.0075} \\
\quad + PerturbEnergy (Lang) & \textbf{0.0987}\,\std{0.0031} & \textbf{0.0849}\,\std{0.0024} & \textbf{0.0710}\,\std{0.0019} & \textbf{0.1312}\,\std{0.0115} & 0.1109\,\std{0.0093} & 0.0919\,\std{0.0071} \\
\quad + PerturbEnergy (Mis) & 0.1007\,\std{0.0028} & 0.0880\,\std{0.0022} & 0.0740\,\std{0.0017} & 0.1379\,\std{0.0108} & \textbf{0.1072}\,\std{0.0087} & \textbf{0.0897}\,\std{0.0066} \\
\bottomrule
\end{tabular}}
\end{table}

\subsubsection{Influence of Training Scale and Model Dimension}
\label{app_exp_part:training_model_scale}

\begin{figure*}[h!]
\centering
\includegraphics[width=\textwidth]{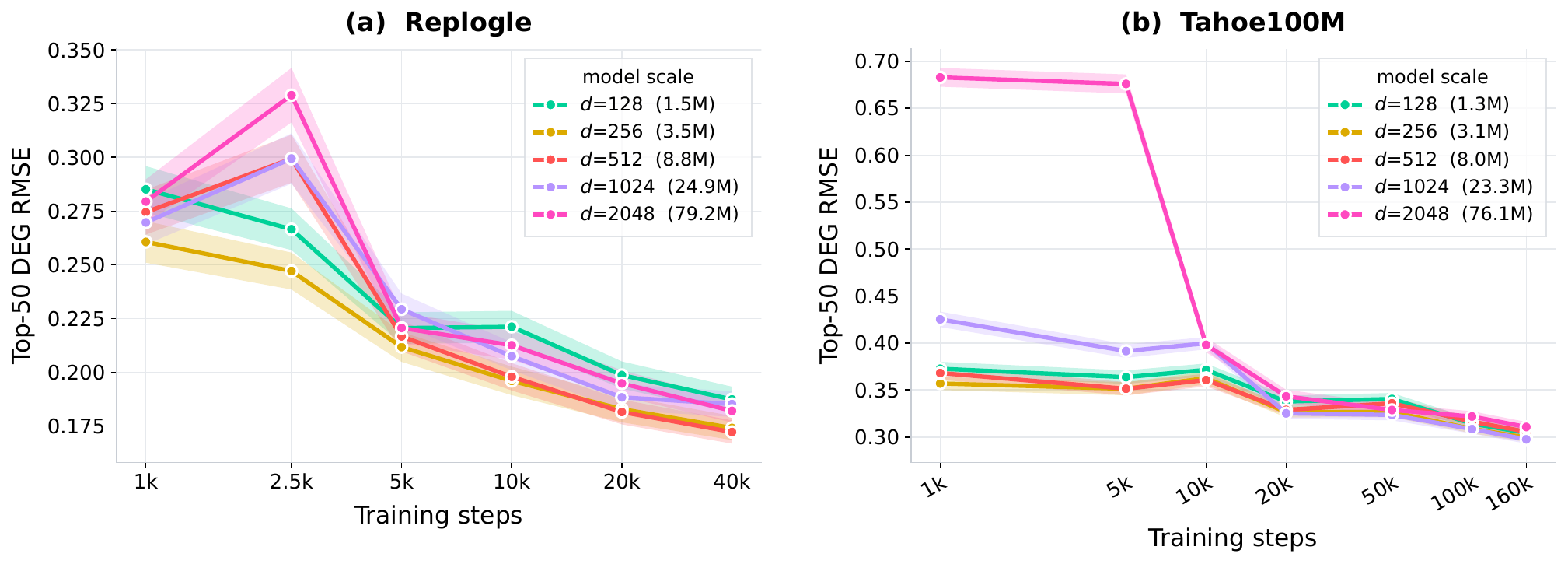}
\caption{Influence of training steps and model dimension on Replogle and Tahoe100M.}
\label{app_fig:traning_model_scale}
\end{figure*}

We investigate the effects of training scale and latent dimension on Replogle and Tahoe100M, using RMSE on the top-50 DEGs as the evaluation metric. The results are shown in Figure~\ref{app_fig:traning_model_scale}. We observe that insufficient training steps consistently degrade performance across all latent dimensions and datasets, suggesting that learning generalizable perturbation effects requires adequate training. Moreover, increasing the latent dimension does not necessarily improve performance, with the best results generally achieved when $d$ ranges from 256 to 1024. For example, $d=512$ generally achieves the best performance on Replogle. One possible explanation is that larger latent dimensions increase model capacity but do not necessarily improve the ability to capture perturbation-specific effects. These findings also suggest that relying on model expressiveness to capture perturbation effect is unstable.

\subsubsection{Visualization}
\label{app_exp_part:visualization}
We visualize the latent representations learned by PerturbEnergy and two VAE-based methods, scGen and PerturbedVAE, alongside the original gene expression profiles. We use the test set of Tahoe100M to visualize and the UMAP visualization results are shown in Figure~\ref{app_fig:visualization}. We observe that perturbation-specific information is largely obscured in the original expression space. Although scGen captures some perturbation effects, its representations remain largely indistinguishable across perturbations. PerturbedVAE improves the separation of perturbation-specific representations, but some perturbations, such as Betamethasone dipropionate and Triamcinolone, remain mixed. In contrast, PerturbEnergy exhibits clearer separation between perturbations, demonstrating its ability to preserve perturbation-specific information in the latent space.

\begin{figure*}[h!]
\centering
\includegraphics[width=\textwidth]{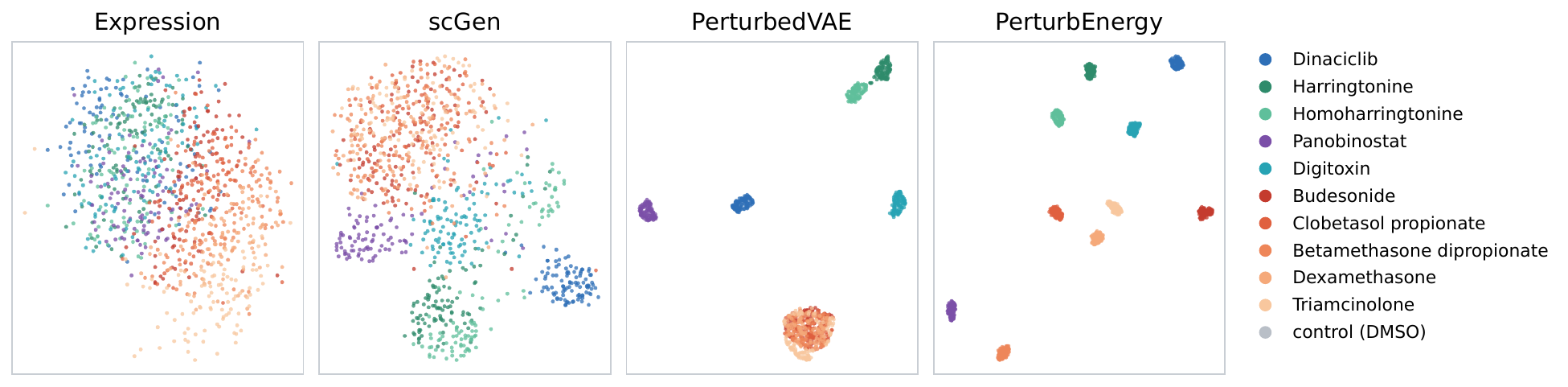}
\caption{Visualization of representation in Tahoe100M.}
\label{app_fig:visualization}
\end{figure*}

\subsubsection{Other Metrics on DEG Group}
\label{app_exp:deg_group_res}

We report the MAE, PCC and R$^2$ on Top-50/100/200 differential expression genes (DEGs) in Table~\ref{app_exp_tab:deg_mae}, Table~\ref{app_exp_tab:deg_pcc} and Table~\ref{app_exp_tab:deg_r2} respectively. Firstly, about our method, we have the consistent observation with the RMSE where PerturbEnergy achieves the top-2 performance on every dataset except Norman2019 even under simple MLP implementations. Specifically, in terms of MAE, the margins over the strongest baseline are largest. For example, PerturbEnergy-Mis reduces MAE from 0.1675 (Squidiff) to 0.1434 on Replogle (16.8\% improvement); from 0.2648 (stVAE) to 0.2495 on Tahoe100M; on PBMC, PerturbEnergy-Lang reduces it from 0.0511 (stVAE) to 0.0453 (11.4\% improvement). As for PCC and $R^2$, we have the same ordering observations. Secondly, for the remaining baselines, we find that: (i) Linear methods still struggle on the DEG-related prediction, for example, in MAE it achives 0.2114 compared to the best 0.1434. This again shows the regression is difficult to capture the perturbation effects; (ii) For STATE, PerturbDiff, CellFlow and Squidiff, their performance also vary a lot. In terms of MAE, PerturbDiff achieves best on Norman2019 but struggles on Replogle; Squidiff achieves competitive performance on Replogle but struggles on Tahoe100M; Cellflow perform well on PBMC and Norman2019 but get declined performance on Tahoe100M. These observations show that relying on model expressiveness to capture perturbation effect is unstable. (iii) For the representation learning baselines, especially these learn the perturbation-aware representations, we find they generally can achieve competitive performance. For example in top-50 MAE, PerturbedVAE gets 0.1812 on Replogle and 0.0703 on PBMC, which is the best among other baselines. This prove the effectiveness of modeling perturbation information.

\begin{table}[h!]
\centering
\small
\caption{MAE ($\downarrow$) over the true top-$K$ DEGs. Values are the mean over held-out perturbation groups $\pm$ the s.e.m. across those groups. Best in \textbf{bold}, second best \underline{underlined}.}
\label{tab:deg-mae}
\setlength{\tabcolsep}{4pt}
\resizebox{\textwidth}{!}{%
\begin{tabular}{lcccccc}
\toprule
\multirow{2}{*}{Method} & \multicolumn{3}{c}{Replogle} & \multicolumn{3}{c}{Tahoe100M} \\
\cmidrule(lr){2-4} \cmidrule(lr){5-7}
 & Top-50 & Top-100 & Top-200 & Top-50 & Top-100 & Top-200 \\
\midrule
Linear & 0.2282\,\std{0.0076} & 0.1963\,\std{0.0062} & 0.1634\,\std{0.0049} & 0.3788\,\std{0.0069} & 0.3169\,\std{0.0046} & 0.2246\,\std{0.0034} \\
Linear (scGPT) & 0.2114\,\std{0.0076} & 0.1757\,\std{0.0060} & 0.1418\,\std{0.0046} & 0.3092\,\std{0.0057} & 0.2624\,\std{0.0039} & 0.1884\,\std{0.0029} \\
\midrule
CellFlow & 0.1925\,\std{0.0063} & 0.1615\,\std{0.0051} & 0.1306\,\std{0.0039} & 0.3280\,\std{0.0059} & 0.2767\,\std{0.0039} & 0.1975\,\std{0.0029} \\
Squidiff & 0.1675\,\std{0.0047} & 0.1433\,\std{0.0037} & 0.1205\,\std{0.0029} & 0.3077\,\std{0.0061} & 0.2785\,\std{0.0042} & 0.2116\,\std{0.0030} \\
PerturbDiff & 0.2304\,\std{0.0092} & 0.1904\,\std{0.0075} & 0.1519\,\std{0.0059} & 0.2878\,\std{0.0062} & 0.2415\,\std{0.0042} & 0.1748\,\std{0.0031} \\
STATE & 0.1704\,\std{0.0058} & 0.1424\,\std{0.0046} & 0.1162\,\std{0.0035} & 0.2787\,\std{0.0037} & 0.2394\,\std{0.0025} & 0.1734\,\std{0.0019} \\
\midrule
scGen & 0.1985\,\std{0.0071} & 0.1665\,\std{0.0058} & 0.1347\,\std{0.0044} & 0.3654\,\std{0.0076} & 0.3040\,\std{0.0051} & 0.2165\,\std{0.0038} \\
CPA & 0.3164\,\std{0.0126} & 0.2615\,\std{0.0105} & 0.2067\,\std{0.0081} & 0.4564\,\std{0.0072} & 0.4047\,\std{0.0047} & 0.2977\,\std{0.0037} \\
stVAE & 0.1797\,\std{0.0065} & 0.1455\,\std{0.0049} & 0.1159\,\std{0.0036} & 0.2648\,\std{0.0051} & 0.2263\,\std{0.0033} & 0.1646\,\std{0.0025} \\
PerturbedVAE & 0.1812\,\std{0.0066} & 0.1509\,\std{0.0054} & 0.1216\,\std{0.0042} & 0.3160\,\std{0.0068} & 0.2666\,\std{0.0047} & 0.1918\,\std{0.0034} \\
\midrule
PerturbEnergy-Lang & \underline{0.1541}\,\std{0.0055} & \underline{0.1281}\,\std{0.0044} & \underline{0.1044}\,\std{0.0033} & \underline{0.2503}\,\std{0.0045} & \underline{0.2131}\,\std{0.0030} & \underline{0.1550}\,\std{0.0022} \\
PerturbEnergy-Mis & \textbf{0.1434}\,\std{0.0051} & \textbf{0.1196}\,\std{0.0040} & \textbf{0.0979}\,\std{0.0030} & \textbf{0.2495}\,\std{0.0045} & \textbf{0.2115}\,\std{0.0030} & \textbf{0.1531}\,\std{0.0022} \\
\bottomrule
\addlinespace[6pt]
\toprule
\multirow{2}{*}{Method} & \multicolumn{3}{c}{PBMC} & \multicolumn{3}{c}{Norman2019} \\
\cmidrule(lr){2-4} \cmidrule(lr){5-7}
 & Top-50 & Top-100 & Top-200 & Top-50 & Top-100 & Top-200 \\
\midrule
Linear & 0.0836\,\std{0.0042} & 0.0672\,\std{0.0033} & 0.0515\,\std{0.0025} & 0.3285\,\std{0.0244} & 0.2335\,\std{0.0189} & 0.1506\,\std{0.0127} \\
Linear (scGPT) & 0.0780\,\std{0.0041} & 0.0619\,\std{0.0032} & 0.0466\,\std{0.0024} & 0.2325\,\std{0.0072} & 0.1461\,\std{0.0057} & 0.1086\,\std{0.0039} \\
\midrule
CellFlow & 0.0654\,\std{0.0021} & 0.0511\,\std{0.0015} & 0.0376\,\std{0.0011} & 0.1264\,\std{0.0127} & 0.0977\,\std{0.0102} & 0.0699\,\std{0.0071} \\
Squidiff & 0.0855\,\std{0.0044} & 0.0692\,\std{0.0034} & 0.0532\,\std{0.0026} & 0.1513\,\std{0.0076} & 0.1257\,\std{0.0062} & 0.0966\,\std{0.0048} \\
PerturbDiff & 0.0568\,\std{0.0025} & 0.0455\,\std{0.0018} & 0.0349\,\std{0.0014} & \textbf{0.0950}\,\std{0.0068} & \textbf{0.0754}\,\std{0.0051} & \textbf{0.0563}\,\std{0.0036} \\
STATE & 0.0738\,\std{0.0028} & 0.0580\,\std{0.0022} & 0.0431\,\std{0.0017} & 0.1138\,\std{0.0100} & 0.0885\,\std{0.0074} & 0.0644\,\std{0.0048} \\
\midrule
scGen & 0.0712\,\std{0.0045} & 0.0546\,\std{0.0036} & 0.0396\,\std{0.0027} & 0.1312\,\std{0.0080} & 0.1019\,\std{0.0063} & 0.0738\,\std{0.0046} \\
CPA & 0.1009\,\std{0.0057} & 0.0769\,\std{0.0043} & 0.0559\,\std{0.0031} & 0.3629\,\std{0.0250} & 0.2580\,\std{0.0191} & 0.1654\,\std{0.0128} \\
stVAE & 0.0511\,\std{0.0018} & 0.0403\,\std{0.0014} & 0.0306\,\std{0.0010} & 0.1664\,\std{0.0134} & 0.1228\,\std{0.0101} & 0.0842\,\std{0.0066} \\
PerturbedVAE & 0.0703\,\std{0.0026} & 0.0559\,\std{0.0019} & 0.0429\,\std{0.0014} & 0.1126\,\std{0.0108} & 0.0923\,\std{0.0085} & 0.0690\,\std{0.0058} \\
\midrule
PerturbEnergy-Lang & \textbf{0.0453}\,\std{0.0016} & \textbf{0.0362}\,\std{0.0012} & \textbf{0.0282}\,\std{0.0009} & 0.1129\,\std{0.0095} & 0.0876\,\std{0.0068} & 0.0634\,\std{0.0045} \\
PerturbEnergy-Mis & \underline{0.0467}\,\std{0.0016} & \underline{0.0376}\,\std{0.0012} & \underline{0.0292}\,\std{0.0009} & \underline{0.1075}\,\std{0.0091} & \underline{0.0842}\,\std{0.0065} & \underline{0.0614}\,\std{0.0044} \\
\bottomrule
\end{tabular}}
\label{app_exp_tab:deg_mae}
\end{table}

\begin{table}[h!]
\centering
\small
\caption{PCC ($\uparrow$) over the true top-$K$ DEGs. Values are the mean over held-out perturbation groups $\pm$ the s.e.m. across those groups. Best in \textbf{bold}, second best \underline{underlined}.}
\label{tab:deg-pcc}
\setlength{\tabcolsep}{4pt}
\resizebox{\textwidth}{!}{%
\begin{tabular}{lcccccc}
\toprule
\multirow{2}{*}{Method} & \multicolumn{3}{c}{Replogle} & \multicolumn{3}{c}{Tahoe100M} \\
\cmidrule(lr){2-4} \cmidrule(lr){5-7}
 & Top-50 & Top-100 & Top-200 & Top-50 & Top-100 & Top-200 \\
\midrule
Linear & 0.9611\,\std{0.0024} & 0.9676\,\std{0.0018} & 0.9712\,\std{0.0015} & 0.9202\,\std{0.0031} & 0.9323\,\std{0.0022} & 0.9456\,\std{0.0015} \\
Linear (scGPT) & 0.9692\,\std{0.0020} & 0.9753\,\std{0.0015} & 0.9789\,\std{0.0012} & 0.9406\,\std{0.0026} & 0.9480\,\std{0.0019} & 0.9573\,\std{0.0013} \\
\midrule
CellFlow & 0.9759\,\std{0.0017} & 0.9809\,\std{0.0012} & 0.9840\,\std{0.0009} & 0.9372\,\std{0.0030} & 0.9460\,\std{0.0021} & 0.9563\,\std{0.0015} \\
Squidiff & 0.9760\,\std{0.0013} & 0.9795\,\std{0.0009} & 0.9813\,\std{0.0008} & 0.9116\,\std{0.0034} & 0.9145\,\std{0.0025} & 0.9250\,\std{0.0019} \\
PerturbDiff & 0.9644\,\std{0.0025} & 0.9720\,\std{0.0019} & 0.9768\,\std{0.0015} & 0.9443\,\std{0.0036} & 0.9521\,\std{0.0026} & 0.9610\,\std{0.0019} \\
STATE & 0.9780\,\std{0.0015} & 0.9824\,\std{0.0011} & 0.9853\,\std{0.0009} & 0.9516\,\std{0.0017} & 0.9569\,\std{0.0013} & 0.9642\,\std{0.0009} \\
\midrule
scGen & 0.9713\,\std{0.0019} & 0.9772\,\std{0.0014} & 0.9808\,\std{0.0011} & 0.9219\,\std{0.0035} & 0.9333\,\std{0.0025} & 0.9455\,\std{0.0018} \\
CPA & 0.9405\,\std{0.0040} & 0.9534\,\std{0.0030} & 0.9616\,\std{0.0025} & 0.7536\,\std{0.0066} & 0.7615\,\std{0.0048} & 0.7917\,\std{0.0037} \\
stVAE & 0.9751\,\std{0.0018} & 0.9809\,\std{0.0013} & 0.9843\,\std{0.0010} & 0.9507\,\std{0.0025} & 0.9566\,\std{0.0018} & 0.9640\,\std{0.0013} \\
PerturbedVAE & 0.9766\,\std{0.0017} & 0.9815\,\std{0.0012} & 0.9843\,\std{0.0010} & 0.9334\,\std{0.0032} & 0.9409\,\std{0.0024} & 0.9506\,\std{0.0017} \\
\midrule
PerturbEnergy-Lang & \underline{0.9805}\,\std{0.0016} & \underline{0.9847}\,\std{0.0011} & \underline{0.9872}\,\std{0.0008} & \underline{0.9566}\,\std{0.0019} & \underline{0.9618}\,\std{0.0014} & \underline{0.9683}\,\std{0.0010} \\
PerturbEnergy-Mis & \textbf{0.9824}\,\std{0.0013} & \textbf{0.9860}\,\std{0.0009} & \textbf{0.9881}\,\std{0.0007} & \textbf{0.9567}\,\std{0.0018} & \textbf{0.9619}\,\std{0.0013} & \textbf{0.9684}\,\std{0.0010} \\
\bottomrule
\addlinespace[6pt]
\toprule
\multirow{2}{*}{Method} & \multicolumn{3}{c}{PBMC} & \multicolumn{3}{c}{Norman2019} \\
\cmidrule(lr){2-4} \cmidrule(lr){5-7}
 & Top-50 & Top-100 & Top-200 & Top-50 & Top-100 & Top-200 \\
\midrule
Linear & 0.9394\,\std{0.0039} & 0.9490\,\std{0.0029} & 0.9562\,\std{0.0021} & 0.9427\,\std{0.0075} & 0.9586\,\std{0.0052} & 0.9692\,\std{0.0046} \\
Linear (scGPT) & 0.9534\,\std{0.0032} & 0.9619\,\std{0.0024} & 0.9684\,\std{0.0018} & 0.9548\,\std{0.0007} & 0.9655\,\std{0.0006} & 0.9711\,\std{0.0005} \\
\midrule
CellFlow & 0.9685\,\std{0.0018} & 0.9742\,\std{0.0011} & 0.9787\,\std{0.0008} & 0.9906\,\std{0.0020} & 0.9927\,\std{0.0014} & 0.9938\,\std{0.0014} \\
Squidiff & 0.9289\,\std{0.0044} & 0.9403\,\std{0.0031} & 0.9479\,\std{0.0023} & 0.9850\,\std{0.0015} & 0.9869\,\std{0.0012} & 0.9885\,\std{0.0010} \\
PerturbDiff & 0.9668\,\std{0.0023} & 0.9715\,\std{0.0016} & 0.9752\,\std{0.0011} & \textbf{0.9949}\,\std{0.0007} & \textbf{0.9957}\,\std{0.0006} & \textbf{0.9964}\,\std{0.0005} \\
STATE & 0.9594\,\std{0.0023} & 0.9673\,\std{0.0015} & 0.9730\,\std{0.0011} & 0.9930\,\std{0.0011} & 0.9942\,\std{0.0008} & 0.9953\,\std{0.0007} \\
\midrule
scGen & 0.9552\,\std{0.0036} & 0.9648\,\std{0.0027} & 0.9720\,\std{0.0020} & 0.9903\,\std{0.0011} & 0.9919\,\std{0.0009} & 0.9934\,\std{0.0008} \\
CPA & 0.8896\,\std{0.0057} & 0.9067\,\std{0.0043} & 0.9206\,\std{0.0033} & 0.9363\,\std{0.0076} & 0.9536\,\std{0.0052} & 0.9654\,\std{0.0046} \\
stVAE & 0.9763\,\std{0.0012} & 0.9802\,\std{0.0007} & 0.9829\,\std{0.0005} & 0.9861\,\std{0.0018} & 0.9890\,\std{0.0014} & 0.9913\,\std{0.0011} \\
PerturbedVAE & 0.9403\,\std{0.0031} & 0.9485\,\std{0.0022} & 0.9547\,\std{0.0017} & 0.9928\,\std{0.0013} & 0.9939\,\std{0.0010} & 0.9947\,\std{0.0010} \\
\midrule
PerturbEnergy-Lang & \textbf{0.9816}\,\std{0.0009} & \textbf{0.9843}\,\std{0.0006} & \textbf{0.9863}\,\std{0.0005} & 0.9931\,\std{0.0010} & 0.9942\,\std{0.0008} & 0.9954\,\std{0.0006} \\
PerturbEnergy-Mis & \underline{0.9792}\,\std{0.0010} & \underline{0.9823}\,\std{0.0007} & \underline{0.9844}\,\std{0.0005} & \underline{0.9935}\,\std{0.0009} & \underline{0.9945}\,\std{0.0007} & \underline{0.9956}\,\std{0.0006} \\
\bottomrule
\end{tabular}}
\label{app_exp_tab:deg_pcc}
\end{table}

\begin{table}[h!]
\centering
\small
\caption{$R^2$ ($\uparrow$) over the true top-$K$ DEGs. Values are the mean over held-out perturbation groups $\pm$ the s.e.m. across those groups. Best in \textbf{bold}, second best \underline{underlined}.}
\label{tab:deg-r2}
\setlength{\tabcolsep}{4pt}
\resizebox{\textwidth}{!}{%
\begin{tabular}{lcccccc}
\toprule
\multirow{2}{*}{Method} & \multicolumn{3}{c}{Replogle} & \multicolumn{3}{c}{Tahoe100M} \\
\cmidrule(lr){2-4} \cmidrule(lr){5-7}
 & Top-50 & Top-100 & Top-200 & Top-50 & Top-100 & Top-200 \\
\midrule
Linear & 0.9142\,\std{0.0050} & 0.9297\,\std{0.0037} & 0.9383\,\std{0.0031} & 0.7456\,\std{0.0108} & 0.8009\,\std{0.0078} & 0.8497\,\std{0.0051} \\
Linear (scGPT) & 0.9295\,\std{0.0044} & 0.9456\,\std{0.0031} & 0.9546\,\std{0.0025} & 0.8303\,\std{0.0069} & 0.8621\,\std{0.0051} & 0.8930\,\std{0.0034} \\
\midrule
CellFlow & 0.9444\,\std{0.0035} & 0.9569\,\std{0.0025} & 0.9646\,\std{0.0020} & 0.8185\,\std{0.0074} & 0.8556\,\std{0.0053} & 0.8896\,\std{0.0035} \\
Squidiff & 0.9466\,\std{0.0028} & 0.9560\,\std{0.0019} & 0.9608\,\std{0.0016} & 0.7862\,\std{0.0090} & 0.8028\,\std{0.0072} & 0.8323\,\std{0.0049} \\
PerturbDiff & 0.9177\,\std{0.0054} & 0.9374\,\std{0.0039} & 0.9488\,\std{0.0032} & 0.8482\,\std{0.0071} & 0.8767\,\std{0.0052} & 0.9032\,\std{0.0036} \\
STATE & 0.9492\,\std{0.0035} & 0.9607\,\std{0.0024} & 0.9677\,\std{0.0019} & 0.8642\,\std{0.0045} & 0.8888\,\std{0.0031} & 0.9126\,\std{0.0021} \\
\midrule
scGen & 0.9356\,\std{0.0040} & 0.9495\,\std{0.0030} & 0.9583\,\std{0.0023} & 0.7520\,\std{0.0119} & 0.8041\,\std{0.0088} & 0.8503\,\std{0.0058} \\
CPA & 0.8544\,\std{0.0092} & 0.8895\,\std{0.0068} & 0.9107\,\std{0.0056} & 0.4470\,\std{0.0115} & 0.4937\,\std{0.0080} & 0.5710\,\std{0.0062} \\
stVAE & 0.9436\,\std{0.0040} & 0.9584\,\std{0.0026} & 0.9664\,\std{0.0020} & 0.8690\,\std{0.0055} & 0.8918\,\std{0.0040} & 0.9143\,\std{0.0027} \\
PerturbedVAE & 0.9449\,\std{0.0039} & 0.9572\,\std{0.0027} & 0.9644\,\std{0.0023} & 0.8025\,\std{0.0092} & 0.8378\,\std{0.0068} & 0.8718\,\std{0.0047} \\
\midrule
PerturbEnergy-Lang & \underline{0.9564}\,\std{0.0033} & \underline{0.9667}\,\std{0.0022} & \underline{0.9725}\,\std{0.0017} & \textbf{0.8826}\,\std{0.0046} & \textbf{0.9035}\,\std{0.0034} & \textbf{0.9237}\,\std{0.0023} \\
PerturbEnergy-Mis & \textbf{0.9606}\,\std{0.0028} & \textbf{0.9697}\,\std{0.0019} & \textbf{0.9747}\,\std{0.0015} & \underline{0.8819}\,\std{0.0046} & \underline{0.9033}\,\std{0.0033} & \underline{0.9237}\,\std{0.0023} \\
\bottomrule
\addlinespace[6pt]
\toprule
\multirow{2}{*}{Method} & \multicolumn{3}{c}{PBMC} & \multicolumn{3}{c}{Norman2019} \\
\cmidrule(lr){2-4} \cmidrule(lr){5-7}
 & Top-50 & Top-100 & Top-200 & Top-50 & Top-100 & Top-200 \\
\midrule
Linear & 0.8485\,\std{0.0085} & 0.8804\,\std{0.0059} & 0.9009\,\std{0.0043} & 0.8375\,\std{0.0289} & 0.8975\,\std{0.0166} & 0.9321\,\std{0.0113} \\
Linear (scGPT) & 0.8757\,\std{0.0074} & 0.9067\,\std{0.0050} & 0.9265\,\std{0.0036} & 0.8852\,\std{0.0029} & 0.9288\,\std{0.0019} & 0.9711\,\std{0.0014} \\
\midrule
CellFlow & 0.9029\,\std{0.0056} & 0.9296\,\std{0.0032} & 0.9462\,\std{0.0021} & 0.9706\,\std{0.0090} & 0.9802\,\std{0.0053} & 0.9855\,\std{0.0038} \\
Squidiff & 0.8061\,\std{0.0097} & 0.8399\,\std{0.0066} & 0.8631\,\std{0.0048} & 0.9647\,\std{0.0037} & 0.9714\,\std{0.0027} & 0.9760\,\std{0.0022} \\
PerturbDiff & 0.8972\,\std{0.0086} & 0.9183\,\std{0.0061} & 0.9344\,\std{0.0035} & \textbf{0.9863}\,\std{0.0019} & \textbf{0.9895}\,\std{0.0014} & \textbf{0.9918}\,\std{0.0010} \\
STATE & 0.8833\,\std{0.0061} & 0.9145\,\std{0.0038} & 0.9339\,\std{0.0026} & 0.9803\,\std{0.0038} & 0.9855\,\std{0.0025} & 0.9892\,\std{0.0017} \\
\midrule
scGen & 0.8874\,\std{0.0074} & 0.9163\,\std{0.0054} & 0.9359\,\std{0.0039} & 0.9735\,\std{0.0038} & 0.9805\,\std{0.0026} & 0.9852\,\std{0.0020} \\
CPA & 0.7335\,\std{0.0122} & 0.7857\,\std{0.0089} & 0.8224\,\std{0.0067} & 0.8174\,\std{0.0296} & 0.8845\,\std{0.0169} & 0.9235\,\std{0.0115} \\
stVAE & 0.9302\,\std{0.0035} & 0.9470\,\std{0.0021} & 0.9576\,\std{0.0014} & 0.9592\,\std{0.0066} & 0.9716\,\std{0.0045} & 0.9800\,\std{0.0029} \\
PerturbedVAE & 0.8503\,\std{0.0072} & 0.8791\,\std{0.0047} & 0.8957\,\std{0.0035} & 0.9785\,\std{0.0056} & 0.9840\,\std{0.0035} & 0.9877\,\std{0.0026} \\
\midrule
PerturbEnergy-Lang & \textbf{0.9434}\,\std{0.0036} & \textbf{0.9570}\,\std{0.0020} & \textbf{0.9647}\,\std{0.0015} & 0.9803\,\std{0.0033} & 0.9854\,\std{0.0023} & 0.9893\,\std{0.0015} \\
PerturbEnergy-Mis & \underline{0.9396}\,\std{0.0035} & \underline{0.9536}\,\std{0.0022} & \underline{0.9615}\,\std{0.0015} & \underline{0.9818}\,\std{0.0030} & \underline{0.9864}\,\std{0.0021} & \underline{0.9899}\,\std{0.0014} \\
\bottomrule
\end{tabular}}
\label{app_exp_tab:deg_r2}
\end{table}

\subsubsection{Results on HVG Group}
\label{app_exp:hvg_group_res}

\subsubsubsection{Results on Replogle Dataset}

The HVG group results on Replogle is shown in Table~\ref{app_exp_tab:hvg_replogle_corr}, Table~\ref{app_exp_tab:hvg_replogle_deg} and Table~\ref{app_exp_tab:hvg_replogle_dis}. Table~\ref{app_exp_tab:hvg_replogle_corr} summarizes the correlation based measures, we find that our PerturbEnergy generally achive the best performance, which shows its effectiveness of capturing the whole transcriptomic pattern of cellular response. Table~\ref{app_exp_tab:hvg_replogle_deg} is the results of differential expression gene (DEG) measures. DEG recognition is directly related to perturbation. We find that our PerturbEnergy consistently outperform other baselines with a clear margin, which validates the effectiveness of modeling the perturbation-specific information. Table~\ref{app_exp_tab:hvg_replogle_dis} summarize the distance-based measures. Our PerturbEnergy is also competitive with other baselines.

\begin{table}[h!]
\centering
\small
\caption{HVG-group correlation-based measures on Replogle, over all 2{,}000 highly
  variable genes. Values are the mean over held-out perturbation groups $\pm$ the
  s.e.m.\ across those groups; the perturbation-discrimination scores are computed once
  over the whole test set and carry no such spread. Best in \textbf{bold}, second best
  \underline{underlined}.}
\label{tab:hvg-replogle-corr}
\setlength{\tabcolsep}{4pt}
\resizebox{\textwidth}{!}{%
\begin{tabular}{lccccc}
\toprule
Method & PCC\,$\uparrow$ & $R^2$\,$\uparrow$ & PDCorr\,$\uparrow$ & LFCSpear\,$\uparrow$ & LogFC-cos\,$\uparrow$ \\
\midrule
Linear & 0.9772\,\std{0.0011} & 0.9529\,\std{0.0022} & 0.3323\,\std{0.0129} & 0.2519\,\std{0.0117} & 0.3368\,\std{0.0131} \\
Linear (scGPT) & 0.9868\,\std{0.0006} & 0.9732\,\std{0.0013} & 0.4584\,\std{0.0175} & 0.3607\,\std{0.0145} & 0.4688\,\std{0.0174} \\
\midrule
CellFlow & 0.9706\,\std{0.0012} & 0.9393\,\std{0.0024} & 0.5559\,\std{0.0187} & 0.4623\,\std{0.0170} & 0.5648\,\std{0.0185} \\
Squidiff & 0.9829\,\std{0.0005} & 0.9649\,\std{0.0009} & 0.4945\,\std{0.0170} & 0.4536\,\std{0.0141} & 0.5065\,\std{0.0169} \\
PerturbDiff & 0.9875\,\std{0.0008} & 0.9743\,\std{0.0015} & 0.5470\,\std{0.0174} & 0.5186\,\std{0.0137} & 0.5557\,\std{0.0173} \\
STATE & 0.9905\,\std{0.0005} & 0.9805\,\std{0.0010} & 0.5091\,\std{0.0206} & 0.4323\,\std{0.0183} & 0.5177\,\std{0.0205} \\
\midrule
scGen & 0.9883\,\std{0.0006} & 0.9760\,\std{0.0012} & 0.5381\,\std{0.0138} & 0.4900\,\std{0.0118} & 0.5411\,\std{0.0139} \\
CPA & 0.9815\,\std{0.0012} & 0.9614\,\std{0.0025} & 0.3039\,\std{0.0062} & 0.3040\,\std{0.0047} & 0.3101\,\std{0.0063} \\
stVAE & 0.9903\,\std{0.0005} & 0.9802\,\std{0.0010} & 0.5648\,\std{0.0158} & 0.5142\,\std{0.0132} & 0.5715\,\std{0.0158} \\
PerturbedVAE & 0.9902\,\std{0.0005} & 0.9795\,\std{0.0011} & 0.5578\,\std{0.0151} & 0.5019\,\std{0.0128} & 0.5600\,\std{0.0152} \\
\midrule
PerturbEnergy-Lang & \textbf{0.9915}\,\std{0.0004} & \textbf{0.9825}\,\std{0.0009} & \underline{0.6000}\,\std{0.0154} & \underline{0.5470}\,\std{0.0131} & \underline{0.6070}\,\std{0.0154} \\
PerturbEnergy-Mis & \underline{0.9913}\,\std{0.0004} & \underline{0.9821}\,\std{0.0008} & \textbf{0.6038}\,\std{0.0165} & \textbf{0.5492}\,\std{0.0138} & \textbf{0.6106}\,\std{0.0164} \\
\bottomrule
\end{tabular}}

\vspace{6pt}

\resizebox{\textwidth}{!}{%
\begin{tabular}{lcccc}
\toprule
Method & DirAgr\,$\uparrow$ & PDS-cos\,$\downarrow$ & PDS-$\ell_1$\,$\downarrow$ & PDS-$\ell_2$\,$\downarrow$ \\
\midrule
Linear & 0.5780\,\std{0.0043} & 0.2465 & 0.2992 & 0.2790 \\
Linear (scGPT) & 0.6146\,\std{0.0046} & 0.3852 & 0.3542 & 0.3512 \\
\midrule
CellFlow & 0.5430\,\std{0.0024} & 0.5042 & 0.4987 & 0.5012 \\
Squidiff & 0.6645\,\std{0.0048} & 0.3889 & 0.2657 & 0.2493 \\
PerturbDiff & 0.6903\,\std{0.0047} & 0.4208 & 0.4477 & 0.4437 \\
STATE & 0.6456\,\std{0.0064} & 0.3266 & 0.2150 & 0.2156 \\
\midrule
scGen & 0.6673\,\std{0.0042} & \underline{0.2336} & 0.3026 & 0.2868 \\
CPA & 0.6038\,\std{0.0015} & 0.3573 & 0.4960 & 0.4967 \\
stVAE & 0.6817\,\std{0.0047} & 0.3461 & 0.2521 & 0.2550 \\
PerturbedVAE & 0.6756\,\std{0.0046} & \textbf{0.1744} & 0.2383 & 0.2326 \\
\midrule
PerturbEnergy-Lang & \underline{0.6938}\,\std{0.0047} & 0.2633 & \textbf{0.1939} & \textbf{0.1902} \\
PerturbEnergy-Mis & \textbf{0.6973}\,\std{0.0049} & 0.3329 & \underline{0.1971} & \underline{0.1932} \\
\bottomrule
\end{tabular}}
\label{app_exp_tab:hvg_replogle_corr}
\end{table}

\begin{table}[h!]
\centering
\small
\caption{HVG-group DEG-related scores on Replogle, over all 2{,}000 highly variable
  genes. Values are the mean over held-out perturbation groups $\pm$ the s.e.m.\ across
  those groups. Best in \textbf{bold}, second best \underline{underlined}.}
\label{tab:hvg-replogle-deg}
\setlength{\tabcolsep}{4pt}
\resizebox{\textwidth}{!}{%
\begin{tabular}{lccccc}
\toprule
Method & DEPrec\,$\uparrow$ & DERec\,$\uparrow$ & DEOverlap\,$\uparrow$ & AUROC\,$\uparrow$ & AUPRC\,$\uparrow$ \\
\midrule
Linear & 0.2047\,\std{0.0075} & 0.2047\,\std{0.0075} & 0.1198\,\std{0.0049} & 0.7200\,\std{0.0076} & 0.1662\,\std{0.0073} \\
Linear (scGPT) & 0.2590\,\std{0.0106} & 0.2590\,\std{0.0106} & 0.1615\,\std{0.0073} & 0.7698\,\std{0.0092} & 0.2409\,\std{0.0115} \\
\midrule
CellFlow & 0.0967\,\std{0.0033} & 0.0967\,\std{0.0033} & 0.0518\,\std{0.0019} & 0.6708\,\std{0.0054} & 0.0806\,\std{0.0023} \\
Squidiff & 0.2566\,\std{0.0104} & 0.2566\,\std{0.0104} & 0.1592\,\std{0.0071} & 0.7981\,\std{0.0089} & 0.2475\,\std{0.0114} \\
PerturbDiff & 0.2982\,\std{0.0132} & 0.2982\,\std{0.0132} & 0.1973\,\std{0.0100} & 0.8096\,\std{0.0085} & 0.2938\,\std{0.0147} \\
STATE & 0.3049\,\std{0.0134} & 0.3049\,\std{0.0134} & 0.2024\,\std{0.0101} & 0.7946\,\std{0.0101} & 0.2997\,\std{0.0149} \\
\midrule
scGen & 0.2739\,\std{0.0098} & 0.2739\,\std{0.0098} & 0.1696\,\std{0.0068} & 0.8067\,\std{0.0079} & 0.2537\,\std{0.0103} \\
CPA & 0.1215\,\std{0.0041} & 0.1215\,\std{0.0041} & 0.0661\,\std{0.0024} & 0.6707\,\std{0.0045} & 0.0859\,\std{0.0028} \\
stVAE & 0.2837\,\std{0.0126} & 0.2837\,\std{0.0126} & 0.1848\,\std{0.0093} & 0.7987\,\std{0.0089} & 0.2713\,\std{0.0138} \\
PerturbedVAE & 0.2915\,\std{0.0114} & 0.2915\,\std{0.0114} & 0.1863\,\std{0.0082} & 0.8075\,\std{0.0084} & 0.2805\,\std{0.0128} \\
\midrule
PerturbEnergy-Lang & \textbf{0.3226}\,\std{0.0129} & \textbf{0.3226}\,\std{0.0129} & \textbf{0.2140}\,\std{0.0098} & \textbf{0.8233}\,\std{0.0086} & \textbf{0.3173}\,\std{0.0147} \\
PerturbEnergy-Mis & \underline{0.3219}\,\std{0.0131} & \underline{0.3219}\,\std{0.0131} & \underline{0.2139}\,\std{0.0100} & \underline{0.8202}\,\std{0.0089} & \underline{0.3164}\,\std{0.0149} \\
\bottomrule
\end{tabular}}
\label{app_exp_tab:hvg_replogle_deg}
\end{table}

\begin{table}[h!]
\centering
\small
\caption{HVG-group distribution-level distances on Replogle, over all 2{,}000 highly
  variable genes. Values are the mean over held-out perturbation groups $\pm$ the
  s.e.m.\ across those groups. Best in \textbf{bold}, second best \underline{underlined}.}
\label{tab:hvg-replogle-dist}
\setlength{\tabcolsep}{4pt}
\resizebox{\textwidth}{!}{%
\begin{tabular}{lccccc}
\toprule
Method & RMSE\,$\downarrow$ & MSE\,$\downarrow$ & MAE\,$\downarrow$ & EDist\,$\downarrow$ & Sinkhorn\,$\downarrow$ \\
\midrule
Linear & 0.1122\,\std{0.0025} & 0.0144\,\std{0.0007} & 0.0764\,\std{0.0016} & 1.4655\,\std{0.0592} & 1.0269\,\std{0.0047} \\
Linear (scGPT) & 0.0860\,\std{0.0017} & 0.0082\,\std{0.0004} & 0.0585\,\std{0.0010} & \underline{0.9934}\,\std{0.0352} & 0.9918\,\std{0.0035} \\
\midrule
CellFlow & 0.1303\,\std{0.0024} & 0.0186\,\std{0.0007} & 0.0866\,\std{0.0015} & 4.4345\,\std{0.0812} & 0.6073\,\std{0.0052} \\
Squidiff & 0.1021\,\std{0.0012} & 0.0108\,\std{0.0003} & 0.0656\,\std{0.0007} & 3.7791\,\std{0.0379} & 0.5664\,\std{0.0035} \\
PerturbDiff & 0.0817\,\std{0.0020} & 0.0079\,\std{0.0005} & 0.0527\,\std{0.0012} & 6.3453\,\std{0.0534} & 0.5637\,\std{0.0041} \\
STATE & 0.0725\,\std{0.0016} & 0.0060\,\std{0.0003} & 0.0491\,\std{0.0009} & \textbf{0.7007}\,\std{0.0274} & 0.9903\,\std{0.0038} \\
\midrule
scGen & 0.0795\,\std{0.0019} & 0.0073\,\std{0.0004} & 0.0525\,\std{0.0011} & 3.4475\,\std{0.0548} & \underline{0.5432}\,\std{0.0036} \\
CPA & 0.0953\,\std{0.0030} & 0.0118\,\std{0.0008} & 0.0612\,\std{0.0018} & 4.8973\,\std{0.0860} & 0.5575\,\std{0.0052} \\
stVAE & 0.0738\,\std{0.0015} & 0.0061\,\std{0.0003} & 0.0490\,\std{0.0008} & 5.8831\,\std{0.0569} & \textbf{0.5376}\,\std{0.0038} \\
PerturbedVAE & 0.0745\,\std{0.0016} & 0.0063\,\std{0.0003} & 0.0501\,\std{0.0009} & 5.5135\,\std{0.0444} & 0.5545\,\std{0.0038} \\
\midrule
PerturbEnergy-Lang & \textbf{0.0691}\,\std{0.0014} & \textbf{0.0054}\,\std{0.0003} & \textbf{0.0465}\,\std{0.0009} & 8.6506\,\std{0.0427} & 0.5577\,\std{0.0037} \\
PerturbEnergy-Mis & \underline{0.0704}\,\std{0.0014} & \underline{0.0055}\,\std{0.0002} & \underline{0.0473}\,\std{0.0008} & 9.0347\,\std{0.0442} & 0.5587\,\std{0.0036} \\
\bottomrule
\end{tabular}}
\label{app_exp_tab:hvg_replogle_dis}
\end{table}

\subsubsubsection{Results on PBMC Dataset}

The HVG group results on PBMC is shown in Table~\ref{app_exp_tab:hvg_pbmc_corr}, Table~\ref{app_exp_tab:hvg_pbmc_deg} and Table~\ref{app_exp_tab:hvg_pbmc_dis}. From the correlation and distance related measures in Table~\ref{app_exp_tab:hvg_pbmc_corr} and Table~\ref{app_exp_tab:hvg_pbmc_dis} respectively, we have the same observations with Replogle, where PerturbEnergy outperforms or remains competitive to other baselines, showing its ability to capture the general expression patterns. From the DE-recoginition results in Table~\ref{app_exp_tab:hvg_pbmc_deg}, we also find that PerturbEnergy outperform others especially VAE baselines (0.2473 vs 0.2208). This benefit comes from modeling perturbation-specific information.

\begin{table}[h!]
\centering
\small
\caption{HVG-group correlation-based measures on PBMC, over all 2{,}000 highly variable
  genes. Values are the mean over held-out perturbation groups $\pm$ the s.e.m.\ across
  those groups; the perturbation-discrimination scores are computed once over the whole
  test set and carry no such spread. Best in \textbf{bold}, second best
  \underline{underlined}.}
\label{tab:hvg-pbmc-corr}
\setlength{\tabcolsep}{4pt}
\resizebox{\textwidth}{!}{%
\begin{tabular}{lccccc}
\toprule
Method & PCC\,$\uparrow$ & $R^2$\,$\uparrow$ & PDCorr\,$\uparrow$ & LFCSpear\,$\uparrow$ & LogFC-cos\,$\uparrow$ \\
\midrule
Linear & 0.9705\,\std{0.0009} & 0.9372\,\std{0.0020} & 0.2015\,\std{0.0123} & 0.0708\,\std{0.0060} & 0.1960\,\std{0.0123} \\
Linear (scGPT) & 0.9826\,\std{0.0007} & 0.9636\,\std{0.0014} & 0.2922\,\std{0.0119} & 0.0881\,\std{0.0049} & 0.2934\,\std{0.0119} \\
\midrule
CellFlow & 0.9784\,\std{0.0013} & 0.9555\,\std{0.0025} & 0.2301\,\std{0.0067} & 0.2610\,\std{0.0051} & 0.2294\,\std{0.0069} \\
Squidiff & 0.9605\,\std{0.0009} & 0.8978\,\std{0.0016} & 0.3061\,\std{0.0102} & 0.2693\,\std{0.0044} & 0.3003\,\std{0.0103} \\
PerturbDiff & 0.9805\,\std{0.0007} & 0.9521\,\std{0.0021} & 0.4435\,\std{0.0125} & 0.3566\,\std{0.0057} & 0.4429\,\std{0.0126} \\
STATE & 0.9868\,\std{0.0006} & 0.9724\,\std{0.0012} & 0.3074\,\std{0.0140} & 0.0808\,\std{0.0061} & 0.3078\,\std{0.0140} \\
\midrule
scGen & 0.9857\,\std{0.0007} & 0.9706\,\std{0.0013} & 0.4126\,\std{0.0118} & \underline{0.3594}\,\std{0.0047} & 0.4121\,\std{0.0118} \\
CPA & 0.9425\,\std{0.0016} & 0.8794\,\std{0.0033} & 0.1275\,\std{0.0075} & 0.2351\,\std{0.0029} & 0.1207\,\std{0.0078} \\
stVAE & \underline{0.9887}\,\std{0.0003} & \underline{0.9760}\,\std{0.0006} & 0.4886\,\std{0.0119} & 0.3531\,\std{0.0050} & 0.4859\,\std{0.0119} \\
PerturbedVAE & 0.9654\,\std{0.0009} & 0.9232\,\std{0.0015} & 0.3244\,\std{0.0151} & 0.3069\,\std{0.0056} & 0.3241\,\std{0.0151} \\
\midrule
PerturbEnergy-Lang & \textbf{0.9903}\,\std{0.0003} & \textbf{0.9781}\,\std{0.0006} & \textbf{0.5577}\,\std{0.0103} & \textbf{0.3689}\,\std{0.0045} & \textbf{0.5562}\,\std{0.0103} \\
PerturbEnergy-Mis & 0.9884\,\std{0.0003} & 0.9746\,\std{0.0007} & \underline{0.5345}\,\std{0.0108} & 0.3499\,\std{0.0048} & \underline{0.5311}\,\std{0.0109} \\
\bottomrule
\end{tabular}}

\vspace{6pt}

\resizebox{\textwidth}{!}{%
\begin{tabular}{lcccc}
\toprule
Method & DirAgr\,$\uparrow$ & PDS-cos\,$\downarrow$ & PDS-$\ell_1$\,$\downarrow$ & PDS-$\ell_2$\,$\downarrow$ \\
\midrule
Linear & 0.5181\,\std{0.0021} & 0.2185 & 0.3569 & 0.3141 \\
Linear (scGPT) & 0.5265\,\std{0.0016} & 0.1139 & 0.3067 & 0.2689 \\
\midrule
CellFlow & 0.6100\,\std{0.0021} & 0.1190 & 0.2487 & 0.3699 \\
Squidiff & 0.6191\,\std{0.0016} & 0.1116 & 0.1344 & 0.1836 \\
PerturbDiff & \underline{0.6552}\,\std{0.0020} & 0.0920 & 0.0929 & 0.1461 \\
STATE & 0.5247\,\std{0.0018} & 0.1591 & 0.3306 & 0.2800 \\
\midrule
scGen & \textbf{0.6556}\,\std{0.0019} & 0.0689 & 0.1641 & 0.2173 \\
CPA & 0.5959\,\std{0.0012} & 0.2560 & 0.1856 & 0.3423 \\
stVAE & 0.6418\,\std{0.0017} & 0.0432 & 0.0777 & 0.1181 \\
PerturbedVAE & 0.6293\,\std{0.0021} & 0.1709 & 0.1135 & 0.1980 \\
\midrule
PerturbEnergy-Lang & 0.6376\,\std{0.0016} & \textbf{0.0289} & \textbf{0.0463} & \textbf{0.0681} \\
PerturbEnergy-Mis & 0.6325\,\std{0.0016} & \underline{0.0317} & \underline{0.0541} & \underline{0.0706} \\
\bottomrule
\end{tabular}}
\label{app_exp_tab:hvg_pbmc_corr}
\end{table}

\begin{table}[h!]
\centering
\small
\caption{HVG-group DEG-related scores on PBMC, over all 2{,}000 highly variable genes.
  Values are the mean over held-out perturbation groups $\pm$ the s.e.m.\ across those
  groups. Best in \textbf{bold}, second best \underline{underlined}.}
\label{tab:hvg-pbmc-deg}
\setlength{\tabcolsep}{4pt}
\resizebox{\textwidth}{!}{%
\begin{tabular}{lccccc}
\toprule
Method & DEPrec\,$\uparrow$ & DERec\,$\uparrow$ & DEOverlap\,$\uparrow$ & AUROC\,$\uparrow$ & AUPRC\,$\uparrow$ \\
\midrule
Linear & 0.1404\,\std{0.0055} & 0.1404\,\std{0.0055} & 0.0786\,\std{0.0035} & 0.7303\,\std{0.0045} & 0.1169\,\std{0.0059} \\
Linear (scGPT) & 0.1693\,\std{0.0067} & 0.1693\,\std{0.0067} & 0.0975\,\std{0.0046} & 0.7147\,\std{0.0052} & 0.1469\,\std{0.0070} \\
\midrule
CellFlow & 0.1349\,\std{0.0035} & 0.1349\,\std{0.0035} & 0.0735\,\std{0.0021} & 0.7689\,\std{0.0030} & 0.1062\,\std{0.0024} \\
Squidiff & 0.1501\,\std{0.0063} & 0.1501\,\std{0.0063} & 0.0853\,\std{0.0041} & 0.8073\,\std{0.0036} & 0.1363\,\std{0.0058} \\
PerturbDiff & 0.1995\,\std{0.0076} & 0.1995\,\std{0.0076} & 0.1176\,\std{0.0054} & \textbf{0.8215}\,\std{0.0039} & 0.1701\,\std{0.0080} \\
STATE & 0.1929\,\std{0.0083} & 0.1929\,\std{0.0083} & 0.1149\,\std{0.0060} & 0.7200\,\std{0.0060} & 0.1724\,\std{0.0089} \\
\midrule
scGen & 0.1993\,\std{0.0074} & 0.1993\,\std{0.0074} & 0.1170\,\std{0.0052} & 0.8189\,\std{0.0035} & 0.1750\,\std{0.0077} \\
CPA & 0.1392\,\std{0.0046} & 0.1392\,\std{0.0046} & 0.0769\,\std{0.0028} & 0.7787\,\std{0.0039} & 0.1124\,\std{0.0036} \\
stVAE & 0.2208\,\std{0.0092} & 0.2208\,\std{0.0092} & 0.1347\,\std{0.0069} & 0.8173\,\std{0.0044} & 0.2077\,\std{0.0105} \\
PerturbedVAE & 0.1709\,\std{0.0073} & 0.1709\,\std{0.0073} & 0.0994\,\std{0.0050} & 0.8155\,\std{0.0037} & 0.1562\,\std{0.0076} \\
\midrule
PerturbEnergy-Lang & \textbf{0.2473}\,\std{0.0090} & \textbf{0.2473}\,\std{0.0090} & \textbf{0.1515}\,\std{0.0068} & \underline{0.8205}\,\std{0.0044} & \textbf{0.2216}\,\std{0.0105} \\
PerturbEnergy-Mis & \underline{0.2366}\,\std{0.0093} & \underline{0.2366}\,\std{0.0093} & \underline{0.1451}\,\std{0.0070} & 0.8148\,\std{0.0044} & \underline{0.2142}\,\std{0.0107} \\
\bottomrule
\end{tabular}}
\label{app_exp_tab:hvg_pbmc_deg}
\end{table}

\begin{table}[h!]
\centering
\small
\caption{HVG-group distribution-level distances on PBMC, over all 2{,}000 highly variable
  genes. Values are the mean over held-out perturbation groups $\pm$ the s.e.m.\ across
  those groups. Best in \textbf{bold}, second best \underline{underlined}.}
\label{tab:hvg-pbmc-dist}
\setlength{\tabcolsep}{4pt}
\resizebox{\textwidth}{!}{%
\begin{tabular}{lccccc}
\toprule
Method & RMSE\,$\downarrow$ & MSE\,$\downarrow$ & MAE\,$\downarrow$ & EDist\,$\downarrow$ & Sinkhorn\,$\downarrow$ \\
\midrule
Linear & 0.0376\,\std{0.0009} & 0.0016\,\std{0.0001} & 0.0195\,\std{0.0004} & 0.2725\,\std{0.0137} & 0.4927\,\std{0.0113} \\
Linear (scGPT) & 0.0299\,\std{0.0010} & 0.0012\,\std{0.0001} & 0.0167\,\std{0.0004} & \underline{0.2035}\,\std{0.0137} & 0.4902\,\std{0.0111} \\
\midrule
CellFlow & 0.0338\,\std{0.0014} & 0.0017\,\std{0.0002} & 0.0159\,\std{0.0006} & 1.0656\,\std{0.0340} & 0.2766\,\std{0.0072} \\
Squidiff & 0.0529\,\std{0.0013} & 0.0033\,\std{0.0002} & 0.0217\,\std{0.0006} & 1.6522\,\std{0.0339} & 0.2937\,\std{0.0077} \\
PerturbDiff & 0.0340\,\std{0.0008} & 0.0014\,\std{0.0001} & 0.0146\,\std{0.0004} & 2.4800\,\std{0.0287} & 0.2905\,\std{0.0071} \\
STATE & 0.0264\,\std{0.0009} & 0.0009\,\std{0.0001} & 0.0138\,\std{0.0005} & \textbf{0.1658}\,\std{0.0112} & 0.4879\,\std{0.0112} \\
\midrule
scGen & 0.0275\,\std{0.0010} & 0.0011\,\std{0.0001} & \textbf{0.0123}\,\std{0.0005} & 0.8381\,\std{0.0238} & 0.2793\,\std{0.0072} \\
CPA & 0.0545\,\std{0.0012} & 0.0034\,\std{0.0002} & 0.0202\,\std{0.0006} & 4.0187\,\std{0.0435} & 0.2913\,\std{0.0074} \\
stVAE & 0.0251\,\std{0.0006} & \underline{0.0007}\,\std{0.0000} & \underline{0.0125}\,\std{0.0003} & 1.8705\,\std{0.0282} & \textbf{0.2630}\,\std{0.0063} \\
PerturbedVAE & 0.0443\,\std{0.0008} & 0.0022\,\std{0.0001} & 0.0183\,\std{0.0004} & 2.0305\,\std{0.0369} & 0.2886\,\std{0.0075} \\
\midrule
PerturbEnergy-Lang & \textbf{0.0239}\,\std{0.0006} & \underline{0.0007}\,\std{0.0000} & 0.0127\,\std{0.0003} & 2.6597\,\std{0.0327} & \underline{0.2698}\,\std{0.0067} \\
PerturbEnergy-Mis & \underline{0.0250}\,\std{0.0005} & \textbf{0.0007}\,\std{0.0000} & 0.0132\,\std{0.0003} & 2.5102\,\std{0.0304} & 0.2701\,\std{0.0066} \\
\bottomrule
\end{tabular}}
\label{app_exp_tab:hvg_pbmc_dis}
\end{table}

\subsubsubsection{Results on Norman2019 Dataset}

The HVG group results on Norman2019 is shown in Table~\ref{app_exp_tab:hvg_norman_corr}, Table~\ref{app_exp_tab:hvg_norman_deg} and Table~\ref{app_exp_tab:hvg_norman_dis}. On this dataset we find a relative different pattern from Replogle and PBMC where diffusion-based methods like PerturbDiff and CellFlow ahieve the best performance on correlation or distance based measures (Table~\ref{app_exp_tab:hvg_norman_corr} and Table~\ref{app_exp_tab:hvg_norman_dis}). Norman2019 generally contains duel-genetic perturbations and this diffusion process is good at capturing the complex pattern co-influenced by duel-genetic perturbations. Moreover, increasing perturbation information remains effective as we find that in DE related measures PerturbEnergy remains the best compared to other baselines (Table~\ref{app_exp_tab:hvg_norman_deg}).

\begin{table}[h!]
\centering
\small
\caption{HVG-group correlation-based measures on Norman2019, over all 2{,}000 highly
  variable genes. Values are the mean over held-out perturbation groups $\pm$ the
  s.e.m.\ across those groups; the perturbation-discrimination scores are computed once
  over the whole test set and carry no such spread. Best in \textbf{bold}, second best
  \underline{underlined}.}
\label{tab:hvg-norman-corr}
\setlength{\tabcolsep}{4pt}
\resizebox{\textwidth}{!}{%
\begin{tabular}{lccccc}
\toprule
Method & PCC\,$\uparrow$ & $R^2$\,$\uparrow$ & PDCorr\,$\uparrow$ & LFCSpear\,$\uparrow$ & LogFC-cos\,$\uparrow$ \\
\midrule
Linear & 0.9837\,\std{0.0022} & 0.9674\,\std{0.0043} & 0.4668\,\std{0.0426} & 0.2613\,\std{0.0231} & 0.4710\,\std{0.0422} \\
Linear (scGPT) & \underline{0.9971}\,\std{0.0003} & \underline{0.9941}\,\std{0.0007} & 0.9051\,\std{0.0160} & 0.3811\,\std{0.0145} & 0.9055\,\std{0.0160} \\
\midrule
CellFlow & 0.9772\,\std{0.0027} & 0.9545\,\std{0.0053} & \textbf{0.9137}\,\std{0.0270} & 0.3883\,\std{0.0141} & \textbf{0.9141}\,\std{0.0146} \\
Squidiff & 0.9921\,\std{0.0006} & 0.9839\,\std{0.0012} & 0.8296\,\std{0.0220} & \underline{0.4656}\,\std{0.0150} & 0.8307\,\std{0.0219} \\
PerturbDiff & \textbf{0.9975}\,\std{0.0003} & \textbf{0.9944}\,\std{0.0006} & 0.8985\,\std{0.0205} & \textbf{0.5092}\,\std{0.0132} & 0.8984\,\std{0.0205} \\
STATE & 0.9969\,\std{0.0004} & 0.9936\,\std{0.0008} & 0.9078\,\std{0.0153} & 0.3700\,\std{0.0164} & 0.9078\,\std{0.0153} \\
\midrule
scGen & 0.9957\,\std{0.0004} & 0.9911\,\std{0.0010} & 0.8800\,\std{0.0155} & 0.4622\,\std{0.0117} & 0.8802\,\std{0.0154} \\
CPA & 0.9818\,\std{0.0023} & 0.9636\,\std{0.0045} & 0.3727\,\std{0.0216} & 0.2036\,\std{0.0122} & 0.3769\,\std{0.0216} \\
stVAE & 0.9948\,\std{0.0007} & 0.9894\,\std{0.0013} & 0.8381\,\std{0.0253} & 0.3913\,\std{0.0122} & 0.8381\,\std{0.0254} \\
PerturbedVAE & 0.9964\,\std{0.0006} & 0.9926\,\std{0.0012} & 0.8980\,\std{0.0134} & 0.3736\,\std{0.0098} & 0.8984\,\std{0.0134} \\
\midrule
PerturbEnergy-Lang & 0.9970\,\std{0.0004} & 0.9936\,\std{0.0007} & 0.9063\,\std{0.0150} & 0.4280\,\std{0.0123} & 0.9066\,\std{0.0150} \\
PerturbEnergy-Mis & 0.9971\,\std{0.0003} & 0.9939\,\std{0.0007} & \underline{0.9112}\,\std{0.0141} & 0.4350\,\std{0.0122} & \underline{0.9116}\,\std{0.0141} \\
\bottomrule
\end{tabular}}

\vspace{6pt}

\resizebox{\textwidth}{!}{%
\begin{tabular}{lcccc}
\toprule
Method & DirAgr\,$\uparrow$ & PDS-cos\,$\downarrow$ & PDS-$\ell_1$\,$\downarrow$ & PDS-$\ell_2$\,$\downarrow$ \\
\midrule
Linear & 0.6395\,\std{0.0120} & 0.5000 & 0.5000 & 0.5000 \\
Linear (scGPT) & 0.6759\,\std{0.0075} & \textbf{0.0027} & \underline{0.0054} & \underline{0.0034} \\
\midrule
CellFlow & 0.6832\,\std{0.0072} & 0.0040 & 0.0297 & 0.0216 \\
Squidiff & \underline{0.7509}\,\std{0.0078} & 0.0061 & 0.0074 & 0.0054 \\
PerturbDiff & \textbf{0.7630}\,\std{0.0064} & 0.0202 & \textbf{0.0047} & \textbf{0.0027} \\
STATE & 0.6693\,\std{0.0090} & 0.0047 & 0.0142 & 0.0101 \\
\midrule
scGen & 0.7342\,\std{0.0061} & 0.0088 & 0.0202 & 0.0094 \\
CPA & 0.6043\,\std{0.0061} & 0.4784 & 0.4946 & 0.4926 \\
stVAE & 0.6925\,\std{0.0059} & 0.0263 & 0.0897 & 0.0830 \\
PerturbedVAE & 0.6800\,\std{0.0050} & 0.0040 & 0.0155 & 0.0115 \\
\midrule
PerturbEnergy-Lang & 0.7115\,\std{0.0061} & 0.0040 & 0.0108 & 0.0094 \\
PerturbEnergy-Mis & 0.7140\,\std{0.0059} & \underline{0.0027} & 0.0088 & 0.0081 \\
\bottomrule
\end{tabular}}
\label{app_exp_tab:hvg_norman_corr}
\end{table}

\begin{table}[h!]
\centering
\small
\caption{HVG-group DEG-related scores on Norman2019, over all 2{,}000 highly variable
  genes. Values are the mean over held-out perturbation groups $\pm$ the s.e.m.\ across
  those groups. Best in \textbf{bold}, second best \underline{underlined}.}
\label{tab:hvg-norman-deg}
\setlength{\tabcolsep}{4pt}
\resizebox{\textwidth}{!}{%
\begin{tabular}{lccccc}
\toprule
Method & DEPrec\,$\uparrow$ & DERec\,$\uparrow$ & DEOverlap\,$\uparrow$ & AUROC\,$\uparrow$ & AUPRC\,$\uparrow$ \\
\midrule
Linear & 0.3251\,\std{0.0119} & 0.3251\,\std{0.0119} & 0.1964\,\std{0.0086} & 0.9423\,\std{0.0032} & 0.3358\,\std{0.0161} \\
Linear (scGPT) & 0.6528\,\std{0.0203} & 0.6528\,\std{0.0203} & 0.4960\,\std{0.0201} & 0.9777\,\std{0.0053} & \underline{0.7284}\,\std{0.0251} \\
\midrule
CellFlow & 0.2190\,\std{0.0097} & 0.2190\,\std{0.0097} & 0.1242\,\std{0.0062} & 0.8502\,\std{0.0071} & 0.2118\,\std{0.0089} \\
Squidiff & 0.5682\,\std{0.0213} & 0.5682\,\std{0.0213} & 0.4079\,\std{0.0197} & 0.9735\,\std{0.0035} & 0.6335\,\std{0.0252} \\
PerturbDiff & 0.6467\,\std{0.0217} & 0.6467\,\std{0.0217} & 0.4909\,\std{0.0216} & \textbf{0.9809}\,\std{0.0037} & 0.7091\,\std{0.0268} \\
STATE & 0.6523\,\std{0.0210} & 0.6523\,\std{0.0210} & 0.4963\,\std{0.0209} & 0.9757\,\std{0.0064} & 0.7227\,\std{0.0260} \\
\midrule
scGen & 0.5969\,\std{0.0213} & 0.5969\,\std{0.0213} & 0.4369\,\std{0.0200} & 0.9732\,\std{0.0046} & 0.6618\,\std{0.0266} \\
CPA & 0.2759\,\std{0.0135} & 0.2759\,\std{0.0135} & 0.1628\,\std{0.0094} & 0.8958\,\std{0.0055} & 0.2699\,\std{0.0131} \\
stVAE & 0.5774\,\std{0.0228} & 0.5774\,\std{0.0228} & 0.4184\,\std{0.0207} & 0.9691\,\std{0.0051} & 0.6296\,\std{0.0277} \\
PerturbedVAE & 0.6513\,\std{0.0203} & 0.6513\,\std{0.0203} & 0.4942\,\std{0.0199} & 0.9747\,\std{0.0058} & 0.7118\,\std{0.0254} \\
\midrule
PerturbEnergy-Lang & \underline{0.6554}\,\std{0.0202} & \underline{0.6554}\,\std{0.0202} & \underline{0.4988}\,\std{0.0201} & 0.9776\,\std{0.0053} & 0.7215\,\std{0.0245} \\
PerturbEnergy-Mis & \textbf{0.6636}\,\std{0.0203} & \textbf{0.6636}\,\std{0.0203} & \textbf{0.5083}\,\std{0.0205} &\underline{0.9804}\,\std{0.0043} & \textbf{0.7309}\,\std{0.0234} \\
\bottomrule
\end{tabular}}
\label{app_exp_tab:hvg_norman_deg}
\end{table}

\begin{table}[h!]
\centering
\small
\caption{HVG-group distribution-level distances on Norman2019, over all 2{,}000 highly
  variable genes. Values are the mean over held-out perturbation groups $\pm$ the
  s.e.m.\ across those groups. Best in \textbf{bold}, second best \underline{underlined}.}
\label{tab:hvg-norman-dist}
\setlength{\tabcolsep}{4pt}
\resizebox{\textwidth}{!}{%
\begin{tabular}{lccccc}
\toprule
Method & RMSE\,$\downarrow$ & MSE\,$\downarrow$ & MAE\,$\downarrow$ & EDist\,$\downarrow$ & Sinkhorn\,$\downarrow$ \\
\midrule
Linear & 0.0762\,\std{0.0048} & 0.0067\,\std{0.0009} & 0.0203\,\std{0.0014} & 1.2427\,\std{0.1400} & 1.3307\,\std{0.0529} \\
Linear (scGPT) & \underline{0.0325}\,\std{0.0019} & \underline{0.0012}\,\std{0.0001} & 0.0132\,\std{0.0006} & 0.5972\,\std{0.0622} & 1.1605\,\std{0.0364} \\
\midrule
CellFlow & 0.0352\,\std{0.0029} & 0.0016\,\std{0.0003} & \underline{0.0113}\,\std{0.0010} & \textbf{0.2976}\,\std{0.0452} & 1.2613\,\std{0.0464} \\
Squidiff & 0.0551\,\std{0.0022} & 0.0032\,\std{0.0002} & 0.0174\,\std{0.0007} & 0.9530\,\std{0.0515} & 1.0328\,\std{0.0339} \\
PerturbDiff & \textbf{0.0318}\,\std{0.0017} & \textbf{0.0011}\,\std{0.0001} & \textbf{0.0105}\,\std{0.0006} & 2.8086\,\std{0.1059} & \textbf{0.8198}\,\std{0.0293} \\
STATE & 0.0337\,\std{0.0021} & 0.0013\,\std{0.0002} & 0.0127\,\std{0.0007} & \underline{0.3919}\,\std{0.0490} & 1.2014\,\std{0.0364} \\
\midrule
scGen & 0.0400\,\std{0.0022} & 0.0018\,\std{0.0002} & 0.0129\,\std{0.0008} & 0.9353\,\std{0.0814} & 0.9480\,\std{0.0348} \\
CPA & 0.0803\,\std{0.0051} & 0.0074\,\std{0.0010} & 0.0229\,\std{0.0015} & 5.0856\,\std{0.2118} & 0.9943\,\std{0.0549} \\
stVAE & 0.0434\,\std{0.0026} & 0.0021\,\std{0.0003} & 0.0142\,\std{0.0008} & 2.8448\,\std{0.1561} & \underline{0.8282}\,\std{0.0415} \\
PerturbedVAE & 0.0356\,\std{0.0025} & 0.0015\,\std{0.0003} & 0.0148\,\std{0.0009} & 3.5358\,\std{0.1081} & 0.8481\,\std{0.0394} \\
\midrule
PerturbEnergy-Lang & 0.0339\,\std{0.0019} & 0.0013\,\std{0.0002} & 0.0126\,\std{0.0006} & 3.6007\,\std{0.1238} & 0.8649\,\std{0.0376} \\
PerturbEnergy-Mis & 0.0330\,\std{0.0019} & 0.0012\,\std{0.0001} & 0.0123\,\std{0.0006} & 4.1930\,\std{0.1199} & 0.8724\,\std{0.0384} \\
\bottomrule
\end{tabular}}
\label{app_exp_tab:hvg_norman_dis}
\end{table}

\subsubsubsection{Results on Tahoe100M Dataset}

The HVG group results on Tahoe100M is shown in Table~\ref{app_exp_tab:hvg_tahoe_corr}, Table~\ref{app_exp_tab:hvg_tahoe_deg} and Table~\ref{app_exp_tab:hvg_tahoe_dis}. The results show general the same pattern as in Replogle and PBMC.

\begin{table}[h!]
\centering
\small
\caption{HVG-group correlation-based measures on Tahoe100M, over all 2{,}000 highly
  variable genes. Values are the mean over held-out perturbation groups $\pm$ the
  s.e.m.\ across those groups; the perturbation-discrimination scores are computed once
  over the whole test set and carry no such spread. Best in \textbf{bold}, second best
  \underline{underlined}.}
\label{tab:hvg-tahoe-corr}
\setlength{\tabcolsep}{4pt}
\resizebox{\textwidth}{!}{%
\begin{tabular}{lccccc}
\toprule
Method & PCC\,$\uparrow$ & $R^2$\,$\uparrow$ & PDCorr\,$\uparrow$ & LFCSpear\,$\uparrow$ & LogFC-cos\,$\uparrow$ \\
\midrule
Linear & 0.9761\,\std{0.0006} & 0.9451\,\std{0.0018} & 0.3223\,\std{0.0057} & 0.1999\,\std{0.0040} & 0.3344\,\std{0.0065} \\
Linear (scGPT) & 0.9783\,\std{0.0005} & 0.9532\,\std{0.0012} & 0.4708\,\std{0.0076} & 0.2459\,\std{0.0038} & 0.4803\,\std{0.0076} \\
\midrule
CellFlow & 0.9804\,\std{0.0005} & 0.9583\,\std{0.0012} & 0.5560\,\std{0.0055} & 0.2803\,\std{0.0033} & 0.5630\,\std{0.0056} \\
Squidiff & 0.9343\,\std{0.0009} & 0.8561\,\std{0.0023} & 0.3047\,\std{0.0050} & 0.1878\,\std{0.0027} & 0.3067\,\std{0.0049} \\
PerturbDiff & 0.9798\,\std{0.0007} & 0.9529\,\std{0.0013} & 0.5307\,\std{0.0067} & 0.3664\,\std{0.0033} & 0.5192\,\std{0.0074} \\
STATE & 0.9815\,\std{0.0004} & \textbf{0.9613}\,\std{0.0008} & 0.5694\,\std{0.0063} & 0.2884\,\std{0.0037} & 0.5755\,\std{0.0065} \\
\midrule
scGen & 0.9726\,\std{0.0007} & 0.9380\,\std{0.0021} & 0.2986\,\std{0.0042} & 0.2375\,\std{0.0030} & 0.3022\,\std{0.0043} \\
CPA & 0.8190\,\std{0.0017} & 0.6587\,\std{0.0028} & 0.2654\,\std{0.0096} & 0.2175\,\std{0.0042} & 0.2588\,\std{0.0103} \\
stVAE & 0.9794\,\std{0.0005} & 0.9564\,\std{0.0010} & 0.5456\,\std{0.0056} & 0.3420\,\std{0.0028} & 0.5506\,\std{0.0059} \\
PerturbedVAE & 0.9707\,\std{0.0006} & 0.9341\,\std{0.0017} & 0.3686\,\std{0.0042} & 0.2777\,\std{0.0023} & 0.3720\,\std{0.0044} \\
\midrule
PerturbEnergy-Lang & \underline{0.9818}\,\std{0.0004} & 0.9605\,\std{0.0010} & \underline{0.5944}\,\std{0.0056} & \underline{0.3680}\,\std{0.0030} & \underline{0.6003}\,\std{0.0058} \\
PerturbEnergy-Mis & \textbf{0.9820}\,\std{0.0004} & \underline{0.9610}\,\std{0.0010} & \textbf{0.5974}\,\std{0.0055} & \textbf{0.3774}\,\std{0.0029} & \textbf{0.6031}\,\std{0.0057} \\
\bottomrule
\end{tabular}}

\vspace{6pt}

\resizebox{\textwidth}{!}{%
\begin{tabular}{lcccc}
\toprule
Method & DirAgr\,$\uparrow$ & PDS-cos\,$\downarrow$ & PDS-$\ell_1$\,$\downarrow$ & PDS-$\ell_2$\,$\downarrow$ \\
\midrule
Linear & 0.5783\,\std{0.0020} & 0.0730 & 0.2357 & 0.2619 \\
Linear (scGPT) & 0.5879\,\std{0.0017} & 0.0851 & 0.1053 & 0.1340 \\
\midrule
CellFlow & 0.5952\,\std{0.0015} & \underline{0.0404} & 0.1137 & 0.1306 \\
Squidiff & 0.5719\,\std{0.0014} & 0.1401 & 0.1434 & 0.1548 \\
PerturbDiff & \textbf{0.6568}\,\std{0.0015} & 0.1439 & 0.1699 & 0.1771 \\
STATE & 0.6036\,\std{0.0016} & \textbf{0.0380} & \textbf{0.0378} & \textbf{0.0389} \\
\midrule
scGen & 0.6021\,\std{0.0013} & 0.1002 & 0.2124 & 0.2616 \\
CPA & 0.6106\,\std{0.0016} & 0.2641 & 0.2268 & 0.2642 \\
stVAE & 0.6354\,\std{0.0011} & 0.0591 & 0.0831 & 0.0816 \\
PerturbedVAE & 0.6141\,\std{0.0010} & 0.0887 & 0.1507 & 0.1798 \\
\midrule
PerturbEnergy-Lang & 0.6457\,\std{0.0013} & 0.0454 & \underline{0.0536} & \underline{0.0587} \\
PerturbEnergy-Mis & \underline{0.6485}\,\std{0.0013} & 0.0469 & 0.0540 & 0.0608 \\
\bottomrule
\end{tabular}}
\label{app_exp_tab:hvg_tahoe_corr}
\end{table}

\begin{table}[h!]
\centering
\small
\caption{HVG-group DEG-related scores on Tahoe100M, over all 2{,}000 highly variable
  genes. Values are the mean over held-out perturbation groups $\pm$ the s.e.m.\ across
  those groups. Best in \textbf{bold}, second best \underline{underlined}.}
\label{tab:hvg-tahoe-deg}
\setlength{\tabcolsep}{4pt}
\resizebox{\textwidth}{!}{%
\begin{tabular}{lccccc}
\toprule
Method & DEPrec\,$\uparrow$ & DERec\,$\uparrow$ & DEOverlap\,$\uparrow$ & AUROC\,$\uparrow$ & AUPRC\,$\uparrow$ \\
\midrule
Linear & 0.1318\,\std{0.0038} & 0.1318\,\std{0.0038} & 0.0720\,\std{0.0023} & 0.7426\,\std{0.0034} & 0.1015\,\std{0.0030} \\
Linear (scGPT) & 0.2070\,\std{0.0059} & 0.2070\,\std{0.0059} & 0.1192\,\std{0.0039} & 0.7713\,\std{0.0040} & 0.1730\,\std{0.0057} \\
\midrule
CellFlow & 0.2339\,\std{0.0056} & 0.2339\,\std{0.0056} & 0.1361\,\std{0.0038} & 0.7950\,\std{0.0036} & 0.1972\,\std{0.0059} \\
Squidiff & 0.1341\,\std{0.0035} & 0.1341\,\std{0.0035} & 0.0731\,\std{0.0021} & 0.7502\,\std{0.0029} & 0.0954\,\std{0.0026} \\
PerturbDiff & 0.2054\,\std{0.0054} & 0.2054\,\std{0.0054} & 0.1176\,\std{0.0035} & 0.7751\,\std{0.0039} & 0.1659\,\std{0.0051} \\
STATE & 0.2407\,\std{0.0061} & 0.2407\,\std{0.0061} & \underline{0.1411}\,\std{0.0042} & 0.7951\,\std{0.0039} & 0.2042\,\std{0.0065} \\
\midrule
scGen & 0.1311\,\std{0.0028} & 0.1311\,\std{0.0028} & 0.0709\,\std{0.0016} & 0.7374\,\std{0.0032} & 0.0922\,\std{0.0019} \\
CPA & 0.1353\,\std{0.0044} & 0.1353\,\std{0.0044} & 0.0744\,\std{0.0026} & 0.7500\,\std{0.0039} & 0.1094\,\std{0.0030} \\
stVAE & 0.2209\,\std{0.0053} & 0.2209\,\std{0.0053} & 0.1274\,\std{0.0036} & 0.7790\,\std{0.0040} & 0.1786\,\std{0.0057} \\
PerturbedVAE & 0.1513\,\std{0.0039} & 0.1513\,\std{0.0039} & 0.0833\,\std{0.0024} & 0.7571\,\std{0.0036} & 0.1087\,\std{0.0029} \\
\midrule
PerturbEnergy-Lang & \underline{0.2408}\,\std{0.0057} & \underline{0.2408}\,\std{0.0057} & 0.1406\,\std{0.0039} & \underline{0.7952}\,\std{0.0040} & \underline{0.2050}\,\std{0.0060} \\
PerturbEnergy-Mis & \textbf{0.2452}\,\std{0.0058} & \textbf{0.2452}\,\std{0.0058} & \textbf{0.1437}\,\std{0.0040} & \textbf{0.8004}\,\std{0.0039} & \textbf{0.2072}\,\std{0.0061} \\
\bottomrule
\end{tabular}}
\label{app_exp_tab:hvg_tahoe_deg}
\end{table}

\begin{table}[h!]
\centering
\small
\caption{HVG-group distribution-level distances on Tahoe100M, over all 2{,}000 highly
  variable genes. Values are the mean over held-out perturbation groups $\pm$ the
  s.e.m.\ across those groups. Best in \textbf{bold}, second best \underline{underlined}.}
\label{tab:hvg-tahoe-dist}
\setlength{\tabcolsep}{4pt}
\resizebox{\textwidth}{!}{%
\begin{tabular}{lccccc}
\toprule
Method & RMSE\,$\downarrow$ & MSE\,$\downarrow$ & MAE\,$\downarrow$ & EDist\,$\downarrow$ & Sinkhorn\,$\downarrow$ \\
\midrule
Linear & 0.1243\,\std{0.0012} & 0.0159\,\std{0.0004} & 0.0632\,\std{0.0005} & 0.9646\,\std{0.0147} & 0.7265\,\std{0.0008} \\
Linear (scGPT) & 0.1166\,\std{0.0010} & 0.0139\,\std{0.0003} & 0.0629\,\std{0.0004} & 0.9021\,\std{0.0115} & 0.7206\,\std{0.0008} \\
\midrule
CellFlow & 0.1093\,\std{0.0010} & 0.0122\,\std{0.0003} & \textbf{0.0560}\,\std{0.0004} & \underline{0.8195}\,\std{0.0103} & 0.7319\,\std{0.0013} \\
Squidiff & 0.2058\,\std{0.0010} & 0.0427\,\std{0.0004} & 0.1101\,\std{0.0005} & 6.7723\,\std{0.0579} & 0.4257\,\std{0.0011} \\
PerturbDiff & 0.1166\,\std{0.0012} & 0.0140\,\std{0.0003} & 0.0581\,\std{0.0005} & 21.6085\,\std{0.0591} & 0.3951\,\std{0.0008} \\
STATE & \underline{0.1062}\,\std{0.0007} & \textbf{0.0114}\,\std{0.0002} & \underline{0.0560}\,\std{0.0004} & \textbf{0.7920}\,\std{0.0073} & 0.7239\,\std{0.0008} \\
\midrule
scGen & 0.1324\,\std{0.0013} & 0.0181\,\std{0.0004} & 0.0677\,\std{0.0005} & 3.7313\,\std{0.0280} & 0.4343\,\std{0.0010} \\
CPA & 0.3211\,\std{0.0021} & 0.1044\,\std{0.0013} & 0.1305\,\std{0.0007} & 20.7955\,\std{0.0703} & 0.4451\,\std{0.0010} \\
stVAE & 0.1126\,\std{0.0009} & 0.0129\,\std{0.0002} & 0.0613\,\std{0.0004} & 14.2524\,\std{0.0406} & \underline{0.3899}\,\std{0.0007} \\
PerturbedVAE & 0.1372\,\std{0.0011} & 0.0192\,\std{0.0003} & 0.0704\,\std{0.0004} & 7.2237\,\std{0.0251} & \textbf{0.3887}\,\std{0.0007} \\
\midrule
PerturbEnergy-Lang & 0.1067\,\std{0.0009} & 0.0116\,\std{0.0002} & 0.0587\,\std{0.0004} & 21.1279\,\std{0.0236} & 0.3943\,\std{0.0007} \\
PerturbEnergy-Mis & \textbf{0.1060}\,\std{0.0009} & \underline{0.0115}\,\std{0.0002} & 0.0579\,\std{0.0004} & 21.2050\,\std{0.0241} & 0.3942\,\std{0.0007} \\
\bottomrule
\end{tabular}}
\label{app_exp_tab:hvg_tahoe_dis}
\end{table}

\end{document}